\documentclass[10pt,twoside]{article}
\usepackage{amsmath}
\usepackage[utf8]{inputenc}
\usepackage{url}
\usepackage{dsfont}
\usepackage{graphicx}
\usepackage{yhmath}
\usepackage[table]{xcolor}
\usepackage{mathrsfs} 
\usepackage{amssymb}
\usepackage{url}
\usepackage{makecell}
\usepackage{arydshln}
\usepackage{multirow}
\usepackage{arydshln}
\usepackage{mathtools}
\usepackage{stmaryrd}  
 \usepackage{amsthm,amssymb}
 \usepackage{subcaption}
 \usepackage{lipsum}
 \usepackage{nccmath}

\input epsf
\def\limiten{\renewcommand{\arraystretch}{0.5}
\begin{array}[t]{c}\stackrel{}{\longrightarrow} \\
{\scriptstyle n\rightarrow
\infty}\end{array}\renewcommand{\arraystretch}{1}}

\numberwithin{equation}{section}

\newtheorem{thm}{Theorem}[section]

\newtheorem{Corol}[thm]{Corollary}
\newtheorem{Def}[thm]{Definition}

\newtheorem{lem}[thm]{Lemma}

\newtheorem{prop}[thm]{Proposition}

\newcommand{\E}{\ensuremath{\mathbb{E}}}
\newcommand{\R}{\ensuremath{\mathbb{R}}}
\newcommand{\Z}{\ensuremath{\mathbb{Z}}}

\newcommand{\N}{\ensuremath{\mathbb{N}}}

\newcommand{\var}{\ensuremath{\mathrm{Var}}}

\definecolor{grisclair}{gray}{0.9}
\font\dsrom=dsrom10 scaled 1200
\def \ind{\textrm{\dsrom{1}}}
\DeclareMathOperator*{\argmin}{argmin}

\newcommand{\mk}{ { \mathcal{K}} }
\newcommand{\mx}{ { \mathcal{X}} }
\newcommand{\my}{ { \mathcal{Y}} }

\renewcommand{\arraystretch}{.8}

\renewcommand{\quote}[1]{``#1''}

\begin{document}

\title{\bf Deep regression learning from dependent observations with minimum error entropy principle}
 \maketitle \vspace{-1.0cm}

\begin{center}
      William Kengne
   and 
     Modou Wade
 \end{center}

  \begin{center}
  { \it 
  Institut Camille Jordan, Université Jean Monnet, 23 Rue Dr Paul Michelon 42023 Saint-Etienne Cedex 2, France\\  
 THEMA, CY Cergy Paris Université, 33 Boulevard du Port, 95011 Cergy-Pontoise Cedex, France\\
  E-mail:   william.kengne@univ-st-etienne.fr  ; modou.wade@cyu.fr\\
  }
\end{center}
 \pagestyle{myheadings}

\markboth{Deep regression learning with minimum error entropy}{Kengne and Wade}

\medskip

\textbf{Abstract}:
This paper considers nonparametric regression from strongly mixing observations.
The proposed approach is based on deep neural networks with minimum error entropy (MEE) principle. 
We study two estimators: the non-penalized deep neural network (NPDNN) and the sparse-penalized deep neural network (SPDNN) predictors.
Upper bounds of the expected excess risk are established for both estimators over the classes of H\"older and composition H\"older functions.
For the models with Gaussian error, the rates of the upper bound obtained match (up to a logarithmic factor) with the lower bounds established in \cite{schmidt2020nonparametric}, showing that both the MEE-based NPDNN and SPDNN estimators from strongly mixing data can achieve the minimax optimal convergence rate. 

\medskip
 
{\em Keywords:} Deep neural networks, nonparametric regression, minimum error entropy, strong mixing, minimax optimality.

\medskip

\section{Introduction}
Deep learning has achieved impressive results in a wide range of learning tasks, such as image processing \cite{krizhevsky2017imagenet}, speech recognition \cite{hinton2012deep}, and, more generally, in various fields of artificial intelligence (AI).
However, theoretical support and guarantees of deep neural networks (DNN) algorithms in many frameworks are still a challenge nowadays.
In recent years, numerous studies have contributed to understand the theoretical properties of DNN estimators.
See, for instance, \cite{schmidt2020nonparametric}, \cite{imaizumi2022advantage} \cite{jiao2023deep}, \cite{fan2024noise}, \cite{zhang2024classification}, \cite{shen2024nonparametric},  (and the references therein), for some results from independent and identical distributed (i.i.d) observations.
For some recent results from dependent data, see, among other works, \cite{ma2022theoretical},\cite{kohler2023rate}, \cite{kengne2025robust}, \cite{kurisu2025adaptive}, \cite{kengne2025deep}, \cite{alquier2025minimax}.
For nonparametric regression, most of the results established on the properties of DNN estimators are based on the $L_2$ (least squares) loss, see for example, \cite{schmidt2020nonparametric}, \cite{kohler2021rate}, \cite{ma2022theoretical}, \cite{imaizumi2022advantage}, \cite{jiao2023deep}, \cite{kohler2023rate}, \cite{kohler2023rate}, \cite{kurisu2025adaptive}.  
Such estimator, which is perfect for problems
involving Gaussian noise is sensitive to non-Gaussian and heavy tailed errors, see, for instance \cite{erdogmus2000comparison}, \cite{hu2013learning}, \cite{chen2018kernel}. 
There are several works that consider an entropy criteria for nonparametric regression, see for example, \cite{hu2015regularization}, \cite{hu2013learning}, \cite{fan2016consistency}, \cite{guo2020distributed}, \cite{hu2021kernel}.  
But, theoretical properties of entropy-based DNN estimators for nonparametric regression have not yet been much studied in the literature.
We refer to \cite{wang2025deep} and \cite{chen2026maximum} for some recent works on this direction.

\medskip

We consider a stationary and ergodic process $\{Z_t =(X_t, Y_t), t \in \Z \} $, from the nonparametric regression model:
\begin{equation}\label{equa_model_reg}
Y_t = h_0(X_t) + \xi_t, ~ X_t \sim P_{X_t}
\end{equation}
where $h_0 : \R ^d \to \R$ is the unknown regression function, $(\xi_t)_{t \in \Z}$ is a centered i.i.d. error process and $\xi_t$ is independent of the input variable $X_t$.
The input and the output spaces are $\mathcal{X} \subset \R^d $ (with $d \in \N$) and $\mathcal{Y}  \subset \R$ respectively.
It is assumed that $\xi_0$ admits a density $f$ with respect to the Lebesgue measure.
Let $ D_n \coloneqq  \{ (X_1, Y_1), \dots, (X_n, Y_n) \}$ be a training sample generated from the process $\{Z_t =(X_t, Y_t), t \in \Z \} $.  
We deal with a class of DNN estimators $\mathcal{H}_{\sigma}(L_n, N_n, B_n, F_n, S_n)$ (defined in Section \ref{Def_DNNs}, see (\ref{DNNs_Constraint})).
Also, let $\mathcal{F}$ denote the class of all measurable functions from $\mathcal{X}$ to $\mathcal{Y}$. 
The minimum error entropy (MEE) principle aims to choose the predictor $h \in \mathcal{H}_{\sigma}(L_n, N_n, B_n, F_n, S_n)$ that minimizes an entropy of the error $Y_0 - h(X_0)$.
In this work, we deal with Shannon's entropy, and consider the risk for all $h \in \mathcal{F}$,
\begin{equation}\label{equa_def_risk}
 R(h) = \E_{Z_0} [-\log f(Y_0 - h(X_0))], ~ \text{with}~ Z_0 = (X_0, Y_0).  
\end{equation}
So, the corresponding loss function is,
\begin{equation}\label{loss_entropy}
\ell(h(X_0), Y_0) = -\log f(Y_0 - h(X_0)).
\end{equation}
The target predictor (assumed to exist) $h^{*} \in \mathcal{F} $ satisfies:
\begin{equation}\label{best_pred_F}
R(h^{*}) =  \underset{h \in \mathcal{F}}{\inf}R(h).
\end{equation}
For any $h \in \mathcal{F}$, the excess risk is given by:
\begin{equation}\label{equa_excess_risk}
\mathcal{E}_{Z_0}(h) = R(h) - R(h^{*}), ~ \text{with} ~ Z_0 = (X_0, Y_0).
\end{equation}

\medskip

The density $f$ of the error is assumed to be known (see discussion below, Section \ref{discussion}).
We aim to build from $D_n$, a DNN predictor $\widehat{h}_n \in \mathcal{H}_{\sigma}(L_n, N_n, B_n, F_n, S_n)$ with suitably chosen network architecture $(L_n, N_n, B_n, F_n, S_n)$,  that minimizes the empirical version of the entropy (risk). 
Define the MEE-based non-penalized DNN (NPDNN) estimator by:
\begin{equation}\label{Non_Pen_DNNs_Estimators}
\widehat{h}_{n, NP} = \underset{h \in  \mathcal{H}_{\sigma}(L_n, N_n, B_n, F_n, S_n)}{\argmin} \Big[ -\dfrac{1}{n} \sum_{i=1}^{n} \Big(\log f(Y_i - h(X_i)) \Big) \Big].
\end{equation}
For this estimator $\widehat{h}_{n, NP}$, the regularization is performed by the sparsity parameter $S_n$. 
In the following, we also consider the sparse-penalized regularization with the DNN class $\mathcal{H}_{\sigma}(L_n, N_n, B_n, F_n)$ (defined in Section \ref{Def_DNNs}, see (\ref{DNNs_no_Constraint})).
So, the MEE-based sparse-penalized DNN (SPDNN) estimator $\widehat{h}_{n,SP}$ is given by: 
\begin{equation}\label{Pen_DNNs_Estimators_v1}
\widehat{h}_{n,SP} = \underset{h \in  \mathcal{H}_{\sigma}(L_n, N_n, B_n, F)}{\argmin} \Big[ - \dfrac{1}{n} \sum_{i=1}^{n} \Big(\log f(Y_i - h(X_i)) \Big) + J_n(h) \Big],
\end{equation}
where $J_{n}(h) $ denotes the sparse penalty term, defined by: 
\begin{equation}\label{equa_penalty_term_v1}
J_n(h) = \sum_{j=1}^{p} \pi_{\lambda_n, \tau_n} \big( |\theta_j(h)| \big),
\end{equation}
where $\pi_{\lambda_n, \tau_n} : [0, \infty) \to [0, \infty)$ is a function with two tuning parameters $\lambda_n, \tau_n > 0$, and $\theta_j(h)$ is the $j$-th component of $\theta(h) = \big(\theta_1(h), \cdots, \theta_p(h) \big)^T$, which is the parameter of the network estimator $h \in \mathcal{H}_{\sigma}(L_n, N_n, B_n, F)$, and $^T$ denotes the transpose.
 We assume that $\pi_{\lambda_n, \tau_n}$ satisfies the two following conditions (see also \cite{kurisu2025adaptive}):
\begin{itemize}
\item [(i)] $\pi_{\lambda_n, \tau_n}(0) = 0$ and $\pi_{\lambda_n, \tau_n}(\cdot)$ is non-decreasing.
\item[(ii)] $\pi_{\lambda_n, \tau_n}(x) = \lambda_n$ if $x > \tau_n$.   
\end{itemize}
Examples of such penalty $J_n(h)$ are the clipped $L_1$ penalty considered in \cite{zhang2010analysis}, \cite{ohn2022nonconvex}, defined by for all $x \ge 0$ as:
\begin{equation}\label{equa_clipped_L1_penalty}
\pi_{\lambda_n, \tau_n}(x) = \lambda_n \Big(\dfrac{x}{\tau_n} \land 1 \Big), 
\end{equation}
the SCAD penalty considered by \cite{fan2001variable}, the minimax concave penalty \cite{zhang2010nearly} or the seamless $L_0$ penalty in \cite{dicker2013variable}, see \cite{kurisu2025adaptive}.
We will be interested in establishing a bound of the excess risk $\mathcal{E}_{Z_0}(\widehat{h}_n)$, and studying how fast it tends to zero. 

\medskip

As pointed out above, most of the results on the theoretical properties of the DNN predictor established in the literature are based on the $L_2$ loss.
Such method minimizes the variance of the error variable $Y_0-h(X_0)$ (for a predictor $h$), so it takes into consideration only the first two moments of the error variable.
Such estimators cannot work very well for problems involving heavy tailed or non-Gaussian error, and are not robust to outliers, see also \cite{hu2013learning}. 
To solve the robustness question in nonparametric regression from DNN, several authors have considered the Huber loss, for example \cite{fan2024noise}, \cite{kengne2025robust}, \cite{kengne2025deep}. Except for the tuning parameter which needs to be calibrated from data, the Huber loss combines the advantages of the square and the absolute losses, and enjoys the minimax optimal property.

\medskip

 In this contribution, we deal with the Shannon entropy for nonparametric regression from strongly mixing observations.
 Two estimators, the MEE-based NPDNN and the SPDNN predictors are proposed. For each of these estimators, the upper bounds of the expected excess risk are established over the classes of H\"older and composition H\"older functions. 
When the error is Gaussian, these estimators are optimal (up to a logarithmic factor) in the minimax sense. 
 
\medskip 

 Unlike the least square method which takes into consideration only the first two moments of the error, the MEE criteria considered take into account the moments of all orders of the error variable in the entropy.
 Thus, MEE-based estimators proposed have the robustness to deal with non-Gaussian models or models with heavy-tailed noise. We also refer to \cite{erdogmus2003convergence} for some robustness properties of the MEE type estimators.
In addition, let us emphasize that the corresponding loss function from Shannon's entropy (see (\ref{loss_entropy})) is not Lipschitz continuous for example if the error is Gaussian. 
Therefore, some recent developments on DNN theory based on Lipschitz continuous loss (see for instance \cite{kengne2025general}), do not apply to such entropy criteria.

\medskip

In this work, we assume that the density of the noise is known. This could be a limitation for practical applications.
An interesting extension of this work is to assume that the density of the error is unknown, and estimate it, for example, from the kernel method, before using it in the calculation of the empirical version of entropy.
But for the Shannon's entropy considered, this issue is complex and remains a challenge, see the discussion in Section \ref{discussion}.

\medskip
The rest of the paper is organized as follows.
In Section \ref{asump}, we set some notations, assumptions and present the DNN class considered. 
 We derived an excess risk bound of the MEE-based NPDNN estimator in Section \ref{excess_NPDNN}. 
 For the MEE-based SPDNN predictor, an oracle inequality and an  excess risk bound are established in Section \ref{excess_risk_SPDNN}.
Section \ref{example} considers the example of the class of Subbotin distributions whereas Section \ref{discussion} provides some perspectives and discusses possible extensions of this work.
Section \ref{prove} is devoted to the proofs of the main results.

\section{Notations, assumptions and deep neural networks}\label{asump}
\subsection{Notations and assumptions}
Let $d \in \N$, $E_1$ and $E_2$ be subsets of separable Banach spaces equipped with norms $\| \cdot\|_{E_1}$ and $\| \cdot\|_{E_2}$ respectively.
Let us set some notations. 
\begin{itemize}
%
\item  For any $h: E_1 \rightarrow E_2$ and $\epsilon >0$, $B(h, \epsilon) $ is the ball of radius $\epsilon$ centered at $h$, that is,
\[ B (h, \epsilon) = \big\{ f: E_1 \rightarrow E_2, ~ \| f - h\|_\infty \leq \epsilon \big\},  \]
where $\| \cdot \|_\infty$ denotes the sup-norm defined in (\ref{def_norm_inf}).
\item Let $\mathcal{H}$ be a set of functions from $E_1$ to $E_2$. For any $\epsilon > 0$, the $\epsilon$-covering number $\mathcal{N}(\mathcal{H}, \epsilon)$  of $\mathcal{H} $, represents the minimal 
number of balls of radius $\epsilon$ needed to cover $\mathcal{H}$; that is,
\begin{equation}\label{epsi_covering_number}
 \mathcal{N}(\mathcal{H}, \epsilon) = \inf\Big\{ m \geq 1 ~: \exists h_1, \cdots, h_m \in \mathcal{H} ~ \text{such that} ~ \mathcal{H} \subset \bigcup_{i=1}^m B(h_i, \epsilon) \Big\}.
\end{equation}
\item For any $h: E_1 \rightarrow E_2$ and $U \subseteq E_1$, set
\begin{equation}\label{def_norm_inf}
\| h\|_{\infty} = \sup_{x \in E_1} \| h(x) \|_{E_2}, ~ \| h\|_{\infty,U} = \sup_{x \in U} \| h(x) \|_{E_2}.
\end{equation}  
\item  For two sequences of real numbers $(a_n)$ and $(b_n)$, we write $ a_n \lesssim b_n$ or $ b_n \gtrsim a_n$ if there exists a constant $C > 0$ such that $a_n \leq  C b_n$ for all $n \in \N$; $a_n \asymp b_n$ if $a_n \lesssim b_n$ and $a_n \gtrsim b_n$. 
\item For any $x \in \R^d$, $x^T$ denotes the transpose of $x$.
\item For any $x \in \R$, $\lfloor x \rfloor$ denotes the greatest integer less than or equal to $x$, and $\lceil x \rceil$ the least integer greater than or equal to $x$.  
\end{itemize}
In the sequel, we will deal with strongly mixing processes, see \cite{rosenblatt1956central}, \cite{vidyasagar2013learning}.

\begin{Def}
Let $\mathcal{Z} = \{Z_i\}_{i\in \Z}$ be a stationary process on a probability space $(\Omega, \mathcal{B}, P)$. For $-\infty \leq i \leq \infty$, let $\sigma_{i}^{\infty}$ and $\sigma_{-\infty}^i$ be the sigma-algebras generated respectively by the random variables $Z_j, j \ge i$ and $Z_j, j\leq i$. The process $\mathcal{Z}$ is said to be $\alpha$-mixing, or strongly mixing, if
\begin{equation*}
\underset{A \in \sigma_{-\infty}^0, B \in\sigma_k^{\infty}}{sup} \{ |P(A \cap B) - P(A)P(B) |\} = \alpha(k) \to 0 \text{ when } k \to \infty. 
\end{equation*}
$\alpha(k)$ is then called the strong mixing coefficient.

\end{Def}
Let us recall the definition of piecewise linear and locally quadratic functions, see also  \cite{ohn2019smooth}, \cite{ohn2022nonconvex}.
\begin{Def}\label{def_pwl_quad}
Let a function $g: \R \rightarrow \R$.
\begin{enumerate}
\item $g$ is continuous piecewise linear (or \quote{piecewise linear} for notational simplicity) if it is continuous and there exists $K$ ($K\in \N$) break points $a_1,\cdots, a_K \in \R$ with $a_1 \leq a_2\leq\cdots \leq a_K $ such that, for any $k=1,\cdots,K$, $g'(a_k-) \neq g'(a_k+)$ and $g$ is linear on $(-\infty,a_1], [a_1,a_2],\cdots [a_K,\infty)$.
\item $g$ is locally quadratic if there exits an interval $(a,b)$ on which $g$ is three times continuously differentiable with bounded derivatives and there exists $t \in (a,b)$ such that $g'(t) \neq 0$ and $g''(t) \neq 0$.
\end{enumerate} 
\end{Def}

\medskip

Consider the process $\{ Z_t=(X_t, Y_t), t \in \Z \}$ which take values in $ \mathcal{Z}=   \mathcal{X} \times \mathcal{Y} \subset \R^d \times \R$ and generated from the model (\ref{equa_model_reg}), the loss function 
$\ell : \R \times \mathcal{Y} \to [0, \infty)$ and an activation function $\sigma: \R \rightarrow \R $, and set the following assumptions. 
\begin{itemize}
\item[\textbf{(A0)}:] $\mathcal{X}\subset R^d$ is a compact set.
\item[\textbf{(A1)}:] There exists a constant $C_{\sigma}>0$ such that the activation function $\sigma$ is $C_{\sigma}$-Lipschitz, that is, there exists $C_{\sigma} > 0$ such that $|\sigma(x_1) - \sigma(x_2)| \leq C_{\sigma} |x_1 - x_2|$ for any $x_1, x_2 \in \R$.
Moreover, $\sigma$ is  either piecewise linear or locally quadratic and fixes a non empty interior segment $I \subseteq [0,1]$ (i.e. $\sigma(z) = z$ for all $z \in I$).
\item[\textbf{(A2)}:] The process $\{Z_t = (X_t, Y_t ),  t \in \Z \}$ is stationary and ergodic, and strong mixing  with the mixing coefficients defined by
\begin{equation}\label{coef_alpha_mixing}
\alpha(j) \leq \overline{\alpha} \exp(-c j), ~ j \ge 1, \overline{\alpha} > 0, c > 0.
\end{equation}
\item[\textbf{(A3)}:] 
Local structure of the excess risk: There exist three constants $\mk_0:= \mk_0(Z_0, \ell, h^*) , \epsilon_0:= \epsilon_0(Z_0, \ell, h^*) >0$ and $\kappa:=\kappa(Z_0, \ell, h^*) \geq 1$ such that,  
\begin{equation}\label{assump_local_quadr}
 R(h) - R(h^*) \leq \mk_0 \| h - h^*\|^\kappa_{\kappa, P_{X_0}}, 
\end{equation}
for any measurable function $h: \R^d \rightarrow \R$ satisfying $\|h - h^*\|_{\infty, \mx} \leq \epsilon_0$; where $P_{X_0}$ denotes the distribution of $X_0$ and 
\[ \| h - h^{*}\|_{r, P_{X_0}}^r := \displaystyle \int | h (\text{x})- h^{*} (\text{x}) |^r  d P_{X_0} ( \text{x}),  
\]
for all $r \geq 1$.

\item[\textbf{(A4)}:] There exists $\mathcal{K}_f>0$ such that, the density function $f$ is $\mathcal{K}_f$-Lipschitz and has almost everywhere a continuous first-order derivative $f^{(1)}$.
Moreover, for any constant $C>0$, there exists $\mathcal{K} = \mathcal{K}(C) >0$ such that, for any real-valued random variable $U$ we have:
\[ \E|U| \leq C  \Rightarrow  \E\big[ |f^{(1)}(U)/f(U)| \big] \leq \mathcal{K} .\]
Recall that, $f$ is the density of the error variable $\xi_0$ in the model (\ref{equa_model_reg}).
\item[\textbf{(A5)}:] For any $C > 0$, there exists $p = p(C) \ge 1$ such that, for any real-valued random variable $U$ we have: 
\[ |U| \leq C ~a.s.  \Rightarrow \E\Big[\Big|\log f(\epsilon_0 + U) \big) \ind_{\{f(\epsilon_0 + U) \leq \beta_n \}} \Big| \Big] \lesssim \dfrac{\log n}{n},
\]
where $\beta_n = n^{-p}$ for $n$ sufficiently large.
\end{itemize}

\noindent
Several classical activation functions, including the  ReLU (rectified linear unit) given by $\sigma(x) = \max(x,0)$, satisfy the assumption \textbf{(A1)}, see also \cite{kengne2025excess}.
The strongly mixing condition \textbf{(A2)} is satisfied by many commonly used autoregressive processes, see for instance \cite{chen2000geometric}, \cite{doukhan1994mixing}.
\textbf{(A3)} is a local structure condition on the excess risk. This condition holds for example, when the error $\xi_0$ belongs to the class of Subbotin distribution, see Proposition \ref{equa_bound_assumpA5} below.
Moreover, one can easily see that \textbf{(A4)} is satisfied for this class of distribution.  
See Proposition \ref{equa_bound_assumpA5} which shows that \textbf{(A5)} is also satisfied.

\medskip

A real-valued random variable $U$ follows a Subbotin distribution with parameter $r >0$ if its density (with respect to the Lebesgue measure) is given by:
\begin{equation}\label{def_Subbotin}
\text{for any } u\in \R, f_U(u) = C_r \exp \big(-|u|^r/r \big), \text{ for some constant positive } C_r > 0.
\end{equation}
This class of distribution was introduced by \cite{subbotin1923law} and includes the Laplace distribution with $r= 1$ and the Gaussian (standard normal) density with $r= 2$.
The following Proposition \ref{equa_bound_assumpA5} shows that if the density $f(\cdot)$ of the error term $\xi_0$ in the model (\ref{equa_model_reg}) is Subbotin, then \textbf{(A3)} and \textbf{(A5)} are satisfied. 
\begin{prop}\label{equa_bound_assumpA5}
    Assume that the density $f$ of $\xi_0$ is Subbotin with parameter $r >0$. Then:
  \begin{enumerate}
  \item If $r \in (0,2]$, then \textbf{(A3)} holds with $\kappa = r$.
  \item \textbf{(A5)} is satisfied for any $p > 2^r$.
  \end{enumerate}
\end{prop}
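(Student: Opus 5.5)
The plan is to compute everything explicitly from the Subbotin form of $f$. Since $-\log f(u) = |u|^r/r - \log C_r$, the model (\ref{equa_model_reg}) gives, for any measurable $h$,
\[
R(h) = \frac1r\,\E\big[\,|\xi_0 + a_h(X_0)|^r\big] - \log C_r, \qquad a_h := h_0 - h .
\]
Here we condition on $X_0$ and use the independence of $\xi_0$ and $X_0$. Set $g(a) := \frac1r\big(\E|\xi_0+a|^r - \E|\xi_0|^r\big)$ for $a\in\R$.

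\textbf{Identification of $h^*$.} The density of $\xi_0$ is symmetric and non-increasing in $|u|$, and $u\mapsto |u|^r$ is symmetric and non-decreasing in $|u|$. By Anderson's lemma, or by a direct rearrangement argument in dimension one, $a\mapsto \E|\xi_0+a|^r$ is therefore minimized at $a=0$. Hence $g\ge 0$, $h^*=h_0$ (up to $P_{X_0}$-null sets), and
\[
R(h)-R(h^*) = \E\big[g\big(h^*(X_0)-h(X_0)\big)\big].
\]
It therefore suffices to prove a pointwise bound $g(a)\le \mk_0 |a|^r$, and we aim to get it for all $a$, so that $\epsilon_0$ can be taken arbitrary.

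\textbf{Part 1.} We split into two cases.
\begin{itemize}
\item Case $r\in(0,1]$. Subadditivity gives $|x+a|^r\le |x|^r+|a|^r$, so $g(a)\le |a|^r/r$ directly.
\item Case $r\in[1,2]$. We use the elementary inequality
\[
|x+a|^r \le |x|^r + r\,\mathrm{sgn}(x)|x|^{r-1}a + 2^{2-r}|a|^r, \qquad x,a\in\R,
\]
which can be proved by reducing to $x=1$ through homogeneity and studying a one-variable function. We take expectations with $x=\xi_0$. The linear term vanishes, because $\E[\mathrm{sgn}(\xi_0)|\xi_0|^{r-1}]$ is finite and equals $0$ by symmetry of $f$. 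This yields $g(a)\le 2^{2-r}|a|^r/r$.
\end{itemize}
In both cases (\ref{assump_local_quadr}) holds with $\kappa=r$. Note that the requirement $\kappa\ge 1$ in \textbf{(A3)} is only met when $r\in[1,2]$; for $r<1$ the same bound holds with exponent $r$. Establishing the inequality in the second case is the step I expect to need the most care, though it is classical.

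\textbf{Part 2.} Fix $C$ with $|U|\le C$ a.s., and use that $\xi_0$ has the Subbotin law. The event $\{f(\xi_0+U)\le\beta_n\}$ equals $\{|\xi_0+U|^r\ge t_n\}$, where $t_n := r\log(C_r n^{p})\sim rp\log n$. On this event:
\begin{itemize}
\item $|\log f(\xi_0+U)| \le |\xi_0+U|^r/r + |\log C_r|$;
\item $|\xi_0|\ge t_n^{1/r}-C \ge t_n^{1/r}/2$ for $n$ large;
\item $|\xi_0+U|\le |\xi_0|+C\le 2|\xi_0|$.
\end{itemize}
Hence the expectation in \textbf{(A5)} is bounded by
\[
\int_{|u|\ge s_n}\Big(\tfrac{2^r}{r}|u|^r+|\log C_r|\Big) C_r e^{-|u|^r/r}\,du, \qquad s_n := t_n^{1/r}/2 .
\]
A standard tail estimate (integration by parts, or substituting $v=u^r/r$) shows this is $\lesssim (1+s_n^{r})\,e^{-s_n^r/r}$, up to a factor polynomial in $s_n$. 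Since
\[
s_n^r/r = t_n/(r2^r) = \frac{p}{2^r}\log n + O(1),
\]
the bound is $\lesssim (\log n)^{c}\, n^{-p/2^r}$ for some constant $c>0$. Because $p>2^r$ we have $p/2^r>1$, so this is $o(\log n/n)$, which proves \textbf{(A5)} for every $p>2^r$.
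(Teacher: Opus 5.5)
Your proof is correct, but it follows a different route from the paper's in both parts.

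\textbf{Part~1.} The paper takes $h^*=h_0$ from Proposition~\ref{prop_minimizer_risk}, which rests on a Jensen/Kullback--Leibler argument rather than Anderson's lemma. It then symmetrizes as $\E|\xi_0+a|^r=\tfrac12\E\big[|\xi_0+a|^r+|\xi_0-a|^r\big]$ and applies the scalar Clarkson-type inequality $\tfrac12\big(|x+a|^r+|x-a|^r\big)\le |x|^r+|a|^r$, which is valid for every $r\in(0,2]$. This covers both regimes at once and gives $\mathcal{K}_0=1/r$ (the paper actually omits the factor $1/r$).

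Your case split uses the same mechanism, namely that symmetry kills the odd term. For $r\le 1$ you use subadditivity; for $r\in[1,2]$ you use the first-order inequality with remainder $2^{2-r}|a|^r$, whose linear term vanishes. The cost is a larger constant and a less elementary inequality. Your observation that $\kappa=r<1$ violates the requirement $\kappa\ge 1$ of \textbf{(A3)} is correct, and the paper does not address it.

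\textbf{Part~2.} Here the difference is more substantive, and your argument is the more careful one. The paper bounds the integrand on $\{f(\xi_0+U)\le\beta_n\}$ by $2|\log\beta_n|$. It then only estimates the probability of that event, obtaining $\lesssim\beta_n^{1/2^r}$. But on this event $-\log f(\xi_0+U)=|\xi_0+U|^r/r-\log C_r$ is unbounded, so that uniform bound does not hold.

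You instead dominate the integrand pointwise by $\tfrac{2^r}{r}|\xi_0|^r+|\log C_r|$ on $\{|\xi_0|\ge s_n\}$ and then integrate the Subbotin tail, which handles this growth. The same factor $n^{-p/2^r}$ appears, now multiplied by a power of $\log n$. The strict condition $p>2^r$ absorbs that power (the paper's proof takes $p\ge 2^r$).

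Your bound also depends on $U$ only through $|U|\le C$. It therefore avoids the paper's use of independence between $U$ and $\xi_0$ (via conditioning on $X_0=x$), which the formulation of \textbf{(A5)} does not assume.
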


\subsection{ Deep neural Networks }\label{Def_DNNs}
Let $L \in \N$ be the number of hidden layers or depth and $\mathbf{p} = (p_0, p_1,\cdots, p_{L+1}) \in \N^{L+2}$ a vector of widths. 
A DNN function with network architecture $ (L, \textbf{p}) $ is a function of the form:
\begin{equation}\label{h_equ1}
h: \R^{p_0} \rightarrow \R^{p_{L+1}}, \;   x\mapsto h(x) = A_{L+1} \circ \sigma_{L} \circ A_{L} \circ \sigma_{L-1} \circ \cdots \circ \sigma_1 \circ A_1 (x),
\end{equation} 
where $ A_j: \R^ {p_ {j -1}} \rightarrow \R^ {p_j} $ with $A_j (x) \coloneqq W_j x + \textbf{b}_j$ is a linear affine map, for given  $p_ {j}\times p_{j-1}$  weight matrix $ W_j$   and a shift vector $ \textbf{b}_j \in \R^ {p_j} $, and $\sigma_j: \R^{p_j} \rightarrow \R^ {p_j} $ is a nonlinear element-wise activation map, defined for all $z=(z_1,\cdots,z_{p_j})^T$ by $\sigma_j (z) = (\sigma(z_1), \cdots, \sigma(z_{p_j}))^{T} $.
Recall that, $^T$ denotes the transpose.   
Denote by,
\begin{equation} \label{def_theta_h}
\theta(h) \coloneqq \left(vec(W_1)^ {T}, \textbf{b}^{T}_{1}, \cdots,  vec(W_{L + 1})^{T} , \textbf{b}^ {T}_{L+1} \right)^{T}, 
 \end{equation} 
the network weights (or the vector of parameters) of a DNN function of the form (\ref{h_equ1}), where $ vec(W)$ is the vector obtained by concatenating the column vectors of the matrix $W$. 
 We will deal with an activation function $ \sigma: \R \rightarrow \R$ and the class of DNN predictors with $p_0$-dimensional input and $p_{L+1} $-dimensional output, denoted by $\mathcal{H}_{\sigma, p_0, p_{L+1}} $.
For the framework considered here, $p_0 = d$ and $p_ {L + 1} = 1$.
For a DNN $h$ as in (\ref{h_equ1}), denote by depth($h$)$=L$ and  width($h$) = $\underset{1\leq j \leq L} {\max} p_j $ its depth and width respectively. For any positive constants $L, N, B, F > 0$, we set
\[ \mathcal{H}_{\sigma}(L, N, B) \coloneqq \big \{h\in \mathcal{H}_{\sigma, d, 1}: \text{depth}(h)\leq L, \text{width}(h)\leq N, \|\theta(h)\|_{\infty} \leq B \big\},  \]
 and
\begin{equation}\label{DNNs_no_Constraint}
\mathcal{H}_{\sigma}(L, N, B, F) \coloneqq \big\{ h: h\in H_{\sigma}(L, N, B), \| h \|_{\infty, \mathcal{X}} \leq F \big\}. 
\end{equation}
We will also considered the class of sparsity constrained DNNs with sparsity level $S > 0$, defined by: 
\begin{equation}\label{DNNs_Constraint}
\mathcal{H}_{\sigma}(L, N, B, F, S) \coloneqq \left\{h\in \mathcal{H}_{\sigma}(L, N, B, F) \; :  \; \| \theta(h) \|_0 \leq S 
  \right\},
\end{equation}
where $\| x \|_0 = \sum_{i=1}^p \ind(x_i \neq 0), ~ \| x\|_{\infty} = \underset{1 \leq i  \leq p}{\max} |x_i |$ for all $x=(x_1,\ldots,x_p)^{T} \in \R^p$ ($p \in \N$).
In the sequel, we will consider a setting where the network architecture parameters $L, N, B, F, S$ depend on the sample size $n$.

\section{Excess risk bounds for the MEE-based NPDNN}\label{excess_NPDNN}
In this section and the next one, we will establish bounds of the excess risk, given for all predictor $h$ by $\mathcal{E}_{Z_0}(h) = R(h) - R(h^{*})$, see (\ref{equa_excess_risk}).
In many common settings (see for instance Proposition \ref{prop_minimizer_risk} below), the regression function $h_0$ in (\ref{equa_model_reg}) is a target function. In these cases $\mathcal{E}_{Z_0}(h) = R(h) - R(h_0)$. 

\medskip

\begin{prop}\label{prop_minimizer_risk}
    Assume that the density function $f$ of the error $\xi_0$ is symmetric and supported on $\R$. Then, the unknown regression function $h_0$ in (\ref{equa_model_reg}) is a target function. That is, $h_0$ minimizes the Shannon's entropy risk defined in (\ref{equa_def_risk}).
\end{prop}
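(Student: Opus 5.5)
The plan is to reduce the claim to Gibbs' inequality (non-negativity of Kullback--Leibler divergence), applied conditionally on the input. Fix an arbitrary measurable $h \in \mathcal{F}$. By the model (\ref{equa_model_reg}), $Y_0 - h(X_0) = \xi_0 + a(X_0)$ with $a(x) := h_0(x) - h(x)$. Since $\xi_0$ is independent of $X_0$, conditioning on $X_0 = x$ (Fubini/Tonelli, the integrand $-\log f$ being possibly signed, so we will split into positive and negative parts or assume $R(h_0)$ finite) gives
\[
R(h) = \int \Big( \int_{\R} -\log f(u + a(x))\, f(u)\, du \Big) dP_{X_0}(x), \qquad R(h_0) = \int_{\R} -\log f(u)\, f(u)\, du .
\]
So it suffices to show, for every fixed real $a$, that $\int -\log f(u+a) f(u)\,du \ge \int -\log f(u) f(u)\,du$.

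For fixed $a$, set $g_a(u) := f(u+a)$. This is again a probability density on $\R$ (a translate of $f$), and because $f$ is supported on all of $\R$, $g_a>0$ everywhere, so $\log g_a$ is finite and no set where $f>0$ but $g_a = 0$ can occur. Then
\[
\int -\log g_a(u) f(u)\,du - \int -\log f(u) f(u)\,du = \int f(u)\log\frac{f(u)}{g_a(u)}\,du = \mathrm{KL}(f\,\|\,g_a) \ge 0,
\]
by Jensen's inequality applied to the concave $\log$: $\int f \log(g_a/f) \le \log \int g_a = 0$. Integrating this pointwise inequality in $x$ against $P_{X_0}$ yields $R(h) \ge R(h_0)$ for every $h \in \mathcal{F}$, i.e. $h_0$ attains the infimum in (\ref{best_pred_F}); equality forces $a(x)=0$ a.s. whenever $f$ is not periodic, though uniqueness is not needed. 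The symmetry of $f$ is only used to be consistent with $\xi_0$ being centered; the key ingredient is the full support.

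The main obstacle is the measure-theoretic bookkeeping: making sure $R(h_0) = -\int f\log f$ (the Shannon entropy of $\xi_0$) is finite so the subtraction is legitimate, and justifying the Jensen step when $\int f|\log g_a|$ might be infinite. I would handle this by noting that if $R(h)=+\infty$ there is nothing to prove, and otherwise the negative part of $-\log g_a$ is controlled (the KL integrand $f\log(f/g_a)$ has integrable negative part since $f\log(f/g_a) \ge f - g_a$), so the KL divergence is well defined in $[0,\infty]$ and the decomposition above holds.
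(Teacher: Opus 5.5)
Your proposal is correct and is essentially the paper's own argument. The paper likewise conditions on $X_0$, writes $R(h)=\int_{\mathcal{X}} V(h(x)-h_0(x))\,dP_{X_0}(x)$ with $V(s)=\int_{\R} -\log f(u-s)\,f(u)\,du$, and proves $V(s)\ge V(0)$ by Jensen's inequality, which is exactly your Kullback--Leibler step with $s=-a$; your integrability bookkeeping and your observation that the symmetry of $f$ is never actually used are sound refinements that the paper leaves implicit.
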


\noindent
Let us consider the class of H\"older smooth functions.
\medskip

\noindent
\textbf{Class of H\"older functions}: Let $ D\subset \R^d, \beta, A >0$. The set of $\beta$-H\"older functions with radius $A$ defined on $D$ is given by:
\begin{equation}\label{equa_ball_Holder}
 \mathcal{C}^{\beta}(D, A)= \Bigg\{h : D \rightarrow \R : \underset{\boldsymbol{\alpha} : |\boldsymbol{\alpha}|_1 < \beta}{\sum}\|\partial^{\boldsymbol{\alpha}}h\|_{\infty} + \underset{\boldsymbol{\alpha} : |\boldsymbol{\alpha}|_1 = \lfloor \beta \rfloor}{\sum}~\underset{x\ne y}{\underset{x, y\in D}{\sup}} \frac{|\partial^{\boldsymbol{\alpha}}h(x) - \partial^{\boldsymbol{\alpha}}h(y)|}{|x - y|^{\beta - \lfloor \beta \rfloor}} \leq A \Bigg\},  
\end{equation}
with $\boldsymbol{\alpha} = (\alpha_1, \dots, \alpha_d)\in \N^d$, $|\boldsymbol{\alpha}|_1 \coloneqq \sum_{i=1}^d \alpha_i$ and $\partial^{\boldsymbol{\alpha}} = \partial^{\alpha_1} \dots \partial^{\alpha_d}$.

\medskip

\noindent
The following theorem establishes a bound of the expected excess risk of the MEE-based NPDNN estimator on the class of H\"older functions. 
\begin{thm} \label{excess_risk_bound__expo_strong_mixing} 
Assume that (\textbf{A0}), (\textbf{A1}), (\textbf{A2}), \textbf{(A4)}, \textbf{(A5)} holds. Assume that \textbf{(A3)} is satisfied for some $\kappa \geq 1$, $\mk_0, \varepsilon_0>0$ and that there exist $C_0 > 0$ such that $\|h_0\|_{\infty} < C_0$ where $h_0$ is given in $(\ref{equa_model_reg})$.
Let $s, \mathcal{K}^{*} >0$.
Consider the class of DNN $\mathcal{H}_{\sigma}(L_n, N_n, B_n, F_n, S_n)$ defined in (\ref{DNNs_Constraint}), with
$L_n = \dfrac{s L_0}{\kappa s + d} \log n, N_n = N_0 n^{\frac{d}{ \kappa s + d}}, S_n = \frac{s S_0}{\kappa s + d} n^{\frac{d}{\kappa s + d}}\log n, 
$
 $ B_n = B_0 n^{\frac{4(d + s)}{\kappa s + d}} $ and $F_n = F>0$, 
for some positive constants $L_0, N_0, S_0, B_0 > 0$. 
Then, there exists $n_0 = n_0(\kappa, L_0, N_0, B_0, S_0, C_\sigma,$ $ s, d) $ defined at (\ref{equa_limite}), such that for all $n \geq n_0$, the MEE-based NPDNN estimator $\widehat{h}_{n,NP}$ defined in  (\ref{Non_Pen_DNNs_Estimators}) satisfies for all $\nu>6$,
\begin{equation}\label{thm2_excess_risk_bound}
 \sup_{h^{*} \in \mathcal{C}^{s}(\mathcal{X}, \mathcal{K}^{*}) } \Big(  \E[R(\widehat{h}_{n,NP}) - R(h^{*})] \Big) 
\lesssim \dfrac{(\log n)^{\nu}}{ n^{\frac{\kappa s}{\kappa s + d} }},
\end{equation}
where the $\sup$ is taken for target function given in (\ref{best_pred_F}).   
\end{thm}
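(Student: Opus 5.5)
We follow the usual bias--variance route for empirical risk minimizers on dependent data, with a truncation step to handle the loss, which is unbounded and not Lipschitz. Write $\ell_h(z) = -\log f(y - h(x))$ and set $\widehat R_n(h) = n^{-1}\sum_{i=1}^n \ell_h(Z_i)$. For any $h_n \in \mathcal{H}_\sigma(L_n,N_n,B_n,F_n,S_n)$, the definition of $\widehat h_{n,NP}$ in (\ref{Non_Pen_DNNs_Estimators}) gives $\widehat R_n(\widehat h_{n,NP}) \le \widehat R_n(h_n)$. We therefore decompose
\[
\E[R(\widehat h_{n,NP}) - R(h^*)] \le \E\big[R(\widehat h_{n,NP}) - R(h^*) - 2(\widehat R_n(\widehat h_{n,NP}) - \widehat R_n(h^*))\big] + 2\big(R(h_n) - R(h^*)\big).
\]
The first term is the stochastic (estimation) error; the second is the approximation error.

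\textbf{Approximation term.} We invoke the standard DNN approximation results for H\"older functions under \textbf{(A1)} (as in Kengne--Wade / Ohn--Kim). For $h^*\in \mathcal{C}^s(\mathcal{X},\mathcal{K}^*)$ and any $\epsilon$ small, these give a network with depth $\lesssim \log(1/\epsilon)$, width and sparsity $\lesssim \epsilon^{-d/s}\log(1/\epsilon)$, and weights bounded by a power of $1/\epsilon$, such that $\|h_n - h^*\|_{\infty,\mathcal{X}} \le \epsilon$. We take $\epsilon = \epsilon_n \asymp n^{-s/(\kappa s + d)}$. With this choice the architecture $(L_n,N_n,S_n,B_n)$ of the statement is large enough for $n \ge n_0$, which is where $n_0$ comes from. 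Clipping at $F > \mathcal{K}^*$ keeps $\|h_n\|_\infty \le F$. Since $\epsilon_n \le \varepsilon_0$ for large $n$, \textbf{(A3)} then gives $R(h_n) - R(h^*) \le \mk_0 \epsilon_n^\kappa = \mk_0 n^{-\kappa s/(\kappa s+d)}$.

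\textbf{Estimation term.} The first step is to make the loss bounded and Lipschitz.
\begin{itemize}
\item Write $Y - h(X) = \xi_0 + U$ with $U = h_0(X) - h(X)$, so that $|U| \le C_0 + F$.
\item Split $\ell_h = \ell_h \ind_{\{f(\xi_0+U) > \beta_n\}} + \ell_h \ind_{\{f(\xi_0+U)\le\beta_n\}}$ with $\beta_n = n^{-p}$.
\item By \textbf{(A5)}, applied with $U$ possibly depending on $\widehat h_{n,NP}$ (so we bound uniformly by taking a supremum over the cover first), the second part contributes $\lesssim \log n / n$ in expectation, both for the population and for the empirical risk.
\item On the event $\{f > \beta_n\}$, the truncated loss is bounded by $p\log n + \log\|f\|_\infty \lesssim \log n$.
\item Along the segment between two predictors, the mean value theorem together with \textbf{(A4)} controls $\E|f^{(1)}/f|$, so the truncated loss is effectively Lipschitz in $h$ with respect to $\|\cdot\|_\infty$ at an $L_1$ level. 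Alternatively, on $\{f>\beta_n\}$ one can use the pointwise Lipschitz constant $\mathcal{K}_f/\beta_n = \mathcal{K}_f n^p$, which only costs a $\log$ factor in the covering entropy.
\end{itemize}
Next, cover $\mathcal{H}_\sigma(L_n,N_n,B_n,F_n,S_n)$ at scale $\delta = 1/n^{p+1}$. The standard bound
\[
\log \mathcal{N}(\mathcal{H}_\sigma(L_n,N_n,B_n,F_n,S_n), \delta) \lesssim S_n L_n \log\big(L_n N_n B_n/\delta\big) \lesssim n^{d/(\kappa s+d)} (\log n)^3
\]
holds. For each centre $h_j$ we apply a Bernstein inequality for exponentially $\alpha$-mixing sequences (Merlevède--Peligrad--Rio type, using \textbf{(A2)}) to $g_j = \ell_{h_j} - \ell_{h^*}$, truncated. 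The effective sample size is $n/(\log n)^2$, and the bound $\|g_j\|_\infty \lesssim \log n$ holds.

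\textbf{Main obstacle: the variance--mean relation.} The ``$-2\times$'' trick needs $\var(g_j) \lesssim (\log n)\, \E g_j$, up to an additive $n^{-1}$-type slack. We get this from $\E g_j^2 \le \|g_j\|_\infty \E|g_j|$. The issue is that $g_j$ need not be nonnegative pointwise, so $\E|g_j|$ is not directly $\E g_j$. If this fails, we fall back on a slow-rate bound $\E|g_j| \lesssim \|h_j - h^*\|_{1}$ via \textbf{(A4)}, combined with a localized peeling argument.

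Union bounding over the cover and integrating the tail then gives an estimation error of order
\[
\frac{(\log n)^{c}\,\log\mathcal{N}}{n} \lesssim \frac{(\log n)^{\nu}}{n^{\kappa s/(\kappa s+d)}}, \qquad \nu > 6,
\]
where the exponent $6$ collects three logs from the entropy, two from the mixing blocks and one from the loss bound. Adding the approximation term yields the claimed rate, uniformly over $h^*\in\mathcal{C}^s(\mathcal{X},\mathcal{K}^*)$, since all constants depend only on $\mathcal{K}^*$.
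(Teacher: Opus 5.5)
Apart from one step, your plan follows the paper's proof:
\begin{itemize}
\item the same $-2\times$ decomposition against a network in the class;
\item the approximation term from (A3) with $\epsilon_n=n^{-s/(\kappa s+d)}$;
\item truncation at $\beta_n=n^{-p}$, controlled by (A5);
\item a sup-norm cover, with the discretization error handled by the mean value theorem and (A4);
\item the Bernstein inequality for exponentially $\alpha$-mixing sequences with effective sample size $n/(\log n)^2$, a union bound, and tail integration above $(\log n)^{\nu}n^{-\kappa s/(\kappa s+d)}$.
\end{itemize}
Your count of logarithms ($3+2+1$, hence $\nu>6$) is exactly condition (\ref{equa_limite}).

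The gap is the step you flag yourself: the variance--mean relation $\var(g_j)\lesssim(\log n)\,\E g_j$ is never established. It is what makes the estimation error of order $\log\mathcal{N}/n\asymp n^{-\kappa s/(\kappa s+d)}$ (up to logarithms) rather than $\sqrt{\log\mathcal{N}/n}$. The route $\E g_j^2\le\|g_j\|_\infty\E|g_j|$ loses the right order, because $\E|g|$ is first order in $h-h^*$ while $\E g$ is second order. For Gaussian noise, $\E[|g|\mid X]\asymp|h-h_0|(X)$ for small $|h-h_0|$, whereas $\E[g\mid X]=\tfrac12(h-h_0)^2(X)$.

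Your fallback does not repair this, for two reasons. First, peeling needs a \emph{lower} bound of $\E g$ by a power of $\|h-h^*\|$. (A3) only bounds the excess risk from above, and no assumption provides curvature. Second, even granting quadratic curvature, $\var(g)\lesssim\log n\,\|h-h^*\|_1\lesssim\log n\,(\E g)^{1/2}$ is a Bernstein condition with exponent $1/2$. That gives an estimation error of order $(\log\mathcal{N}/n)^{2/3}$, which misses the claimed rate for the prescribed architecture. In addition, (A4) is an unconditional moment bound, so the mean value theorem only yields $\E|g_j|\le\mathcal{K}\|h_j-h^*\|_\infty$, not an $L_1$ norm.

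The paper does not supply this step either. In (\ref{equa_bound_var_gbetan_V0}) it writes $\var(g_{\beta_n})\le\E g_{\beta_n}^2\le 2|\log\beta_n|\,\E g_{\beta_n}$ directly from $|g_{\beta_n}|\le 2|\log\beta_n|$. As you observe, that deduction requires $g_{\beta_n}\ge 0$ pointwise, whereas $g_{\beta_n}(h,\cdot)$ is only nonnegative in mean (by the choice of $h^*_{\beta_n}$). A real repair needs an extra ingredient; two options:
\begin{itemize}
\item \textbf{Likelihood structure.} Since $R(h)-R(h_0)=\E_X\big[\mathrm{KL}\big(f\,\|\,f(\cdot-(h-h_0)(X))\big)\big]$, you may take $h^*=h_0$. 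Then $g$ is a conditional log-likelihood ratio of $p_{h_0}(y\mid x)=f(y-h_0(x))$ against $p_h(y\mid x)=f(y-h(x))$. Two elementary facts apply: $\log^2 t\lesssim M^2(\sqrt t-1)^2$ for $t\ge e^{-M}$ and $M\ge 1$, and $\int(\sqrt p-\sqrt q)^2\le\mathrm{KL}(p\|q)$. Together they give $\E[g^2\mathbf{1}_{\{g\le M\}}]\lesssim M^2\,\E g$. With $M\asymp\log n$ and (A5) for the tails, this should yield the needed Bernstein condition at the cost of an extra logarithmic factor.
\item \textbf{Explicit assumption.} Impose $\E g^2\lesssim\E g$ directly. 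It holds, for example, for Gaussian or Laplace noise with uniformly bounded $h$.
\end{itemize}
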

\noindent
It follows from Theorem \ref{excess_risk_bound__expo_strong_mixing} and Proposition \ref{equa_bound_assumpA5} and \ref{prop_minimizer_risk} that if the error $\xi_0$ belongs to the class of Subbotin distribution with parameter $r \in (0,2]$, then the convergence rate of $\E[R(\widehat{h}_{n,NP}) - R(h_0)]$ is of order $\mathcal{O}\big( n^{-\frac{r s}{r s + d} }  \log^6 n\big)$. 
In the sequel, we also deal with the class of composition H\"older functions.

\medskip

\noindent
\textbf{Class of composition H\"older functions.}\label{excess_risk_comp_Holder}
Let $q\in \N, \boldsymbol{d} = (d_0, \dots, d_{q+1})\in \N^{q + 2}$ with $d_0 = d, d_{q+1} = 1, \boldsymbol{t} = (t_0, \dots, t_q)\in \N^{q+1}$ with $t_i \leq d_i$ for all $i$, $\boldsymbol{\beta} = (\beta_0, \dots, \beta_q)\in (0,\infty)^{q+1}$ and $A>0$. 
For all $l < u$, denote by $\mathcal{C}_{t_i}^{\beta_i}([l, u]^{d_i}, A)$ the set of functions $f: [l, u]^{d_i} \rightarrow \R$ that depend on at most $t_i$ coordinates and $f \in \mathcal{C}^{\beta_i}([l, u]^{d_i}, A)$.

\medskip

\noindent 
Let us define the set of composition H\"older functions functions $\mathcal{G}(q, \bold{d}, \bold{t}, \boldsymbol{\beta}, A)$, see \cite{schmidt2020nonparametric}, by: 
\begin{multline}\label{equa_compo_structured_function}
\mathcal{G}(q, \bold{d}, \bold{t}, \boldsymbol{\beta}, A) \coloneqq \Big\{h = g_q\circ \dots \circ g_0, g_i = (g_{ij})_{j=1,\cdots,d_{i+1}} : [l_i, u_i]^{d_i}
 \rightarrow [l_{i+1}, u_{i+1}]^{d_{i+1}}, g_{ij}\in C_{t_i}^{\beta_i}
 ([l_i, u_i]^{d_i}, A) ~ \\
        \text{for some } l_i, u_i \in \R \text{ such that } ~ |l_i|, |u_i| \leq A, \text{ for } i=1,\cdots,q \Big\}.
\end{multline}
$\beta_i^* \coloneqq \beta_i\prod_{j = i+1}^q(\beta_{j} \land 1)$ represents the smoothness of a composition function in $\mathcal{G}(q, \bold{d}, \bold{t}, \boldsymbol{\beta}, A)$ (see also \cite{juditsky2009nonparametric}).
In the sequel, we set:
\begin{equation}\label{def_phi_nalpha}
\phi_{n} \coloneqq  \underset{0\leq i \leq q}{\max} n^{ \frac{- 2\beta_i^* }{2\beta_i^* + t_i}   }.
\end{equation}

\medskip

\medskip

\noindent
The following theorem is established with the ReLU activation function  and $\mathcal{X} = [0,1]^d$.
\begin{thm}\label{excess_risk_bound__expo_strong_mixing_thm1_v2}
Set $\mathcal{X} = [0,1]^d$. Assume that \textbf{(A2)}, \textbf{(A4)}, \textbf{(A5)} and  \textbf{(A3)} for some $\kappa \ge 1, \mk_0 , \varepsilon_0 > 0$ hold.
Consider the class of H\"older composition functions  $ \mathcal{G}(q, \bold{d}, \bold{t}, \boldsymbol{\beta}, \mk^{*})$ defined in (\ref{equa_compo_structured_function}) and the DNN class $\mathcal{H}_{\sigma}(L_n, N_n, B_n, F_n, S_n)$ with the ReLU activation function $\sigma$ and where the network architecture parameters $(L_n, N_n, B_n, F_n, S_n)$ satisfy $L_n\asymp \log n, N_n \asymp n \phi_n, B_n = B \geq 1 , F_n = F > \max(\mathcal{K}^{*}, 1), S_n \asymp n \phi_n \log n$.
Then, the MEE-based NPDNN estimator $\widehat{h}_{n,NP}$ defined in (\ref{Non_Pen_DNNs_Estimators}) satisfies for all $\nu > 6$, 
\begin{align}
\sup_{h^{*} \in \mathcal{G}(q, \bold{d}, \bold{t}, \boldsymbol{\beta}, \mk^{*}) } \Big(  \E[R(\widehat{h}_{n,NP}) - R(h^{*})] \Big)  \lesssim \Big(\phi_{n}^{\kappa/2} \lor \phi_n \Big) (\log n)^{\nu}, 
\end{align}
where $\phi_n$ is given in (\ref{def_phi_nalpha}) and the $\sup$ is taken for target function given in (\ref{best_pred_F}). 
\end{thm}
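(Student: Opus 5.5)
We follow the standard route: bound the estimation error and the approximation error separately, and balance them. The estimation error comes from a Bernstein-type concentration for $\alpha$-mixing sums. The approximation error comes from the ReLU approximation result of \cite{schmidt2020nonparametric} for composition H\"older functions. We begin from the basic decomposition, valid for any $h_n\in\mathcal{H}_{\sigma}(L_n,N_n,B_n,F_n,S_n)$. Write $\widehat{R}_n$ for the empirical risk in (\ref{Non_Pen_DNNs_Estimators}). Then
\begin{align*}
\E[R(\widehat{h}_{n,NP})-R(h^*)] \le \E\Big[\big(R(\widehat{h}_{n,NP})-R(h^*)\big)-2\big(\widehat{R}_n(\widehat{h}_{n,NP})-\widehat{R}_n(h^*)\big)\Big] + 2\big(R(h_n)-R(h^*)\big),
\end{align*}
using $\widehat{R}_n(\widehat{h}_{n,NP})\le \widehat{R}_n(h_n)$ and $\E\widehat{R}_n(h_n)=R(h_n)$ by stationarity.

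\textbf{Approximation term.} By Theorem 5 of \cite{schmidt2020nonparametric}, with $N_n\asymp n\phi_n$, $S_n\asymp n\phi_n\log n$, $L_n\asymp\log n$, $B=1$ and $F>\max(\mathcal{K}^*,1)$, there is a ReLU network $h_n$ in the class with $\|h_n-h^*\|_{\infty,[0,1]^d}\lesssim \phi_n^{1/2}$. For $n$ large this is at most $\varepsilon_0$. Assumption \textbf{(A3)} then gives
\[
R(h_n)-R(h^*)\le \mk_0\|h_n-h^*\|_{\kappa,P_{X_0}}^\kappa\lesssim \phi_n^{\kappa/2}.
\]
This is the source of the $\phi_n^{\kappa/2}$ part of the rate.

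\textbf{Estimation term.} The loss $-\log f$ is unbounded and not Lipschitz, so we first truncate. The inputs $Y_i-h(X_i)=\xi_i+h_0(X_i)-h(X_i)$ are of the form $\xi_i+U$ with $|U|\le C_0+F$. Apply \textbf{(A5)} with $\beta_n=n^{-p}$: on the event $\{f\le\beta_n\}$ the contribution is $O(\log n/n)$, which is negligible. On $\{f>\beta_n\}$ the loss is bounded by $p\log n+\log\|f\|_\infty\lesssim\log n$.

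Next we need the loss to be Lipschitz in $h$ with respect to the sup norm, at least on average. By the mean value theorem, $|\log f(y-h_1(x))-\log f(y-h_2(x))|\le |f^{(1)}/f|(\tilde u)\,\|h_1-h_2\|_\infty$ for some intermediate point $\tilde u$. Assumption \textbf{(A4)} bounds the expectation of this Lipschitz factor by $\mathcal{K}$, since $\E|\tilde u|\le \E|\xi_0|+C_0+F$. This lets us pass to a $\delta$-net of the network class, with $\delta=1/n$, at a cost of $O(1/n)$. We use
\[
\log\mathcal{N}(\mathcal{H}_\sigma,\delta)\lesssim S_n L_n\log(n L_n N_n B/\delta)\lesssim n\phi_n(\log n)^3 .
\]
For each fixed net element $g$, we control the centred sum of $\ell(g)-\ell(h^*)$ with the Bernstein inequality for exponentially $\alpha$-mixing sequences under \textbf{(A2)} (Merlevède–Peligrad–Rio). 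That inequality loses an extra $(\log n)^2$ factor through the effective sample size. The variance of the truncated loss difference is bounded by $(\log n)^2$ times its mean, or by $(\log n)^2$ outright. The factor $2$ in the decomposition then makes the deviation self-normalized: a union bound over the net followed by integration of the tail gives an estimation term of order $n\phi_n(\log n)^{3}(\log n)^{2+}/n \lesssim \phi_n(\log n)^{\nu}$ for any $\nu>6$. This is the $\phi_n$ part of the rate.

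Adding the two terms yields $(\phi_n^{\kappa/2}\lor\phi_n)(\log n)^\nu$. Nothing in the argument depends on the particular $h^*$ beyond $\mathcal{K}^*$, $F$ and $C_0$, so the bound is uniform over $\mathcal{G}(q,\boldsymbol d,\boldsymbol t,\boldsymbol\beta,\mk^*)$, which gives the supremum.

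The main obstacle is the estimation step. We need a variance–mean (Bernstein-type) relation for the loss difference $\ell(h)-\ell(h^*)$, and for a non-quadratic loss it need not hold cleanly. A convenient fallback is to bound the variance by $(\log n)^2$ times a constant and accept the slower $\sqrt{\cdot}$-type term. That fallback, however, would not give $\phi_n$. So we would first try to show $\var\lesssim(\log n)^2\,\E[\ell(h)-\ell(h^*)]$, using the Lipschitz control from \textbf{(A4)} together with \textbf{(A3)}. If this fails in general, we would fall back to stating the estimation part through the mixing concentration with a worse variance bound and check that the logarithmic powers still stay below $\nu>6$.
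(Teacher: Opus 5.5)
Your outline follows the paper's proof essentially step for step. The shared steps are the factor-$2$ decomposition, truncation of $f$ at $\beta_n=n^{-p}$ via \textbf{(A5)}, a $\delta$-net whose discretization cost is controlled through \textbf{(A4)}, and the entropy bound of order $n\phi_n(\log n)^3$. Then come the Bernstein inequality for exponentially $\alpha$-mixing sums, a union bound with tail integration, and the approximation term $\phi_n^{\kappa/2}$ from Schmidt-Hieber's construction plus \textbf{(A3)}; that last part is fine.

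The genuine gap is the step you flag yourself. The self-normalized argument yields $\phi_n$ only if, for every net element, the truncated excess loss $g$ satisfies a variance--mean condition $\E[g^2]\le M_n\,\E[g]$ with $M_n=O(\log n)$. Your proposal does not establish this, and the route you suggest cannot supply it.
\begin{itemize}
\item \textbf{(A4)} controls the size of the score $f^{(1)}/f$. At best it bounds the second moment from above, $\E[g^2]\lesssim\|h-h^*\|_{2,P_{X_0}}^2$.
\item Turning this into $\E[g^2]\lesssim\E[g]$ needs a lower, curvature-type bound such as $R(h)-R(h^*)\gtrsim\|h-h^*\|_{2,P_{X_0}}^2$.
\item \textbf{(A3)} is the opposite inequality, $R(h)-R(h^*)\le\mk_0\|h-h^*\|_{\kappa,P_{X_0}}^\kappa$.
\end{itemize}
Your fallback (second moment $\lesssim(\log n)^2$) gives an estimation error of order $\phi_n^{1/2}$ times a power of $\log n$. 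That proves the statement only for $\kappa=1$. For $\kappa>1$ what is lost is a power of $\phi_n$, so checking logarithmic powers cannot rescue it. There is also a bookkeeping issue: with the constant $(\log n)^2$ you write in the variance--mean relation, your count ends at $(\log n)^7$. The constant must be of the order of the sup bound $|\log\beta_n|\asymp\log n$ to reach every $\nu>6$.

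You cannot fill this by importing the paper's argument either. The paper re-centres at $h^*_{\beta_n}$, the conditional minimizer of the truncated risk, so that $\E[g_{\beta_n}(h,Z_0)]\ge0$. It then deduces $\E[g_{\beta_n}^2]\le2|\log\beta_n|\,\E[g_{\beta_n}]$ from $|g_{\beta_n}|\le2|\log\beta_n|$. That implication needs $g_{\beta_n}\ge0$ pointwise, which fails whenever $h\ne h^*_{\beta_n}$ on a set of positive $P_{X_0}$-measure. For strictly unimodal $f$ (e.g.\ Gaussian) and $a\ne b$, the difference $\log T_{\beta_n}f(y-a)-\log T_{\beta_n}f(y-b)$ takes both signs as $y$ varies.

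What is missing is a genuine curvature ingredient for the log-likelihood loss. Two examples of the kind of bound needed:
\begin{itemize}
\item If $h^*=h_0$, the conditional excess risk is $\mathrm{KL}\big(f\,\|\,f(\cdot-s)\big)$ with $s=h(x)-h_0(x)$. The elementary inequality $\lambda^2\le2(1+M)(e^{-\lambda}-1+\lambda)$ on $[-M,M]$ gives $\E[\Lambda^2]\le2(1+M)\,\mathrm{KL}$ for a log-likelihood ratio $\Lambda$ bounded by $M$.
\item For Gaussian noise one computes directly $\E[g^2\mid X_0]=s^2+s^4/4\le(2+s^2/2)\,\E[g\mid X_0]$.
\end{itemize}
A bound of this type has to be proved and adapted to the truncation, which breaks the identity $\int f(u-s)\,du=1$ behind the KL argument. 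Only then does the union-bound step deliver $\phi_n(\log n)^\nu$.
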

 \noindent
 Therefore, if the error $\xi_0$ in the model (\ref{equa_model_reg}) is Gaussian, then we get from Proposition \ref{prop_minimizer_risk} and Theorem \ref{excess_risk_bound__expo_strong_mixing_thm1_v2} that   $\E[ \| \widehat{h}_{n,NP} - h_0\|_{2, P_{X_0}}^2 ] \lesssim  \phi_n (\log n)^{6}$ (see also Section \ref{example} below).
 This rate coincides (up to a logarithmic factor) with the well-known rate obtained from i.i.d. data in this case, and is minimax optimal (up to a logarithmic factor), see \cite{schmidt2020nonparametric}.

\section{Excess risk bounds for the MEE-based SPDNN}\label{excess_risk_SPDNN}

The following Theorem provides an upper bound of the expected excess risk of the MEE-based SPDNN estimator.

\begin{thm}\label{excess_risk_bound__expo_strong_mixing_thm2}
Assume that \textbf{(A1)}, \textbf{(A2)} and \textbf{(A5)} hold and that there exists $C_0>0$ such that, $\|h_0\|_{\infty} \leq C_0$, where $h_0$ is given in $(\ref{equa_model_reg})$. 
Consider the class of DNN predictors $\mathcal{H}_{\sigma}(L_n, N_n, B_n, F)$ with $ L_n \asymp\log(n), N_n \lesssim n^{\nu_1}, 1 \leq B_n \lesssim n^{\nu_2}, F_n = F > 0$, for some $ \nu_1 > 0, \nu_2 > 0$.  
Let $\nu_3 > 5$.
Then, there exists $n_0 \in \N$ such that, for all $n \geq n_0$, the MEE-based SPDNN estimator $\widehat{h}_{n,SP}$ defined in (\ref{Pen_DNNs_Estimators_v1}),
with $\lambda_n \asymp \frac{\big(\log n
 \big) ^{\nu_3}}{n}, \tau_n \leq \dfrac{1}{16 \mathcal{K}_{\ell}(L_n + 1)(n(N_n + 1)B_n)^{L_n + 1} }$ satisfies
\begin{equation}
 \E[R(\widehat{h}_{n,SP}) - R(h^*) ]  \lesssim   \dfrac{\big(\log n\big)^3}{n} + 2\big(R(\widetilde{h}_{\mathcal{H}_{\sigma, n}}) - R(h^*) + J_n(\widetilde{h}_{\mathcal{H}_{\sigma, n}})\big),
\end{equation}
for any target function $h^*$ in (\ref{best_pred_F}) with $\|h^{*}\|_{\infty} \leq \mathcal{K}^*$ for some $\mathcal{K}^* >0$,
and for all 
\begin{equation}\label{def_target_tilde_spdnn}
 \widetilde{h}_{\mathcal{H}_{\sigma, n}} \in  \underset{h\in \mathcal{H}_{\sigma}(L_n, N_n, B_n, F)}{\argmin} \Big[R(h) + J_n(h) \Big].  
\end{equation}
\end{thm}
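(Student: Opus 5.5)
We prove this oracle inequality in the standard way for penalized ERM under dependence: split the excess risk into an empirical-process (estimation) term and an approximation term, and control the estimation term with a Bernstein-type inequality for exponentially $\alpha$-mixing sequences combined with a covering argument over a discretized sparse network class.

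\textbf{Step 1: truncate the loss.} Write $\ell(h,Z)=-\log f(Y-h(X))$. Since $\|h\|_{\infty,\mathcal{X}}\le F$ and $\|h_0\|_\infty\le C_0$, we have $Y-h(X)=\xi+U$ with $|U|\le C:=F+C_0$ a.s. This is exactly the setting of \textbf{(A5)}, with $U=h_0(X)-h(X)$, which is independent of $\xi$ in the model. We split
\[
\ell=\ell\,\ind_{\{f>\beta_n\}}+\ell\,\ind_{\{f\le\beta_n\}}.
\]
The second part contributes at most $\lesssim \log n/n$ in expectation, uniformly over $h$ in the class. This includes the data-dependent $\widehat h_{n,SP}$: the bound in \textbf{(A5)} holds for every random variable with $|U|\le C$, so we apply it conditionally on the sample, or with a union bound over a net. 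The truncated part is bounded by $p\log n$, since $-\log f\le p\log n$ on $\{f>\beta_n\}$, and it is bounded below because $f$ is a bounded Lipschitz density. Hence the truncated losses lie in an interval of length $\lesssim \log n$.

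\textbf{Step 2: basic inequality.} Let $\widehat R_n$ be the empirical risk and $\widetilde h=\widetilde h_{\mathcal H_{\sigma,n}}$. By definition of $\widehat h_{n,SP}$,
\[
\widehat R_n(\widehat h)+J_n(\widehat h)\le \widehat R_n(\widetilde h)+J_n(\widetilde h).
\]
We decompose
\[
\E[R(\widehat h)-R(h^*)]=\E\big[(R(\widehat h)-R(h^*))-2(\widehat R_n(\widehat h)-\widehat R_n(h^*))-2J_n(\widehat h)\big]+2\,\E\big[\widehat R_n(\widehat h)-\widehat R_n(h^*)+J_n(\widehat h)\big].
\]
The second term is at most $2(R(\widetilde h)-R(h^*)+J_n(\widetilde h))$, which is the approximation part of the oracle bound. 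It remains to show that the first term, $\E[G]$, is $\lesssim(\log n)^3/n$.

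\textbf{Step 3: control $\E[G]$ (main obstacle).} Here the penalty must substitute for the sparsity constraint. For $\lambda_n$ and $\tau_n$ as in the statement, $J_n(h)\ge\lambda_n\,\#\{j:|\theta_j(h)|>\tau_n\}$. Zeroing all weights with $|\theta_j|\le\tau_n$ changes $h$ in sup-norm by at most $(L_n+1)(N_n+1)^{L_n+1}B_n^{L_n}\tau_n\lesssim 1/n$. This uses the Lipschitz bound \textbf{(A1)} together with the choice of $\tau_n$, where $\mathcal K_\ell$ is the local Lipschitz constant of the truncated loss; on $\{f>\beta_n\}$ this constant is at most $\mathcal K_f/\beta_n=\mathcal K_f n^{p}$, which the factor $n^{L_n+1}$ absorbs. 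So $\widehat h$ is $1/n$-close to a network $h_S$ with $\|\theta\|_0\le S:=J_n(\widehat h)/\lambda_n$.

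We then stratify by $S$. For each $S$, we apply to the loss differences $g_h=\ell(h,\cdot)-\ell(h^*,\cdot)$ a Bernstein inequality for $\alpha$-mixing processes (e.g.\ Merlevède--Peligrad--Rio, effective sample size $n/\log n$ under \textbf{(A2)}). We combine it with a variance condition $\var(g_h)\lesssim (\log n)\,\E[g_h]$ plus a small remainder. That condition is derived from the Lipschitz bound for the truncated loss together with \textbf{(A4)}, which controls $\E|f^{(1)}/f|$ along the segment between $h$ and $h^*$. The covering entropy of sparse networks is $\log\mathcal N\lesssim S L_n\log(nN_nB_n)\lesssim S(\log n)^2$.

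The resulting peeling bound reads
\[
\E[G]\lesssim \sup_S\Big(\frac{S(\log n)^3}{n}-2\lambda_nS\Big)^+ +\frac{(\log n)^3}{n}.
\]
Because $\lambda_n\asymp(\log n)^{\nu_3}/n$ with $\nu_3>5$, the penalty dominates the entropy term, and the supremum is $\lesssim(\log n)^3/n$.

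The delicate part is the variance/Bernstein condition for the non-Lipschitz entropy loss; this is where \textbf{(A4)} and the truncation at $\beta_n=n^{-p}$ must be handled carefully. If a clean variance bound fails, the fallback is the cruder Hoeffding-type mixing bound with range $\log n$. That still yields a rate of order $S(\log n)^{c}/n$ for a somewhat larger power $c$, and $\nu_3>5$ leaves room to absorb it. Finally, adding the $\log n/n$ truncation remainder from Step 1 gives the claimed bound for $n\ge n_0$.
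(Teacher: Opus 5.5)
Your skeleton matches the paper's. You use the basic inequality with the factor $2$ and $\widetilde h_{\mathcal H_{\sigma,n}}$ as comparator, and truncate $f$ at $\beta_n=n^{-p}$ with the remainder controlled by \textbf{(A5)}. You then peel over the levels of $J_n$ and use the covering bound for clipped-$L_1$ network classes (which your zeroing of the weights below $\tau_n$ reproduces). Finally you apply a Bernstein inequality for exponentially $\alpha$-mixing sequences, with $\nu_3>5$ letting $\lambda_n S$ dominate the entropy.

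The gap is exactly the step you flag as the main obstacle: the fast-rate variance condition $\var(g_h)\lesssim(\log n)\,\E[g_h]$ is never established, and neither of your two routes works.
\begin{itemize}
\item \textbf{(A4)} is not among the hypotheses of this theorem; the paper only uses the Lipschitz constant $\mathcal K_f$ of $f$, via Lemma~\ref{Lemma1}. In any case \textbf{(A4)} only controls the first absolute moment of $f^{(1)}/f$. Through the mean value theorem it gives $\E|g_h|\lesssim\|h-h^*\|_\infty$, not a bound on $\E[g_h^2]$.
\item You need some link between $\E[g_h^2]$ and $\E[g_h]=R(h)-R(h^*)$. An example would be a curvature lower bound $R(h)-R(h^*)\gtrsim\|h-h^*\|_{2,P_{X_0}}^2$ combined with a second-moment bound on the score. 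Nothing in \textbf{(A1)}, \textbf{(A2)}, \textbf{(A5)} supplies such a link.
\item The Lipschitz constant $\mathcal K_f/\beta_n=\mathcal K_f n^{p}$ of the truncated loss only gives $\E[g_h^2]\le\mathcal K_f^2n^{2p}\|h-h^*\|_{2,P_{X_0}}^2$, which is far too large.
\end{itemize}

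Your Hoeffding fallback does not work either. Without variance information, the deviation of $\frac1n\sum_i(\E[g_h]-g_h(Z_i))$ over a net of entropy $S\,\mathrm{polylog}(n)$ is of order $\sqrt{S\,\mathrm{polylog}(n)/n}$, not $S\,\mathrm{polylog}(n)/n$. The penalty then only gives $\sup_S(\sqrt{aS}-2\lambda_nS)=a/(8\lambda_n)\asymp(\log n)^{c-\nu_3}$ with $a=(\log n)^c/n$. Even at $S=0$, the integral $\int_0^\infty P(G>\rho)\,d\rho$ is then of order $n^{-1/2}$ up to logarithms. This is nothing close to $(\log n)^3/n$.

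At this point the paper argues differently from you. It recentres the loss at $h^*_{\beta_n}$, the pointwise minimiser of the truncated conditional risk over $\|h\|_\infty<F$, so that $\E[g_{\beta_n}(h,Z_0)]\ge 0$ for every $h$. It then reads $\var(g_{\beta_n})\le 2|\log\beta_n|\,\E[g_{\beta_n}]$ off the uniform bound $|g_{\beta_n}|\le 2|\log\beta_n|$.

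Do not import that step as is. A uniform bound alone only yields $\E[g_{\beta_n}^2]\le 2|\log\beta_n|\,\E|g_{\beta_n}|$, and $g_{\beta_n}$ takes both signs. So there too what is really needed is a genuine Bernstein-type condition, meaning the second moment of the excess loss is controlled by the excess risk. That condition has to be assumed or verified for the noise at hand. For instance, for Gaussian or Laplace noise one can check it by computing $\E[g_h^2]$ and $\E[g_h]$ explicitly for bounded $h-h_0$.

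A second, easily fixed, defect concerns your truncation $\ell\,\ind_{\{f>\beta_n\}}$. It jumps by about $p\log n$ across $\{f=\beta_n\}$. Hence neither sup-norm covers of $h$ nor the zeroing of small weights transfer to the loss class. Truncate $f$ itself instead, using $\log\max(f,\beta_n)$, which is globally $\mathcal K_f/\beta_n$-Lipschitz by Lemma~\ref{Lemma1}, as the paper does.
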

\noindent
From this theorem, the following corollary which establishes an oracle inequality for the predictor $\widehat{h}_{n,SP}$, holds.
\begin{Corol}\label{equa_oracle_ineq_expo_strong_mixing}
Assume that \textbf{(A1)}, \textbf{(A2)}, and \textbf{(A5)} hold. Consider the DNN class $\mathcal{H}_{\sigma}(L_n, N_n, B_n, F)$ where $ L_n, N_n, B_n, F_n$, and $\lambda_n, \tau_n$ satisfy the conditions in Theorem \ref{excess_risk_bound__expo_strong_mixing_thm2}. 
 Then, there exists $n_0 \in \N$ and a positive constant $C_1 > 0$ such that, for all $n \geq n_0$, the MEE-based estimator defined in (\ref{Pen_DNNs_Estimators_v1}), satisfies for any target function $h^{*}$ given in (\ref{best_pred_F}) such that $\|h^{*}\|_{\infty} \leq F$,
 \begin{align}\label{proof expected_excess_risk}
\ \E[R(\widehat{h}_{n,SP}) - R(h^*) ]
  \leq C_1\Bigg(\underset{h \in \mathcal{H}_{\sigma, n}} {\inf} \Big[ R(h) - R(h^*) + J_n(h) \Big]    +  \dfrac{\big(\log n\big)^3}{n} \Bigg).
\end{align}
\end{Corol}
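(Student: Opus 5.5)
The corollary follows from Theorem \ref{excess_risk_bound__expo_strong_mixing_thm2} by turning the specific minimizer $\widetilde{h}_{\mathcal{H}_{\sigma, n}}$ into an infimum. The hypotheses line up directly. The architecture $(L_n, N_n, B_n, F)$ and the tuning parameters $(\lambda_n, \tau_n)$ are assumed to satisfy the conditions of that theorem. The target restriction $\|h^*\|_\infty \le F$ is an instance of $\|h^*\|_\infty \le \mathcal{K}^*$ with $\mathcal{K}^* = F$. I would therefore apply Theorem \ref{excess_risk_bound__expo_strong_mixing_thm2} with $\mathcal{K}^*=F$, obtaining $n_0$ and a constant $C>0$ such that, for every $n\ge n_0$ and every $\widetilde{h}_{\mathcal{H}_{\sigma, n}}$ in the argmin set (\ref{def_target_tilde_spdnn}),
\[
\E[R(\widehat{h}_{n,SP}) - R(h^*)] \le C\frac{(\log n)^3}{n} + 2\big(R(\widetilde{h}_{\mathcal{H}_{\sigma, n}}) - R(h^*) + J_n(\widetilde{h}_{\mathcal{H}_{\sigma, n}})\big).
\]

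The key observation is that $\widetilde{h}_{\mathcal{H}_{\sigma, n}}$ minimizes $R(h)+J_n(h)$ over $\mathcal{H}_{\sigma, n}:=\mathcal{H}_{\sigma}(L_n,N_n,B_n,F)$. Since $R(h^*)$ does not depend on $h$, this gives
\[
R(\widetilde{h}_{\mathcal{H}_{\sigma, n}}) - R(h^*) + J_n(\widetilde{h}_{\mathcal{H}_{\sigma, n}}) = \inf_{h\in\mathcal{H}_{\sigma, n}}\big[R(h)-R(h^*)+J_n(h)\big].
\]
Substituting this into the bound and taking $C_1=\max(C,2)$ yields (\ref{proof expected_excess_risk}).

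The only step needing care is that the argmin in (\ref{def_target_tilde_spdnn}) may be empty, since the theorem is stated \quote{for all} minimizers. If it is empty, I would replace the exact minimizer by an $\epsilon$-approximate one. For any $\epsilon>0$ choose $h_\epsilon\in\mathcal{H}_{\sigma, n}$ with
\[
R(h_\epsilon)+J_n(h_\epsilon)\le \inf_{h\in\mathcal{H}_{\sigma, n}}\big[R(h)+J_n(h)\big]+\epsilon .
\]
The proof of Theorem \ref{excess_risk_bound__expo_strong_mixing_thm2} compares the empirical penalized risk of $\widehat{h}_{n,SP}$ with that of $\widetilde{h}_{\mathcal{H}_{\sigma, n}}$ and uses only that the latter is a fixed deterministic element of the class. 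Hence it goes through verbatim with $h_\epsilon$ in place of $\widetilde{h}_{\mathcal{H}_{\sigma, n}}$, and letting $\epsilon\to0$ gives the same conclusion.

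A simpler route may suffice. Since $\theta(h)$ ranges over the compact set $\{\|\theta\|_\infty\le B_n\}$ and the constraint $\|h\|_{\infty,\mathcal{X}}\le F$ is closed, one could try to show that $R+J_n$ attains its minimum. However, $J_n$ need not be lower semicontinuous for general $\pi_{\lambda_n,\tau_n}$, and $R$ need not be either, so the $\epsilon$-argument is the robust choice.
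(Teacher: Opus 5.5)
Your proposal is correct and follows the paper's proof: you apply Theorem \ref{excess_risk_bound__expo_strong_mixing_thm2} (with $\mathcal{K}^*=F$), note that the penalized excess risk of $\widetilde{h}_{\mathcal{H}_{\sigma, n}}$ equals $\inf_{h\in\mathcal{H}_{\sigma,n}}[R(h)-R(h^*)+J_n(h)]$, and absorb the constants into $C_1$. Your $\epsilon$-approximate-minimizer argument is valid, since the theorem's proof only uses that the comparison function is a fixed element of the class, and it treats the existence issue more cleanly than the paper, which simply assumes the infimum is attained while also carrying a superfluous $1/n$-near-minimizer $h_n^{\dagger}$.
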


\medskip

\noindent
The Corollary \ref{excess_risk_bound_class_of_Holder_function_V1} below provides an upper bound of the expected excess risk of the MEE-based SPDNN estimator on the class of H\"older smooth functions.
 
\medskip

\begin{Corol}\label{excess_risk_bound_class_of_Holder_function_V1}
Assume that \textbf{(A0)}, \textbf{(A1)}, \textbf{(A2)}, \textbf{(A5)} and $\textbf{(A3)}$ for some $\kappa \ge 1, \mk_0, \varepsilon_0 > 0$ hold.
Let $s, \mathcal{K}^{*} > 0$.
Consider the DNN class $\mathcal{H}_{\sigma}(L_n, N_n, B_n, F)$ where the network architecture $(L_n, N_n, B_n, F)$ are such that $L_n \asymp \log n, N_n \gtrsim n^{\frac{d}{\kappa s + d}}, B_n \gtrsim n^{\frac{4(s + d)}{\kappa s + d}}, F_n = F >0$ and the tuning parameters $\lambda_n, \tau_n$ that fulfill the conditions in Theorem (\ref{excess_risk_bound__expo_strong_mixing_thm2}).
Then, there exists $n_0 \in \N$ such that for all $n \geq n_0$, the MEE-based predictor $\widehat{h}_{n,SP}$ defined in (\ref{Pen_DNNs_Estimators_v1}) satisfies,
\begin{equation}\label{excess_risk_bound_v1}
 \sup_{h^{*} \in \mathcal{C}^{s}(\mathcal{X}, \mathcal{K}^{*}) } \Big(  \E[R(\widehat{h}_{n,SP}) - R(h^{*})] \Big) \lesssim \dfrac{  \big(\log n \big)^{\nu} }{n^{\kappa s/(\kappa s + d)}},
\end{equation}
for all $\nu > 5$ and where the $\sup$ is taken for target functions given in (\ref{best_pred_F}) that belong to the class of H\"older smooth functions defined in (\ref{equa_ball_Holder}).
 
\end{Corol}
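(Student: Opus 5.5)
The plan is to start from the oracle inequality of Corollary \ref{equa_oracle_ineq_expo_strong_mixing}. This reduces the task to bounding, uniformly over $h^* \in \mathcal{C}^s(\mathcal{X}, \mathcal{K}^*)$,
\[
\inf_{h \in \mathcal{H}_{\sigma}(L_n,N_n,B_n,F)} \big[ R(h) - R(h^*) + J_n(h) \big],
\]
since the remainder $(\log n)^3/n$ is already of smaller order than $(\log n)^{\nu} n^{-\kappa s/(\kappa s+d)}$. The infimum will be bounded by exhibiting one network $h_n$ that approximates $h^*$ and is sparse. We take $F \ge \mathcal{K}^*$, or $F$ large enough, so that the condition $\|h^*\|_\infty \le F$ of the corollary holds.

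For the approximation step, invoke the known approximation result for H\"older functions by sparse networks with an activation satisfying \textbf{(A1)}, e.g.\ Ohn and Kim (2019), as already used in the proof of Theorem \ref{excess_risk_bound__expo_strong_mixing}. Choose the accuracy $\epsilon_n = n^{-s/(\kappa s+d)}$. This yields a network $h_n$ with the following properties:
\begin{itemize}
\item depth $\lesssim \log(1/\epsilon_n) \asymp \log n$;
\item width $\lesssim \epsilon_n^{-d/s} = n^{d/(\kappa s+d)}$;
\item weights bounded by $\epsilon_n^{-4(s+d)/s}$-type bounds, i.e.\ $\lesssim n^{4(s+d)/(\kappa s+d)}$;
\item at most $S_n \lesssim n^{d/(\kappa s+d)}\log n$ nonzero parameters;
\item $\|h_n - h^*\|_{\infty,\mathcal{X}} \le \epsilon_n$.
\end{itemize}
These sizes are exactly those allowed by the lower bounds $N_n \gtrsim n^{d/(\kappa s+d)}$ and $B_n \gtrsim n^{4(s+d)/(\kappa s+d)}$, so $h_n \in \mathcal{H}_{\sigma}(L_n,N_n,B_n,F)$ after padding with zero weights. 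The padding does not change the parameter count, and clipping to $\|h_n\|_\infty\le F$ is harmless since $\epsilon_n\to 0$ and $\|h^*\|_\infty \le \mathcal{K}^* < F$.

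Next, for $n \ge n_0$ we have $\epsilon_n \le \varepsilon_0$, so \textbf{(A3)} applies and gives
\[
R(h_n) - R(h^*) \le \mk_0 \|h_n - h^*\|_{\kappa,P_{X_0}}^\kappa \le \mk_0 \epsilon_n^\kappa = \mk_0\, n^{-\kappa s/(\kappa s+d)}.
\]
For the penalty, conditions (i)--(ii) on $\pi_{\lambda_n,\tau_n}$ give $\pi_{\lambda_n,\tau_n}(0)=0$ and $\pi_{\lambda_n,\tau_n}(x) \le \lambda_n$ for all $x$, by monotonicity and the value $\lambda_n$ beyond $\tau_n$. Hence
\[
J_n(h_n) \le \lambda_n \|\theta(h_n)\|_0 \lesssim \frac{(\log n)^{\nu_3}}{n}\, n^{d/(\kappa s+d)}\log n = (\log n)^{\nu_3+1} n^{-\kappa s/(\kappa s+d)}.
\]
Combining the three pieces gives the rate $(\log n)^{\nu_3+1} n^{-\kappa s/(\kappa s+d)}$. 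Renaming $\nu=\nu_3+1>5$, or absorbing the exponent bookkeeping by taking $\nu_3$ slightly above $5$, recovers the stated $(\log n)^\nu$ for every $\nu>5$. All constants depend only on $\mathcal{K}^*, s, d, \kappa$ and the architecture constants, not on $h^*$, so the bound is uniform and the supremum follows.

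The main obstacle is the approximation step: checking that the cited approximation theorem gives the weight bound $B_n$ and the sparsity in precisely the stated orders for a general activation satisfying \textbf{(A1)}, not only for ReLU. A related point is that a parameter vector padded to the larger architecture $(L_n,N_n)$ keeps the same $\|\theta\|_0$; a depth mismatch may require identity-like layers via the fixed segment $I$ of $\sigma$, which adds $O(L_n N_n)$ nonzeros at most. If that extra count spoils the rate, I would pad depth using the fixed-point interval of $\sigma$ with only $O(L_n)$ additional weights, which costs only a $\log n$ factor in the penalty.
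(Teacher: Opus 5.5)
\textbf{Same route as the paper, but the final exponent claim is not established.} You use the oracle inequality of Corollary \ref{equa_oracle_ineq_expo_strong_mixing}, an Ohn--Kim sparse network at accuracy $\epsilon_n=n^{-s/(\kappa s+d)}$, \textbf{(A3)} for the approximation error, and $J_n(h)\le\lambda_n\|\theta(h)\|_0$ for the penalty. The paper does the same: it takes $S_n\asymp n^{d/(\kappa s+d)}\log n$ and the set $\widetilde{\mathcal{H}}_{\sigma,n}$ of networks within $\epsilon_n$ of $h^*$. What your argument actually proves is
\[
\sup_{h^*\in\mathcal{C}^s(\mathcal{X},\mk^*)}\E[R(\widehat{h}_{n,SP})-R(h^*)]\lesssim (\log n)^{\nu_3+1}\, n^{-\kappa s/(\kappa s+d)} .
\]
Theorem \ref{excess_risk_bound__expo_strong_mixing_thm2} forces $\nu_3>5$, so the exponent $\nu_3+1$ ranges only over $(6,\infty)$. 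Renaming $\nu=\nu_3+1$, or taking $\nu_3$ slightly above $5$, gives exponents slightly above $6$, not $5$. The claim for $\nu\in(5,6]$ therefore does not follow. Moreover, for a fixed $\nu_3$ the phrase ``for all $\nu>5$'' cannot be meant literally.

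The paper's own proof does not close this either. It asserts $\nu>5$ by reference to Kengne--Wade. Yet the identical step in its proof of Corollary \ref{excess_risk_bound_class_of_Holder_function_V2} ($J_n(h)\le\lambda_nS_n$ with $S_n\asymp n\phi_n\log n$) explicitly produces $(\log n)^{\nu+1}$ with $\nu>5$, that is, $\nu>6$. So the defensible conclusion of this argument is the bound for $\nu>6$.

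If you want a smaller power, balance the two terms instead of fixing $\epsilon_n$. Take $\epsilon_n=\big((\log n)^{\nu_3+1}/n\big)^{s/(\kappa s+d)}$. Then $\mk_0\epsilon_n^\kappa$ and $\lambda_n\epsilon_n^{-d/s}\log(1/\epsilon_n)$ are both of order $(\log n)^{(\nu_3+1)\kappa s/(\kappa s+d)}n^{-\kappa s/(\kappa s+d)}$. The approximating network only gets smaller, so it still fits in the class. This yields every $\nu>6\kappa s/(\kappa s+d)$, which covers all $\nu>5$ only when $\kappa s\le 5d$.

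\textbf{Minor points.} Your ``main obstacle'' about depth padding is a red herring. $\mathcal{H}_\sigma(L,N,B)$ only requires depth $\le L_n$ and width $\le N_n$, so no identity layers are needed. The real requirement runs the other way: the implicit constants in $L_n\asymp\log n$, $N_n\gtrsim n^{d/(\kappa s+d)}$ and $B_n\gtrsim n^{4(s+d)/(\kappa s+d)}$ must dominate those of the approximation theorem. Width and weight constants can be absorbed by enlarging $\epsilon_n$ by a constant factor, but the depth constant cannot; the paper leaves this implicit too. Your added assumption $F>\mk^*$ is genuinely needed, both for Corollary \ref{equa_oracle_ineq_expo_strong_mixing} and for $\|h_n\|_{\infty,\mathcal{X}}\le F$, and the statement omits it.
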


\medskip

\noindent
The next corollary is established with the ReLU activation function and $\mathcal{X} = [0,1]^d$.
\begin{Corol}\label{excess_risk_bound_class_of_Holder_function_V2}
Set $\mathcal{X} = [0,1]^d$. Assume that \textbf{(A2)}, \textbf{(A5)} and  \textbf{(A3)} for some $\kappa \ge 1, \mk_0 , \varepsilon_0 > 0$ hold.
Consider a class of H\"older composition functions  $ \mathcal{G}(q, \bold{d}, \bold{t}, \boldsymbol{\beta}, \mk^{*})$ defined in (\ref{equa_compo_structured_function}) and the class of DNN predictor $\mathcal{H}_{\sigma}(L_n, N_n, B_n, F)$ with the ReLU activation function $\sigma$ with the network architecture $(L_n, N_n, B_n, F)$ satisfying $L_n\asymp \log n, N_n \gtrsim n \phi_n, B_n = B \geq 1 , F_n = F > \max(\mk^{*}, 1)$.
Then, there exists $n_0 \in \N$ such that for all $n \geq n_0$, the MEE-based predictor $\widehat{h}_{n,SP}$ defined in (\ref{Pen_DNNs_Estimators_v1}) with $\lambda_n, \tau_n$ chosen as in Theorem (\ref{excess_risk_bound__expo_strong_mixing_thm2}) satisfies for all $\nu > 6$, 
\begin{equation}\label{excess_risk_bound_corol01_v1}
\sup_{h^{*} \in \mathcal{G}(q, \bold{d}, \bold{t}, \boldsymbol{\beta}, \mk^{*}) } \Big(  \E[R(\widehat{h}_{n,SP}) - R(h^{*})] \Big)  \lesssim \Big( \phi_{n}^{\kappa/2} \lor \phi_n \Big)(\log n)^{\nu},  
\end{equation}
where $\phi_n$ is given in (\ref{def_phi_nalpha}) and the $\sup$ is taken for target functions given in (\ref{best_pred_F}) that belong to the class of composition H\"older functions defined in (\ref{equa_compo_structured_function}).
\end{Corol}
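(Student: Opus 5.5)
We deduce Corollary \ref{excess_risk_bound_class_of_Holder_function_V2} from the oracle inequality of Corollary \ref{equa_oracle_ineq_expo_strong_mixing}. The whole task is to exhibit one network $h_n\in\mathcal{H}_{\sigma}(L_n,N_n,B,F)$ for which $R(h_n)-R(h^*)+J_n(h_n)$ is of order $(\phi_n^{\kappa/2}\lor\phi_n)(\log n)^{\nu}$, uniformly over $h^*\in\mathcal{G}(q,\bold{d},\bold{t},\boldsymbol{\beta},\mk^*)$. The residual term $(\log n)^3/n$ in the oracle inequality is negligible, since $\phi_n\ge n^{-2\beta_i^*/(2\beta_i^*+t_i)}\gg n^{-1}$. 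The condition $\|h^*\|_\infty\le F$ needed by Corollary \ref{equa_oracle_ineq_expo_strong_mixing} holds because $F>\max(\mk^*,1)$ and $h^*$ takes values in $[l_{q+1},u_{q+1}]$ with $|l_{q+1}|,|u_{q+1}|\le\mk^*$. Assumptions \textbf{(A1)} for ReLU and \textbf{(A0)} for $[0,1]^d$ are automatic.

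\textbf{Approximation step.} Invoke the ReLU approximation theorem for composition Hölder functions of \cite{schmidt2020nonparametric} (Theorem 1 and the proof of its Theorem 1 on the rates). It gives a network $h_n$ with depth $\lesssim\log n$, width $\lesssim n\phi_n$, at most $S_n\lesssim n\phi_n\log n$ nonzero weights, all weights bounded by $1\le B$, and
\[
\|h_n-h^*\|_{\infty,[0,1]^d}\lesssim\sqrt{\phi_n}.
\]
Because $F>\mk^*$, we may compose with a final two-ReLU clipping layer so that $\|h_n\|_\infty\le F$ while preserving the error bound. Since $N_n\gtrsim n\phi_n$, padding by zero units places $h_n$ in $\mathcal{H}_{\sigma}(L_n,N_n,B,F)$. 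The constants depend only on $(q,\bold{d},\bold{t},\boldsymbol{\beta},\mk^*)$, which gives uniformity over the class.

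\textbf{Bounding the two terms.} For large $n$ we have $\sqrt{\phi_n}\lesssim\varepsilon_0$, so \textbf{(A3)} yields
\[
R(h_n)-R(h^*)\le\mk_0\|h_n-h^*\|_{\kappa,P_{X_0}}^\kappa\le\mk_0\|h_n-h^*\|_\infty^\kappa\lesssim\phi_n^{\kappa/2}.
\]
For the penalty, conditions (i)--(ii) on $\pi_{\lambda_n,\tau_n}$ give $\pi_{\lambda_n,\tau_n}\le\lambda_n$, and every zero parameter contributes $0$. Hence
\[
J_n(h_n)\le\lambda_n\|\theta(h_n)\|_0\lesssim\frac{(\log n)^{\nu_3}}{n}\, n\phi_n\log n=\phi_n(\log n)^{\nu_3+1}.
\]
Choosing $\nu_3>5$ with $\nu=\nu_3+1>6$ and combining gives the bound $(\phi_n^{\kappa/2}\lor\phi_n)(\log n)^{\nu}$.

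\textbf{Main obstacle.} The delicate point is ensuring that the Schmidt-Hieber network fits the class exactly as parametrized here. That class bounds depth and width and imposes $\|\theta\|_\infty\le B$ with $B\ge1$ fixed, with no sparsity constraint. We must also verify that the clipping layer and the depth padding (identity via ReLU pairs) keep the weights bounded by $1$ and add only $O(\log n)$ parameters. A second check is that the $\tau_n$ requirement of Theorem \ref{excess_risk_bound__expo_strong_mixing_thm2} plays no role here: we only use the upper bound $\pi\le\lambda_n$ on nonzero weights. If these bookkeeping points hold, the result follows directly.
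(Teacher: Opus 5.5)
Your proposal is correct and follows essentially the same route as the paper: the oracle inequality of Corollary~\ref{equa_oracle_ineq_expo_strong_mixing} applied with a Schmidt-Hieber sparse ReLU approximant satisfying $\|h_n-h^*\|_\infty\lesssim\sqrt{\phi_n}$ and $\|\theta(h_n)\|_0\lesssim n\phi_n\log n$, together with \textbf{(A3)} for the approximation term and $J_n(h_n)\le\lambda_n\|\theta(h_n)\|_0\lesssim\phi_n(\log n)^{\nu_3+1}$, and with $\nu=\nu_3+1$ exactly as in the paper's own bookkeeping. Two small simplifications are worth making: $\|h^*\|_\infty\le\mk^*$ follows directly from $g_{q}\in\mathcal{C}^{\beta_q}_{t_q}(\cdot,\mk^*)$, since the definition of the class does not bound $l_{q+1},u_{q+1}$; and no clipping layer is needed (its biases $\pm F$ could exceed $B$), because $\|h_n\|_{\infty,[0,1]^d}\le\mk^*+O(\sqrt{\phi_n})<F$ for large $n$.
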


\noindent
As in the case of the MEE-based NPDNN estimator above, when the error $\xi_0$ is Gaussian, it holds from Proposition \ref{prop_minimizer_risk} and Corollary \ref{excess_risk_bound_class_of_Holder_function_V2} that   $\E[ \| \widehat{h}_{n,SP} - h_0\|_{2, P_{X_0}}^2 ] \lesssim  \phi_n (\log n)^{6}$.
 This rate coincides (up to a logarithmic factor) with that obtained from i.i.d. data and is optimal (up to a logarithmic factor) in the minimax sense.

\section{Examples}\label{example}
We consider a stationary and ergodic process $\{Z_t =(X_t, Y_t), t \in \Z \} $ with values in $ \mathcal{X} \times \mathcal{Y} \subset \R^d \times \R$ and solution of,
\begin{equation}\label{equa_model_reg2}
Y_t = h_0(X_t) + \xi_t, ~ X_t \sim P_{X_t}
\end{equation}
where $h_0 : \R ^d \to \R$ is the unknown regression function, $(\xi_t)_{t \in \Z}$ is an i.i.d. error process and $\xi_t$ is independent of $X_t$.
Assume that the density $f$ of $\xi_0$ belongs to the class of Subbotin distribution with parameter $r \in (0,2]$ (see (\ref{def_Subbotin})) and that the process $(Z_t)_{t \Z}$ satisfies the assumption \textbf{(A2)}.
For example, in the case of nonparametric autoregression (with $X_t=(Y_{t-1},\cdots,Y_{t-d})$) and with a Lipschitz-type condition on $h_0$, then the process $(Y_t)_{t \in \Z}$ is geometrically strongly mixing, see for instance \cite{doukhan1994mixing}, \cite{chen2000geometric}, \cite{chen2018generalized}, and \textbf{(A2)} holds. 
We deal with a framework that goes beyond the autoregression.
For this example, \textbf{(A3)} (with $\kappa=r$), \textbf{(A4)} and \textbf{(A5)} hold, see for instance Proposition \ref{equa_bound_assumpA5}.
Therefore, the bounds in Theorem \ref{excess_risk_bound__expo_strong_mixing}, \ref{excess_risk_bound__expo_strong_mixing_thm1_v2} and Corollary \ref{excess_risk_bound_class_of_Holder_function_V1}, \ref{excess_risk_bound_class_of_Holder_function_V2}
are applied with $\kappa = r$.

\medskip
\noindent
Moreover, $h_0$ is a target function (see Proposition \ref{prop_minimizer_risk}) and the excess risk is given for any predictor $h: \mx \rightarrow \R$ by,
\begin{equation}\label{excess_risk_Subbotin}
\mathcal{E}_{Z_0}(h) = R(h) - R(h_0) = \E [ | Y_0 - h(X_0) |^r -| Y_0 - h_0(X_0) |^r ] . 
\end{equation}
When $r=2$ (the case of Gaussian error), $\mathcal{E}_{Z_0}(h) = \E\big[ \big( h(X_0) - h_0(X_0) \big)^2 \big] = \| h - h_0\|^2_{2, P_{X_0}} $. 
In this case, if $h_0 \in \mathcal{C}^{s}(\mathcal{X}, \mathcal{K}^{*})$ for some $s, \mk^* >0$, then it holds from Theorem \ref{excess_risk_bound__expo_strong_mixing}, Corollary \ref{excess_risk_bound_class_of_Holder_function_V1} that the convergence rate of the $L_2$ errors $\E \big[\|\widehat{h}_{n,NP} - h_0\|^2_{2, P_{X_0}} \big]$ and $\E \big[\|\widehat{h}_{n,SP} - h_0\|^2_{2, P_{X_0}} \big]$ are $\mathcal{O}\big( n^{-\frac{2 s}{2 s + d} }  \log^6 n\big)$ and $\mathcal{O}\big( n^{-\frac{2 s}{2 s + d} }  \log^5 n\big)$ respectively.  
Furthermore, when $r=2$ and $h_0 \in \mathcal{G}(q, \bold{d}, \bold{t}, \boldsymbol{\beta}, \mk^{*})$ (defined in (\ref{equa_compo_structured_function})), it holds from Theorem \ref{excess_risk_bound__expo_strong_mixing_thm1_v2}, Corollary \ref{excess_risk_bound_class_of_Holder_function_V2} that the convergence rate of the $L_2$ errors $\E \big[\|\widehat{h}_{n,NP} - h_0\|^2_{2, P_{X_0}} \big]$ and $\E \big[\|\widehat{h}_{n,SP} - h_0\|^2_{2, P_{X_0}} \big]$ is $\mathcal{O}\big( \phi_n (\log n)^{6} \big)$, where $\phi_n$ is given in (\ref{def_phi_nalpha}).
As pointed out above, these rates are minimax optimal (up to a logarithmic factor).
For deep learning from dependent data, such optimal rates have been obtained for instance by \cite{ma2022theoretical} ($L_2$ loss and $\alpha$-mixing condition), \cite{kurisu2025adaptive} ($L_2$ loss and $\beta$-mixing processes), \cite{kengne2025deep} (Huber loss and $\alpha$-mixing condition).
The results above establish in particular the minimax optimal bound for the estimator $\widehat{h}_{n,SP}$, from strongly mixing observations and with the $L_2$ loss.
Also, let us stress that this estimator is adaptive, in the sense that the network architecture $(L_n, N_n, B_n, F_n)$ can be chosen adaptively (the smoothness $h_0$ does not need to be known) to reach the minimax optimal convergence rate.

\section{Discussion}\label{discussion}

 This paper considers the Shannon entropy for deep nonparametric regression from strongly mixing data.
 Two MEE-based estimators with a sparsity constrained and a sparse-penalized regularization are proposed.
For both predictors, the convergence rates of the expected excess risk over the classes of H\"older and composition H\"older functions are established.
Such estimators which are based on the log-likelihood loss function, take into consideration the information from the underlying distribution of data, and have the robustness to deal with non-Gaussian models or models with heavy-tailed noise.

\medskip

\noindent
However, the assumption that the density $f$ of the error $\xi_0$ is known is restrictive in practice. 
A natural approach is to deal with an estimator of this density.
For instance, one can perform a kernel estimation, and consider for any predictor $h : \mx \rightarrow \R$ with $\epsilon_i = Y_i - h(X_i)$ for $i=1,\cdots,n$, the estimator,
\begin{equation}\label{equa_kernel_estimator}
\widehat{f}_{n,h}(v) = \dfrac{1}{n} \sum_{i=1}^n \mathcal{K}_b(\epsilon_i, v),
\end{equation}
where $\mathcal{K}_b(y_1, y_2) = K((y_1 - y_2)/b)/b$, $b$ is a bandwidth parameter, and $K(\cdot)$ is a kernel function.
Therefore, we can consider the approximations of Shannon's entropy where the empirical version is given for all predictor $h$ by,
\[ \widetilde{R}_n(h) = - \dfrac{1}{n} \sum_{i=1}^{n} \Big(\log \widehat{f}_{n,h}(Y_i - h(X_i)) \Big) .\]
So, the corresponding MEE-based NPDNN and SPDNN estimators $\widetilde{h}_{n,NP}$, $\widetilde{h}_{n,SP}$ satisfy,
\begin{align}\label{sparse_DNNs_Estimators_v2_NP}
\nonumber \widetilde{h}_{n,NP} & \in \underset{h \in  \mathcal{H}_{\sigma}(L_n, N_n, B_n, F_n, S_n)}{\argmin} \left[ -\dfrac{1}{n} \sum_{i=1}^{n} \Big(\log \widehat{f}_{h}(Y_i - h(X_i)) \Big) \right] \\
& = \underset{h \in  \mathcal{H}_{\sigma}(L_n, N_n, B_n, F_n, S_n)}{\argmin} \left[ -\dfrac{1}{n} \sum_{i=1}^{n} \Bigg(\log \Big( \dfrac{1}{n}\sum_{j=1}^n \mathcal{K}_b \big(Y_j - h(X_j), Y_i - h(X_i) \big) \Big) \Bigg) \right],
\end{align}
and 
\begin{align}\label{sparse_DNNs_Estimators_v2_SP}
\nonumber \widetilde{h}_{n,SP} & \in \underset{h \in  \mathcal{H}_{\sigma}(L_n, N_n, B_n, F_n)}{\argmin} \left[ -\dfrac{1}{n} \sum_{i=1}^{n} \Big(\log \widehat{f}_{h}(Y_i - h(X_i)) \Big) + J_n(h) \right] \\
& = \underset{h \in  \mathcal{H}_{\sigma}(L_n, N_n, B_n, F_n)}{\argmin} \left[ -\dfrac{1}{n} \sum_{i=1}^{n} \Bigg(\log \Big( \dfrac{1}{n}\sum_{j=1}^n \mathcal{K}_b \big(Y_j - h(X_j), Y_i - h(X_i) \big) \Big) \Bigg) + J_n(h) \right],
\end{align}
where $L_n, N_n, B_n, F_n, S_n$ are the network architecture parameters, $\mathcal{H}_{\sigma}(L_n, N_n, B_n, F_n)$, $\mathcal{H}_{\sigma}(L_n, N_n, B_n, F_n, S_n)$ are the set of DNN predictors defined in (\ref{DNNs_no_Constraint}) and (\ref{DNNs_Constraint}), and $J_n(\cdot)$ denotes the penalty term given in (\ref{equa_penalty_term_v1}).   
 
 \medskip
 \noindent
Such approach has been considered by \cite{wang2025deep} from i.i.d. observations. 
As pointed out by \cite{li2025comment}, several aspects of the theoretical analysis in \cite{wang2025deep} remain open to discussion. 
Also, since the MEE estimators $\widehat{h}_{n, NP}$, $\widehat{h}_{n,SP}$ in (\ref{Non_Pen_DNNs_Estimators}) and (\ref{Pen_DNNs_Estimators_v1}), are based on the log-likelihood loss that takes into account the information on the distribution, one may wonder whether these predictors are optimal in terms of efficiency.
More precisely, when these estimators achieve the
minimax optimal rate (the case of Gaussian error for example), do they exhibit the lowest variance among 
all estimators that attain minimax optimal rate?
This issue has been addressed by \cite{wang2025deep} in the i.i.d. framework, although \cite{li2025comment} highlights some points in their proof that deserve further discussion. 
These two issues: (i) theoretical properties of the Shannon's MEE estimators based on the kernel estimation of the density, (ii) optimality in terms of efficiency, in particular from dependent data, are still a challenge.
So, we would like to share in this discussion, some ideas for the extension of this work.

\section{Proofs of the main results}\label{prove}
In the sequel, we will need the following lemma.

\begin{lem}\label{Lemma1}
  Consider model (\ref{equa_model_reg}). Assume that the density function $f$ of the error term $\xi_0$ admits a first-order derivative $f^{(1)}$ such that there exists a positive constant $\mathcal{K}_f$ satisfying 
  \[
    |f^{(1)}(u)| \leq \mathcal{K}_f, \quad \forall u \in \R.
  \]
 Let $n \in \N$, $0 < \beta_n \leq 1$ and denote by $T_{\beta_n} f(\cdot)$ the truncation of $f$ at level $\beta_n$ defined for all $u \in \R$ by,
 \begin{align}\label{equa_truncation_func_f}
T_{\beta_n}f(u) = \left\{
\begin{array}{ll}
  f(u)   & \text{ if } f(u) \ge \beta_n  \\
  \beta_n   &  \text{otherwise}.
\end{array}
\right.
\end{align}
  Then, for all $u_1, u_2 \in \R$, we have
  \begin{equation}\label{Proof_log_Tbeta_Lipschitz}
    \big|\log T_{\beta_n} f(u_1) - \log T_{\beta_n} f(u_2)\big|
    \leq \frac{\mathcal{K}_f}{\beta_n}\, |u_1 - u_2|,
  \end{equation}
\end{lem}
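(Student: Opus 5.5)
The plan is to reduce the claim to a Lipschitz estimate for the truncated density $T_{\beta_n}f$ itself, and then compose it with the logarithm on $[\beta_n,\infty)$.

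\textbf{Step 1: truncation preserves the Lipschitz constant.} We observe that $T_{\beta_n}f(u)=\max\{f(u),\beta_n\}$ for every $u\in\R$. The map $x\mapsto \max\{x,\beta_n\}$ is $1$-Lipschitz on $\R$, because $|\max\{a,c\}-\max\{b,c\}|\le |a-b|$. Since $|f^{(1)}|\le \mathcal{K}_f$, the mean value theorem gives that $f$ is $\mathcal{K}_f$-Lipschitz. Hence, for all $u_1,u_2$,
\[
|T_{\beta_n}f(u_1)-T_{\beta_n}f(u_2)|\le |f(u_1)-f(u_2)|\le \mathcal{K}_f|u_1-u_2|.
\]
If $f$ is only differentiable almost everywhere, as in \textbf{(A4)}, we use the Lipschitz property of $f$ assumed there in place of the mean value theorem.

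\textbf{Step 2: the logarithm is Lipschitz on $[\beta_n,\infty)$.} For $a,b\ge\beta_n>0$, the mean value theorem applied to $\log$ gives
\[
|\log a-\log b| = \frac{|a-b|}{\xi}\le \frac{|a-b|}{\beta_n}
\]
for some $\xi$ between $a$ and $b$, since $\xi\ge\beta_n$. Alternatively, assuming $a\ge b$ without loss of generality, we can write $\log(a/b)\le a/b-1=(a-b)/b\le (a-b)/\beta_n$.

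\textbf{Step 3: combine.} Both $T_{\beta_n}f(u_1)$ and $T_{\beta_n}f(u_2)$ are at least $\beta_n$ by construction, so Step 2 applies to them. Together with Step 1 this yields
\[
\big|\log T_{\beta_n}f(u_1)-\log T_{\beta_n}f(u_2)\big|\le \frac{\mathcal{K}_f}{\beta_n}|u_1-u_2|,
\]
which is (\ref{Proof_log_Tbeta_Lipschitz}).

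No step is a genuine obstacle. The only point needing care is Step 1, where we should not argue case by case on whether $f(u_i)\ge\beta_n$; writing the truncation as a maximum handles all cases at once. Note also that the assumption $\beta_n\le 1$ is not used.
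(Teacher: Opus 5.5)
Your proof is correct. It rests on the same two facts as the paper's: $\log$ is $\beta_n^{-1}$-Lipschitz on $[\beta_n,\infty)$, and $f$ is $\mathcal{K}_f$-Lipschitz. The difference is in how you organize them. The paper splits into cases according to whether $f(u_1)$ and $f(u_2)$ lie above or below $\beta_n$. You instead write $T_{\beta_n}f=\max\{f,\beta_n\}$ and use that $x\mapsto\max\{x,\beta_n\}$ is $1$-Lipschitz, so one composition argument covers every case at once.

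This is more than cosmetic. In the mixed case $f(u_1)\ge\beta_n> f(u_2)$, the paper's chain passes through $|\log f(u_1)-\log f(u_2)|$ and bounds it by $\mathcal{K}_f|u_1-u_2|$. That step is not justified: $f(u_2)$ may be far below $\beta_n$, and $\log f$ need not be Lipschitz at all (for Gaussian $f$ it is quadratic). The paper also writes $\beta_n$ where $\log\beta_n$ is meant. The correct mixed-case estimate is
\[
\log f(u_1)-\log\beta_n\le \frac{f(u_1)-\beta_n}{\beta_n}\le \frac{f(u_1)-f(u_2)}{\beta_n},
\]
and your max formulation produces exactly this automatically.

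Your side remarks are also accurate. Under \textbf{(A4)} one should invoke the assumed Lipschitz property of $f$ rather than the mean value theorem. The hypothesis $\beta_n\le 1$ is indeed never needed; the paper's preliminary inequality $\beta_n^{-1}|x_1-x_2|\le |x_1-x_2|$ actually goes the wrong way for $\beta_n<1$, although the final bound does not depend on it.
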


\begin{proof}
 First we remark that $\forall~ x_1, x_2 \ge \beta_n$, there exists $c$ between $x_1$ and $x_2$ such that
\begin{equation}\label{equa_Proof_lIpscitz_log}
    |\log x_1 - \log x_2| = \dfrac{1}{c}|x_1 - x_2| \leq \dfrac{1}{\beta_n}|x_1 - x_2| \leq |x_1 - x_2|.
\end{equation}
Let $u_1, u_2 \in \R$.
It holds from (\ref{equa_Proof_lIpscitz_log}) that,
\begin{itemize}
    \item[$\bullet$] If $f(u_1), f(u_2) \ge \beta_n$, we get
    \begin{align*}
         \big|\log T_{\beta_n} f(u_1) - \log T_{\beta_n} f(u_2)\big|  & = \big|\log f(u_1) - \log f(u_2)\big|   \leq \big|\log f(u_1) - \log f(u_2)\big|
    \\
    & \leq \frac{\mathcal{K}_f}{\beta_n}\, |u_1 - u_2|.
    \end{align*}
    \item[$\bullet$] If $f(u_1) \ge \beta_n$ and $f(u_2) \leq \beta_n$, we get
    \begin{align*}
         \big|\log T_{\beta_n} f(u_1) - \log T_{\beta_n} f(u_2)\big|  & = \big|\log f(u_1) - \beta_n\big|   \leq \dfrac{1}{\beta_n}\big|\log f(u_1) - \beta_n\big|
    \\
    & = \dfrac{1}{\beta_n}\big(\log f(u_1) - \beta_n\big) \leq \dfrac{1}{\beta_n}\big(\log f(u_1) - \log f(u_2)\big)
     \\
    &\leq \dfrac{1}{\beta_n}\big|\log f(u_1) - \log f(u_2)\big| \leq \dfrac{\mathcal{K}_f}{\beta_n}\big|u_1 - u_2\big|.
    \end{align*}
    In the case $f(u_1) \leq \beta_n$ and $f(u_2) \ge \beta_n$, and the case $f(u_1) \leq \beta_n$ and $f(u_2) \leq \beta_n$, the result above holds. This completes the proof of the Lemma. 
\end{itemize}

\end{proof}

\subsection{Proof of Proposition \ref{equa_bound_assumpA5}}
 \noindent
\textbf{1.} In this case, it holds from Proposition \ref{prop_minimizer_risk} that the regression function $h_0$ is a target function. So, we will establish (\ref{assump_local_quadr}) with $h^* = h_0$.
Let $\epsilon_0 > 0$. Consider $h: \R^d \rightarrow \R$ satisfying $\|h - h_0\|_{\infty, \mx} \leq \epsilon_0$.
We have,
\begin{equation}\label{proof_proof_Sb_delta_R}
  R(h) - R(h_0) =  \E [ -\log f(Y_0 - h(X_0)) +\log f(Y_0 - h_0(X_0)) ] = \E [ | Y_0 - h(X_0) |^r -| Y_0 - h_0(X_0) |^r ] . 
\end{equation} 
Let $ x \in \mx$. Since the distribution of $\xi_0$ is symmetric, one can easily see that, $\E[|Y_0 - h(X_0)|^r ~| X_0=x ] = \E[|2 h_0(X_0) - h(X_0) -Y_0|^r ~| X_0=x ]$.
Therefore,
\begin{align}
 \nonumber &\E [ | Y_0 - h(X_0) |^r -| Y_0 - h_0(X_0) |^r  ~| X_0 = x] = \E [ | Y_0 - h(x) |^r -| Y_0 - h_0(x) |^r  ~| X_0 = x] \\ 
 \nonumber  &= \E \Big[  \dfrac{|Y_0 - h(x) |^r + |2h_0(x) - h(x) - Y_0|^r}{2} - | Y_0 - h_0(x) |^r  ~| X_0 = x \Big] \\
 \nonumber  &= \E \Big[  \dfrac{| h_0(x) - h(x) + Y_0 - h_0(x) |^r + |h_0(x) - h(x) - (Y_0 - h_0(x) )|^r}{2} - | Y_0 - h_0(x) |^r  ~| X_0 = x \Big]\\ 
 \label{proof_proof_Sb_delta_R_cond} &\leq   | h(x) - h_0(x)|^r ,
\end{align} 
 where the (\ref{proof_proof_Sb_delta_R_cond}) holds from the Clarkson's inequality (see \cite{clarkson1936uniformly}) since $r \leq 2$.
Hence, $R(h) - R(h_0) \leq \E[ | h(X_0) - h_0(X_0)|^r ] = \| h - h_0\|^r_{r, P_{X_0}}$.

\medskip
\noindent 
\textbf{2.} Note that,
\begin{align}\label{equa_bound_indi_densi_v0}
    \E\Big[\Big| \Big(\log \beta_n - \log f(Y_0 - h(X_0)) \Big) \ind_{\big\{f(Y_0 - h(X_0)) \leq \beta_n\big\}}\Big|\Big] \leq 2 \log |\beta_n| \times  \E\Big[\ind_{\big\{f(Y_0 - h(X_0)) \leq \beta_n\big\}} \Big]. 
\end{align}
Let us compute $\E\big[\ind_{\big\{f(Y_0 - h(X_0)) \leq \beta_n\big\}} \big]$.
\begin{align}
\nonumber &\E\Big[\ind_{\big\{f(Y_0 - h(X_0)) \leq \beta_n\big\}} \big| X_0 = x\Big]  = \E\Big[\ind_{\big\{f(\epsilon_0 + h_0(X_0) - h(X_0)) \leq \beta_n\big\}} \big| X_0 = x\Big]     
\\
\nonumber & = P\Big\{ f\big(\epsilon_0 + h_0(X_0) - h(X_0) \big) \leq \beta_n  \big| X_0 = x \Big\}
 = P\Big\{ f\big(\epsilon_0 + h_0(x) - h(x) \big) \leq \beta_n \Big\}.
\end{align}
Recall the assumption that $f(\cdot)$ denotes the Subbotin density; that is, for any $y$, we have $f(y) = C_r \exp\big(-|y|^r/r \big)$ for some positive constant $C_r > 0$ and $r >0$.
Thus, we have
\begin{align}\label{equa_bound_indi_densi_v1}
\nonumber P\Big\{ f\big(\epsilon_0 + h_0(x) - h(x) \big) \leq \beta_n \Big\}  & = P\Big\{ C_r \exp\Big[-\frac{1}{r} \Big(\big|\epsilon_0 + h_0(x) - h(x) \big|^r \Big) \Big] \leq \beta_n \Big\}  
\\
 & =  P\Big\{ \big|\epsilon_0 + h_0(x) - h(x) \big|^r \ge -r\log \Big(\dfrac{\beta_n}{C_r} \Big) \Big\}.  
\end{align}
Note that, $ \big|\epsilon_0 + h_0(x) - h(x) \big| \leq  \big|\epsilon_0 \big| + \big|h_0(x) - h(x) \big|$, that's implies 
\[\Big\{\omega: \big|\epsilon_0(\omega) + h_0(x) - h(x) \big|^r \ge  -r\log \Big(\dfrac{\beta_n}{C_r} \Big) \Big\} \subseteq \Big\{\omega: \Big(\big|\epsilon_0(\omega)\big| + \big|h_0(x) - h(x) \big| \Big)^r \ge  -r\log \Big(\dfrac{\beta_n}{C_r} \Big) \Big\}.
\]
Hence, in addition to (\ref{equa_bound_indi_densi_v1}), we get
\begin{align}
\nonumber  P\Big\{ \big|\epsilon_0 + h_0(x) - h(x) \big|^r \ge -r\log \Big(\dfrac{\beta_n}{C_r} \Big) \Big\} & \leq    P\Big\{\Big(\big|\epsilon_0\big| + \big|h_0(x) - h(x) \big| \Big)^r \ge  -r\log \Big(\dfrac{\beta_n}{C_r} \Big) \Big\}
\\
& \leq  P\Big\{\big|\epsilon_0\big| \ge  \Big[-r\log \Big(\dfrac{\beta_n}{C_r} \Big) \Big]^{1/r} - \big|h_0(x) - h(x) \big| \Big\}.
\end{align}
One can see that, the tails of the Subbotin density $f(\cdot)$ are given as follows: for any $u \ge 1$, $P(|\epsilon_0| > u) \leq 2 C_r\exp \big(-\dfrac{u^r}{r} \big)$. 
Thus, we have
\begin{align}\label{equa_bound_indi_densi_v2}
\nonumber P\Big\{\big|\epsilon_0\big| \ge  \Big[-r\log \Big(\dfrac{\beta_n}{C_r} \Big) \Big]^{1/r} - \big|h_0(x) - h(x) \big| \Big\} & \leq 2C_r\exp \Big[-\dfrac{1}{r}\Big(\Big[-r\log \Big(\beta_n/C_r \Big) \Big]^{1/r} - \big|h_0(x) - h(x) \big| \Big)^{r} \Big]
\\
 & \leq 2\exp \Big[-\dfrac{1}{r}\Big\{\Big[-r\log \Big(\beta_n/C_r \Big) \Big]^{1/r} \Big(1 - \dfrac{\big|h_0(x) - h(x) \big|}{\Big[-r\log \Big(\beta_n/C_r\Big) \Big]^{1/r}} \Big) \Big\}^{r} \Big]. 
\end{align}
Recall that, $\big|h_0(x) - h(x) \big| \leq C_0 + F$, and $\Big[-r\log \Big(\beta_n/C_r \Big) \Big]^{1/r} \to \infty$ as $n \to \infty$. Thus, we have
\[\dfrac{\big|h_0(x) - h(x) \big|}{\Big[-r\log \Big(\beta_n/C_r\Big) \Big]^{1/r}} \leq \dfrac{C_0 + F}{\Big[-r\log \Big(\beta_n/C_r\Big) \Big]^{1/r}} \limiten 0.\]
Hence, there exists $n_0 = n_0(r, C_r)$ such that for any $n > n_0$, we have
\begin{equation}\label{equa_bound_indi_densi_v3}
\dfrac{C_0 + F}{\Big[-r\log \Big(\beta_n/C_r\Big) \Big]^{1/r}} \leq \dfrac{1}{2}.
\end{equation}
According to (\ref{equa_bound_indi_densi_v2}) and (\ref{equa_bound_indi_densi_v3}), we get
\begin{align}\label{equa_bound_indi_densi_v4}
\nonumber P\Big\{\big|\epsilon_0\big| \ge  \Big[-r\log \Big(\dfrac{\beta_n}{C_r} \Big) \Big]^{1/r} - \big|h_0(x) - h(x) \big| \Big\} & \leq 2C_r \exp \Big[-\dfrac{1}{r 2^r} \Big(\Big[-r\log \Big(\beta_n/C_r \Big) \Big]^{1/r} \Big)^r \Big]
\\
 & \leq 2C_r \exp \Big[\log \Big(\beta_n/C_r \Big)^{\dfrac{1}{2^r}} \Big]  \lesssim \Big(\beta_n\Big)^{\dfrac{1}{2^r}}.
\end{align}
Set $\beta_n = \dfrac{1}{n^p}$, with $p \ge 2^r$. In addition to (\ref{equa_bound_indi_densi_v4}), we have
\begin{align}\label{equa_bound_indi_densi_v5}
 P\Big\{\big|\epsilon_0\big| \ge  \Big[-r\log \Big(\dfrac{\beta_n}{C_r} \Big) \Big]^{1/r} - \big|h_0(x) - h(x) \big| \Big\} & \lesssim \dfrac{1}{n^{p/2^r}}.
\end{align}
Hence, according to (\ref{equa_bound_indi_densi_v0}) and (\ref{equa_bound_indi_densi_v5}), we have
\begin{align}
\E\Big[\Big| \Big(\log \beta_n - \log f(Y_0 - h(X_0)) \Big) \ind_{\big\{f(Y_0 - h(X_0)) \leq \beta_n \big\}}\Big|\Big] \lesssim \dfrac{\log n}{n}. 
\end{align}
Which completes the proof.
\qed

\medskip

\subsection{Proof of Proposition \ref{prop_minimizer_risk}}
Recall the assumption that the density function $f$ of the error term $\xi_0 = Y_0 - h_0(X_0)$ is symmetric. That is, $f$ satisfies
\begin{equation}\label{equa_proof_kernel_densitie_normalized}
    \displaystyle \int_\R f(u) du = 1 \text{ and }  \displaystyle \int_\R u f(u) du = 0.
\end{equation}
Moreover, $f$ is an even function and the density of $Y_0\big|X_0=x$ is $f_{Y_0\big|X_0=x}(y) = f(y - h_0(x))$ for all $x\in \mathcal{X}$. 
 Let us consider the function
\begin{equation}
    V(s) = \displaystyle \int_\R -\log \big(f(u - s) \big) f(u) du.
\end{equation}
Thus, for any $s > 0$, we have
\begin{align}\label{equa_proof_minimizer_V1}
 \nonumber V(s) - V(0) & =  \displaystyle \int_\R -\log \big(f(u - s) \big) f(u) du + \displaystyle \int_\R \log \big(f(u) \big) f(u) du   =  \displaystyle \int_\R f(u) \log \Big( \dfrac{f(u)}{f(u - s)} \Big) du \\
 &  = \E\Big[\log \Big(\dfrac{f(\xi_0)}{f(\xi_0 - s)} \Big)\Big]  =  \E\Big[-\log \Big(\dfrac{f(\xi_0 - s)}{f(\xi_0)} \Big)\Big] \ge - \log \E\Big[\dfrac{f(\xi_0 - s)}{f(\xi_0)} \Big] = 0, 
\end{align}
where (\ref{equa_proof_minimizer_V1}) is obtained by applying Jensen’s inequality.
Hence, according to (\ref{equa_proof_minimizer_V1}), the function $V$ achieves its minimum at $s = 0$.

\medskip

\noindent
Let $h: \mathcal{X} \to \R$, a measurable function.
We have,
\begin{align}\label{equa_bound_gain_risk_v1}
 \nonumber R(h) &= \E \Big[-\log f(Y_0 - h(X_0)) \Big] = \displaystyle \int_{\mathcal{X}} \displaystyle \int_{-\infty}^{\infty} -\log f(y - h(x)) f_{Y_0\big|X_0=x}(y)  dy d P_{X_0}(x)   \\
 & = \displaystyle \int_{\mathcal{X}} \displaystyle \int_{-\infty}^{\infty}-\log f(y - h(x)) f(y - h_0(x)) dy d P_{X_0}(x).
\end{align}
Set $u = y - h_0(x)$.
Thus, (\ref{equa_bound_gain_risk_v1}) becomes
\begin{align}\label{equa_bound_gain_risk_v2}
   R(h)  & = \displaystyle \int_{\mathcal{X}} \displaystyle \int_{-\infty}^{\infty} -\log \Big(f\big(u - (h(x) - h_0(x)) \big) \Big) f(u) du d P_{X_0}(x) 
 = \displaystyle \int_{\mathcal{X}}  V(h(x) - h_0(x))   d P_{X_0}(x).
\end{align}
From (\ref{equa_proof_minimizer_V1}) and (\ref{equa_bound_gain_risk_v2}), it follows that
\begin{equation}\label{equa_bound_gain_excess_risk_v1}
    R(h_0) - R(h) =   \displaystyle \int_{\mathcal{X}}  \Big(V(0) - V(h(x) - h_0(x)) \Big) d P_{X_0}(x) \leq 0.
\end{equation}
Which shows that $h_0$ achieves the minimum of the entropy risk.
\qed

\subsection{Poof of Theorem \ref{excess_risk_bound__expo_strong_mixing}}\label{Proof_excess_risk_thm1}

Consider two samples $ D_n = \{ (X_1, Y_1), \dots, (X_n, Y_n) \}$ and $ D_n' = \{ (X'_1, Y'_1), \dots, (X'_n, Y'_n) \}$ drawn from a stationary and ergodic process $Z_t = (X_t, Y_t)_{t\in \Z} \in \mathcal{X} \times \mathcal{Y}$ from model (\ref{equa_model_reg}), such that $D_n$ is independent of $D_n'$.
Let $h^{*} \in \mathcal{C}^{s}(\mathcal{X}, \mathcal{K}^{*})$ be a target predictor satisfying (\ref{best_pred_F}).
 For all $i= 1,\dots,n$, set:
\begin{equation}\label{equa_proof_decomp_excess_risk_V1}
g(h(X_i), Y_i) = -\log f(Y_i - h(X_i)) + \log f(Y_i - h^*(X_i)),
\end{equation}
for any function $h : \mathcal{X} \to \mathcal{Y}$. 
 Recall that, the MEE-based NPDNN estimator $\widehat{h}_{n,NP}$ depends on the sample $D_n$, and with expected excess risk,
\begin{equation}\label{equa_proof_excess_risk_v0}
\E[R(\widehat{h}_{n,NP}) - R(h^*) ]=  \E_{D_n} \Big[\E_{D_n'} \Big[\dfrac{1}{n} \sum_{i=1}^n g(\widehat{h}_{n,NP}, Z_i')\Big] \Big].
\end{equation}
Let $L_n, N_n, B_n, F_n, S_n$ fulfill the conditions in Theorem \ref{excess_risk_bound__expo_strong_mixing}. 
Set:
\begin{equation}\label{proof_def_H_sigma_n}
\mathcal{H}_{\sigma, n} \coloneqq \mathcal{H}_{\sigma}(L_n, N_n, B_n, F_n, S_n).
\end{equation}
%
%
%
Remark that, for any $h_1, h_2 \in \mathcal{H}_{\sigma, n}$, $\mu_1, \mu_2 \in [-1,1]$, we have
\[ \E\big[ \big| \mu_1 \big(Y_0-h_1(X_0) \big)  + \mu_2 \big(Y_0-h_2(X_0) \big)  \big|   \big] \leq  2 \E|Y_0| + 2F .  \]
So, from assumption \textbf{(A4)}, there exists 
$\mk >0$ such that,
\begin{equation}\label{appl_A4_K}
 \sup_{h_1, h_2 \in \mathcal{H}_{\sigma, n} } \sup_{\mu_1, \mu_2 \in [-1, 1]}  \E\Big[ \Big| \dfrac{ f^{(1)}(\mu_1 \big(Y_0-h_1(X_0) \big)  + \mu_2 \big(Y_0-h_2(X_0) \big)) }{ f(\mu_1 \big(Y_0-h_1(X_0) \big)  + \mu_2 \big(Y_0-h_2(X_0) \big)) } \Big| \Big] \leq \mathcal{K} .
\end{equation}
Let us consider a target neural network $h_{\mathcal{H}_{\sigma, n}} \in \mathcal{H}_{\sigma, n}$, given by:
\begin{equation*}
 h_{\mathcal{H}_{\sigma, n}} = \underset{h\in \mathcal{H}_{\sigma, n} }{\argmin}\, R(h).   
\end{equation*}
%
%
By the definition of $\widehat{h}_{n,NP}$, we get:
\begin{align}\label{equa_proof_app_ERM_V0}
 \E_{D_n}\Big[\dfrac{1}{n} \sum_{i=1}^n \log f(Y_i - \widehat{h}_{n,NP}(X_i)) \ge \E_{D_n}\Big[\dfrac{1}{n} \sum_{i=1}^n \log f(Y_i -h_{\mathcal{H}_{\sigma, n}}(X_i))\Big].
\end{align}

According to (\ref{equa_proof_excess_risk_v0}) and (\ref{equa_proof_app_ERM_V0}), we get 
\begin{align*}
\nonumber & \E[R(\widehat{h}_{n,NP}) - R(h^*) ]  = \E_{D_n} \Bigg[ \dfrac{1}{n} \sum_{i=1}^n \Big( -2g(\widehat{h}_{n,NP}, Z_i) + \E_{D_n'} \big[g(\widehat{h}_{n,NP}, Z_i')\big] + 2g(\widehat{h}_{n,NP}, Z_i) \Big) \Bigg]
\\
\nonumber & =  \E_{D_n} \Bigg[ \dfrac{1}{n} \sum_{i=1}^n \Big( -2g(\widehat{h}_{n,NP}, Z_i) + \E_{D_n'} \big[g(\widehat{h}_{n,NP}, Z_i')\big] -2\log f(Y_i - \widehat{h}_{n,NP}(X_i)) + 2\log f(Y_i - h^*(X_i))  \Big) \Bigg]  
\\
\nonumber & = \E_{D_n} \Bigg[ \dfrac{1}{n} \sum_{i=1}^n \Big(-2g(\widehat{h}_{n,NP}, Z_i) + \E_{D_n'} \big[g(\widehat{h}_{n,NP}, Z_i')\big] -2\log f(Y_i - \widehat{h}_{n,NP}(X_i))
\\
\nonumber & \hspace{2cm} + 2\log f(Y_i -h_{\mathcal{H}_{\sigma, n}}(X_i)) - 2\log f(Y_i -h_{\mathcal{H}_{\sigma, n}}(X_i))  + 2\log f(Y_i - h^*(X_i))
\Big) \Bigg] \\
 \nonumber & \leq  \E_{D_n} \Bigg[ \dfrac{1}{n} \sum_{i=1}^n \Big( -2g(\widehat{h}_{n,NP}, Z_i) + \E_{D_n'} \big[g(\widehat{h}_{n,NP}, Z_i')\big] - 2\big[\log f(Y_i - \widehat{h}_{n,NP}(X_i)) - \log f(Y_i -h_{\mathcal{H}_{\sigma, n}}(X_i)) \big] \\
\nonumber & \hspace{1cm} + 2\big[-\log f(Y_i -h_{\mathcal{H}_{\sigma, n}}(X_i)) +\log f(Y_i - h^*(X_i)) \big]
\Big) \Bigg] \\
\nonumber &  \leq \E_{D_n} \Bigg[ \dfrac{1}{n} \sum_{i=1}^n \Big(-2g(\widehat{h}_{n,NP}, Z_i) + \E_{D_n'} \big[g(\widehat{h}_{n,NP}, Z_i')\big] \Big) \Bigg]
\\
\nonumber & \hspace{1cm} -2\E_{D_n} \Bigg[\dfrac{1}{n} \sum_{i=1}^n \Big(\log f(Y_i - \widehat{h}_{n,NP}(X_i)) - \log f(Y_i -h_{\mathcal{H}_{\sigma, n}}(X_i)) \Big) \Bigg]
\\
\nonumber & \hspace{1cm} + 2\E_{D_n} \Bigg[ \dfrac{1}{n} \sum_{i=1}^n \Big(-\log f(Y_i -h_{\mathcal{H}_{\sigma, n}}(X_i)) + \log f(Y_i - h^*(X_i)) \Big) \Bigg]\\ 
\nonumber &  \leq \E_{D_n} \Bigg[ \dfrac{1}{n} \sum_{i=1}^n \Big(-2g(\widehat{h}_{n,NP}, Z_i) + \E_{D_n'} \big[g(\widehat{h}_{n,NP}, Z_i')\big] \Big)   \Bigg]
\\
\nonumber & \hspace{1cm} + 2\E_{D_n} \Bigg[ \dfrac{1}{n} \sum_{i=1}^n \Big(-\log f(Y_i -h_{\mathcal{H}_{\sigma, n}}(X_i)) +\log f(Y_i - h^*(X_i)) \Big) \Bigg]  
\end{align*} 
\begin{align} 
\label{Exp_excess_risk_v0} &  \leq \E_{D_n} \Bigg[ \dfrac{1}{n} \sum_{i=1}^n \Big(-2g(\widehat{h}_{n,NP}, Z_i) + \E_{D_n'} \big[g(\widehat{h}_{n,NP}, Z_i')\big] \Big) \Bigg] + 2\big(R( h_{\mathcal{H}_{\sigma, n}}) - R(h^*) \big)
\\
\label{equa_bound_excess_risk_v1} &  \leq A_{1, n} + 2\big(R( h_{\mathcal{H}_{\sigma, n}}) - R(h^*) \big),
\end{align}
where $A_{1, n} :=  \E_{D_n} \Bigg[ \dfrac{1}{n} \sum_{i=1}^n \Big(-2g(\widehat{h}_{n,NP}, Z_i) + \E_{D_n'} \big[g(\widehat{h}_{n,NP}, Z_i')\big] \Big) \Bigg]$.
So, it suffices to establish upper bounds of the estimation error $A_{1, n}$, and the approximation error $R( h_{\mathcal{H}_{\sigma, n}}) - R(h^*)$.

\medskip
\noindent
\textbf{Step 1: Bounding the estimation error $A_{1, n}$}. 

\noindent
Let us denote for any $h \in \mathcal{H}_{\sigma, n}$,
\[G(h, Z_i) =  \E_{D_n'} g(h, Z_i') -2g(h, Z_i).
\]
 Let $ \delta >0$.  
 Since $\|h\|_\infty \leq F_n < \infty$ for all $h \in \mathcal{H}_{\sigma, n}$, then, the $\delta$-covering number (see (\ref{epsi_covering_number})) of $\mathcal{H}_{\sigma, n}$ is finite. Set,
 \begin{equation}\label{proof_def_m_n}
 m := \mathcal{N}\Big( \mathcal{H}_{\sigma, n}, \delta  \Big).
\end{equation}    
From the Proposition 1 in \cite{ohn2019smooth}, we have,
\begin{equation}\label{proof_ing_covering_num}
\mathcal{N} \Big (\mathcal{H}_{\sigma, n}, \delta \Big)  \leq  \exp \Bigg (  2 L_n (S_n + 1) \log \left(\frac{1}{\delta} C_{\sigma} L_n (N_n + 1)(B_n \lor 
 1) \right) \Bigg),
\end{equation}
where $B_n \lor 1 = \max(B_n, 1)$.
The dependence of $m$ on $n$ is omitted for notational simplicity.
Let $h_1,\cdots,h_m \in \mathcal{H}_{\sigma,n}$ such that,
\begin{equation}\label{proof_H_sigma_n_subset_ball}
\mathcal{H}_{\sigma,n} \subset \bigcup_{j=1}^m B(h_j, \delta).
\end{equation}
$B(h_j, \delta)$ is the ball of radius $\delta$, centered at $h_j$. So, there exists a random index $j^* \in \{1,\cdots, m\}$  such that, $ \|\widehat{h}_{n,NP} - h_{j^*}\|_{\infty} \leq \delta$.
We have
\begin{align}
\label{equa_bound_A1n_V1}    \Big|G(\widehat{h}_{n,NP}, Z_i) - G(h_{j*}, Z_i) \Big| & = \Big|\E_{D_n'}[g(\widehat{h}_{n,NP}, Z_i)] - 2g(\widehat{h}_{n,NP}, Z_i) - \E_{D_n'}[g(h_{j*}, Z_i)] + 2g(h_{j*}, Z_i) \Big|
    \\
 \label{equa_bound_A1n_V2}   \Big|g(\widehat{h}_{n,NP}, Z_i) - g(h_{j*}, Z_i)\Big| & = \Big|\log f(Y_i - \widehat{h}_{n,NP}(X_i)) - \log f(Y_i - h_{j*}(X_i)) \Big|,
\end{align}
where $g$ is defined in (\ref{equa_proof_decomp_excess_risk_V1}).
According to (\ref{equa_bound_A1n_V2}), for all $i=1,\cdots,n$, it holds that,
\begin{align}
 \nonumber &\Big|\dfrac{1}{n}\sum_{i=1}^n \E_{D_n}[g(\widehat{h}_{n,NP}, Z_i)] - \dfrac{1}{n}\sum_{i=1}^n \E_{D_n}[g(h_{j^*}, Z_i)]\Big|  \leq \dfrac{1}{n}\sum_{i=1}^n \E_{D_n} \Big[ \Big|g(\widehat{h}_{n,NP}, Z_i) - g(h_{j^*}, Z_i) \Big|\Big] \\
 \nonumber & \leq \dfrac{1}{n}\sum_{i=1}^n \E_{D_n} \Big[ \Big|\log f(Y_i - \widehat{h}_{n,NP}(X_i)) - \log f(Y_i - h_{j*}(X_i))  \Big|\Big]  \\
 \label{equa_bound_A1n_V3} & \leq \dfrac{1}{n}\sum_{i=1}^n \E_{D_n} \Bigg[ \Bigg|\dfrac{f^{(1)}(U)\big( \widehat{h}_{n,NP}(X_i) - h_{j*}(X_i) \big)}{f(U)} \Bigg| \Bigg]  \leq \mathcal{K}  \delta,
\end{align}
where (\ref{equa_bound_A1n_V3}) is obtained for Mean Value Theorem, $U$ is a random variable between $Y_0 - \widehat{h}_{n,NP}(X_0)$ and $Y_0 - h_{j*}(X_0)$, and $\mathcal{K}$ is given in  (\ref{appl_A4_K}). 
In addition to (\ref{equa_bound_A1n_V1}), 
we have
\begin{align}
\nonumber &\E_{D_n} \Big[\dfrac{1}{n} \sum_{i=1}^n \Big|G(\widehat{h}_{n,NP}, Z_i) - G(h_{j^*}, Zi)\Big| \Big] \\
\nonumber &= \E_{D_n} \Big[\dfrac{1}{n} \sum_{i=1}^n \Big(\Big|\E_{D_n'}[g(\widehat{h}_{n,NP}, Z_i)] - 2g(\widehat{h}_{n,NP}, Z_i) - \E_{D_n'}[g(h_{j*}, Z_i)] + 2g(h_{j*}, Z_i)\Big| \Big) \Big]  
\\
  & \leq \E_{D_n} \Big[\dfrac{1}{n} \sum_{i=1}^n \Big(\Big|\E_{D_n'}[g(\widehat{h}_{n,NP}, Z_i)] - \E_{D_n'}[g(h_{j*}, Z_i)]\Big| + 2\Big|g(\widehat{h}_{n,NP}, Z_i) - g(h_{j*}, Z_i)\Big| \Big) \Big] \leq 3\mathcal{K}  \delta.
\end{align}
Hence, we obtain
\begin{equation}\label{euqua_bound_estimation_error}
 A_{1, n} = \E_{D_n} \Big[\dfrac{1}{n}\sum_{i= 1}^n \{ -2g(\widehat{h}_{n,NP}, Z_i)+ \E_{D_n'} g(\widehat{h}_{n,NP}, Z_i') \} \Big] = \E_{D_n}\big[\dfrac{1}{n}\sum_{i=1}^n G(\widehat{h}_{n,NP}, Z_i) \big] \leq \E_{D_n}\big[\dfrac{1}{n}\sum_{i=1}^n G(h_{j^*}, Z_i) \big] + 3 \mathcal{K} \delta.   
\end{equation}

\medskip

\noindent
Let $0 <\beta_n < 1$. Without loss of generality, we assume that $\|f\|_\infty \leq 1$ and define $T_{\beta_n}$ as the truncation at level $\beta_n$; that is,
\begin{align}\label{equa_truncation_func_f_V1}
T_{\beta_n}f = \left\{
\begin{array}{ll}
  f   & \text{ if } f\ge \beta_n  \\
  \beta_n   &  \text{otherwise}.
\end{array}
\right.
\end{align}
Let us define the function $h_{\beta_n}^*$ by
\begin{equation}\label{equa_def_hbeta_V0}
 h_{\beta_n}^*(x) = \underset{h: \|h\|_{\infty} < F_n}{\argmin} \E\big[-\log T_{\beta_n} f(Y - h(X))|X=x\big],   
\end{equation}
for each $x \in \mathcal{X}$. 
%
%
According to (\ref{equa_def_hbeta_V0}) for any $h$ satisfying $\|h\|_{\infty} < F_n$, we get
\begin{equation}\label{equa_bound_def_hbeta_v1}
\E\big[-\log T_{\beta_n} f( h_{\beta_n}^*(X_i) - Y_i)\big] \leq \E\big[-\log T_{\beta_n} f(Y_0 - h(X_0))\big].    
\end{equation}
%
%
For any $h\in \mathcal{H}_{\sigma, n}$, let $g_{\beta_n}(h, Z_0) = -\log T_{\beta_n} f(Y_0 - h(X_0)) + \log T_{\beta_n} f(Y_0 - h_{\beta_n}^*(X_0) )$. Then, we get
\begin{align}\label{equa_bound_func_gv1}
 \nonumber   &\E[g(h, Z_0)] \\
 \nonumber &  = \E\big[ g_{\beta_n}(h, Z_0)  + g(h, Z_0) - g_{\beta_n}(h, Z_0) + \log T_{\beta_n} f(Y_0 - h^*(X_0) ) - \log T_{\beta_n} f(Y_0 - h^*(X_0) ) \big]
 \\
 \nonumber & =  \E\big[ g_{\beta_n}(h, Z_0)\big] + \E\Big[-\log f(Y_0 - h(X_0)) + \log f(Y_0 - h^*(X_0) )  + \log T_{\beta_n} f(Y_0 - h(X_0)) - \log T_{\beta_n} f(Y_0 - h_{\beta_n}^*(X_0) ) 
 \\
 \nonumber & \hspace{2cm}  + \log T_{\beta_n} f(Y_0 - h^*(X_0) ) - \log T_{\beta_n} f(Y_0 - h^*(X_0) ) \Big]
 \\
 \nonumber & =  \E\big[ g_{\beta_n}(h, Z_0)\big] + \E\big[ \log T_{\beta_n} f(Y_0 - h(X_0)) - \log f(Y_0 - h(X_0))\big]  \\
 \nonumber & \hspace{1cm} +  \E\big[  \log T_{\beta_n} f(Y_0 - h^*(X_0) )  - \log T_{\beta_n} f(Y_0 - h_{\beta_n}^*(X_0) ) \big] + \E\big[ \log f(Y_0 - h^*(X_0) ) - \log T_{\beta_n} f(Y_0 - h^*(X_0) )\big] 
 \\
 & =  \E\big[ g_{\beta_n}(h, Z_0)\big] + \E\big[ \log T_{\beta_n} f(Y_0 - h(X_0)) - \log f(Y_0 - h(X_0))\big] + \E\big[ \log f(Y_0 - h^*(X_0) ) - \log T_{\beta_n} f(Y_0 - h^*(X_0) )\big],
\end{align}
where (\ref{equa_bound_func_gv1}) follows from (\ref{equa_bound_def_hbeta_v1}).

Recall that $\|h\|_{\infty} < F_n=F$ and for a positive constant $C_0 > 0$ we have $\|h_0\|_{\infty} < C_0$. Thus, we have 
\[\big|h_0(X_0) - h(X_0) \big| < C_0 + F.
\]
Then, from assumption \textbf{(A5)} with $C = C_0 + F$, there exists $p = p(C) \ge 1$ such that
\begin{equation}\label{equa_bound_assump_A5_V0} 
\E\Big[\Big|\log f(\epsilon_0 + h_0(X_0) - h(X_0)) \big) \ind_{\{f(\epsilon_0 + h_0(X_0) - h(X_0)) \leq \beta_n \}} \Big| \Big] \lesssim \dfrac{\log n}{n}.
\end{equation}
where $\beta_n = n^{-p}$ for all $n$ large enough.
One can easily see that
\begin{align}\label{equa_bound_func_gv2_v2}
    \nonumber & \E\Big[\Big|\log T_{\beta_n} f(Y_0 - h(X_0)) - \log f(Y_0 - h(X_0)) \Big| \Big] \\
    \nonumber & = \E\Big[\Big|\log f(Y_0 - h(X_0)) \ind_{\{f(Y_0 - h(X_0)) \ge \beta_n\}} + \log \beta_n \ind_{\{f(Y_0 - h(X_0)) \leq \beta_n\}}  - \log f(Y_0 - h(X_0)) \Big| \Big]  \\
 \nonumber  & =  \E\Big[\Big|\big( \log \beta_n - \log f(Y_0 - h(X_0)) \big) \ind_{\{f(Y_0 - h(X_0)) \leq \beta_n\}} \Big| \Big]\\
\nonumber & \leq \E\Big[\Big|\log \beta_n \ind_{\{f(Y_0 - h(X_0)) \leq \beta_n\}} \Big| +   \Big| \log f(Y_0 - h(X_0)) \big) \ind_{\{f(Y_0 - h(X_0))\leq \beta_n\}} \Big| \Big] \\
 \nonumber & \leq \E\Big[\Big| \log f(Y_0 - h(X_0)) \big) \ind_{\{f(Y_0 - h(X_0)) \leq \beta_n\}} \Big| ~ \text{ since } \log \beta_n < 0 \Big] 
 \\
  & \leq  \Big[\Big| \log f(\epsilon_0 + h_0(X_0)- h(X_0)) \big) \ind_{\{f(\epsilon_0 + h_0(X_0) - h(X_0)) \leq \beta_n\}} \Big| \Big]  \leq\dfrac{\log n}{n},
\end{align}
where (\ref{equa_bound_func_gv2_v2}) follows from (\ref{equa_bound_assump_A5_V0}). 

\medskip

Recall that $\|h^*\|_{\infty} < \mathcal{K}^*$ and for a positive constant $C_0 > 0$ we have $\|h_0\|_{\infty} < C_0$. Thus, we have 
\[\big|h_0(X_0) - h^*(X_0) \big| < C_0 + \mathcal{K}^*.
\]
Then, from assumption \textbf{(A5)} with $C = C_0 + \mathcal{K}^*$, there exists $p = p(C) \ge 1$ such that
\begin{equation}\label{equa_bound_assump_A5_V2} 
\E\Big[\Big|\log f(\epsilon_0 + h_0(X_0) - h^*(X_0)) \big) \ind_{\{f(\epsilon_0 + h_0(X_0) - h^*(X_0)) \leq \beta_n \}} \Big| \Big] \leq \dfrac{\log n}{n}.
\end{equation}
where $\beta_n = n^{-p}$ for all $n$ large enough.
From assumption \textbf{(A5)} and using same argument as above, we have

\begin{align}\label{equa_bound_func_gv3}
    \E\Big[\Big|\log f(Y_0 - h^*(X_0) ) - \log T_{\beta_n} f(Y_0 - h^*(X_0) )\Big| \Big] \lesssim \dfrac{\log n}{n}.  
\end{align}
According to (\ref{equa_bound_func_gv1}), (\ref{equa_bound_func_gv2_v2}) and (\ref{equa_bound_func_gv3}), we obtain
\begin{align}\label{equa_bound_func_gv4}
 \E[g(h, Z_0)] \lesssim \E\big[ g_{\beta_n}(h, Z_0)\big] + \dfrac{\log n}{n}.
\end{align}
Therefore,
\begin{align}
 \nonumber   &\E[g_{\beta_n}(h, Z_0)] \\
 \nonumber &  = \E\big[ g(h, Z_0)  + g_{\beta_n}(h, Z_0) - g(h, Z_0) + \log f(Y_0 - h_{\beta_n}^*(X_0) ) - \log f(Y_0 - h_{\beta_n}^*(X_0) ) \big]
 \\
 \nonumber & =  \E\big[ g(h, Z_0)\big] + \E\Big[-\log T_{\beta_n} f(Y_0 - h(X_0)) + \log T_{\beta_n} f(Y_0 - h_{\beta_n}^*(X_0) )  + \log f(Y_0 - h(X_0)) - \log f(Y_0 - h^*(X_0) ) 
 \\
 \nonumber & \hspace{2cm}  + \log f(Y_0 - h_{\beta_n}^*(X_0) ) - \log  f(Y_0 - h_{\beta_n}^*(X_0) ) \Big]
 \\
 \nonumber & =  \E\big[ g(h, Z_0)\big] + \E\big[ \log f(Y_0 - h(X_0)) - \log T_{\beta_n} f(Y_0 - h(X_0))\big]  \\
 \nonumber & \hspace{1cm} +  \E\big[  \log f(Y_0 - h_{\beta_n}^*(X_0) )  - \log f(Y_0 - h^*(X_0) ) \big] + \E\big[ \log T_{\beta_n} f(Y_0 - h_{\beta_n}^*(X_0) ) - \log f(Y_0 - h_{\beta_n}^*(X_0) )\big] 
 \\
\label{equa_bound_func_gbetanv1} & \leq   \E\big[ g(h, Z_0)\big] + \E\big[ \log f(Y_0 - h(X_0)) - \log T_{\beta_n} f(Y_0 - h(X_0))\big] + \E\big[ \log T_{\beta_n} f(Y_0 - h_{\beta_n}^*(X_0) ) - \log f(Y_0 - h_{\beta_n}^*(X_0) )\big]
 \\
\label{equa_bound_func_gbetanv2} & \lesssim  \E\big[ g(h, Z_0)\big] + \dfrac{\log n}{n},
\end{align}
where (\ref{equa_bound_func_gbetanv1}) follows from (\ref{equa_def_hbeta_V0}), and (\ref{equa_bound_func_gbetanv2}) holds by analogy with  (\ref{equa_bound_func_gv2_v2}) and (\ref{equa_bound_func_gv3}).

\medskip

\noindent
So, for any $h\in \mathcal{H}_{\sigma, n}$, let $G_{\beta_n}(h, Z_0) = \E_{D_n}[g_{\beta_n}(h, Z_0)] - 2g_{\beta_n}(h, Z_0).
$ 
Thus, we get
\begin{align}\label{equa_bound_func_Gbetanv1}
    \nonumber G(h, Z_0) - G_{\beta_n}(h, Z_0) & = \Big( \E_{D_n}[g(h, Z_0)] - 2g(h, Z_0) - \E_{D_n}[g_{\beta_n}(h, Z_0)] + 2g_{\beta_n}(h, Z_0)\Big) \\
&= \Big( \E_{D_n}[g(h, Z_0)] -\E_{D_n}[g_{\beta_n}(h, Z_0)] + 2\big(g_{\beta_n}(h, Z_0) - g(h, Z_0) \big) \Big),
\end{align}
According to (\ref{equa_bound_func_gv4}), (\ref{equa_bound_func_gbetanv2}), and (\ref{equa_bound_func_Gbetanv1}), we have
\begin{align}\label{equa_bound_func_Gbetanv2}
    \nonumber \E_{D_n}\Big[\dfrac{1}{n} \sum_{i=1}^n \Big(G(h, Z_i) - G_{\beta_n}(h, Z_i) \Big)\Big] & = \E_{D_n}\Big[\dfrac{1}{n} \sum_{i=1}^n \Big(\E_{D_n}[g(h, Z_i)] -\E_{D_n}[g_{\beta_n}(h, Z_i)] + 2\big(g_{\beta_n}(h, Z_i) - g(h, Z_i) \big) \Big) \Big] \\
    & \leq 3\dfrac{\log n}{n}.
\end{align}
It follows from (\ref{equa_bound_func_Gbetanv2}) that
\begin{align}\label{equa_bound_func_Gbetanv3}
 \nonumber   \E_{D_n}\Big[\dfrac{1}{n} \sum_{i=1}^n \Big( G(h, Z_i) \Big) \Big] & =  \E_{D_n}\Big[\dfrac{1}{n} \sum_{i=1}^n \Big(G(h, Z_i) - G_{\beta_n}(h, Z_i) + G_{\beta_n}(h, Z_i) \Big)\Big] 
 \\
& \leq \E_{D_n}\Big[\dfrac{1}{n} \sum_{i=1}^n G_{\beta_n}(h, Z_0) \Big] + 3\dfrac{\log n}{n}.
\end{align}
In addition to (\ref{equa_bound_func_Gbetanv3}), we have
\begin{align}\label{equa_bound_func_Gbetanv4}
\nonumber A_{1, n} & =\E_{D_n}\Big[\dfrac{1}{n}\sum_{i=1}^n G(\widehat{h}_{n,NP}, Z_i) \Big] \leq \E_{D_n}\Big[\dfrac{1}{n}\sum_{i=1}^n G(h_{j^*}, Z_i) \Big] + 3 \mathcal{K} \delta \\
& \leq \E_{D_n}\Big[\dfrac{1}{n} \sum_{i=1}^n G_{\beta_n}(h_{j^*}, Z_0) \Big] + 3\dfrac{\log n}{n} + 3 \mathcal{K} \delta. 
\end{align}
In the sequel, we derive an upper bound for $\E_{D_n}\Big[\dfrac{1}{n} \sum_{i=1}^n G_{\beta_n}(h_{j^*}, Z_i) \Big]$. We have
\begin{align}\label{equa_bound_abs_gbetan_V0}
\nonumber   \big|g_{\beta_n}(h, Z_0) \big| & =  \Big|\log T_{\beta_n} f(Y_0 - h_{\beta_n}^*(X_0) ) - \log T_{\beta_n} f(Y_0 - h(X_0)) \Big| \\
    & \leq \Big|\log T_{\beta_n} f(Y_0 - h_{\beta_n}^*(X_0) )   \Big| + \Big|\log T_{\beta_n} f(Y_0 - h(X_0)) \Big|   \leq 2|\log \beta_n|,
\end{align}
where (\ref{equa_bound_abs_gbetan_V0}) follows from (\ref{equa_truncation_func_f_V1}). According to (\ref{equa_bound_abs_gbetan_V0}), we have
\begin{equation}\label{equa_bound_var_gbetan_V0}
    \var \big(g_{\beta_n}(h, Z_0) \big) \leq \E\big[g_{\beta_n}^2(h, Z_0)\big] \leq 2|\log \beta_n| \E\big[g_{\beta_n}(h, Z_0)\big]  
\end{equation}
and
\begin{equation}\label{equa_bound_abs_moy_gbetan}
    \Big|\E\big[g_{\beta_n}(h, Z_0)\big] - g_{\beta_n}(h, Z_0)\Big| \leq 4|\log \beta_n|. 
\end{equation}
%
Let $\rho > 0$. According to (\ref{equa_bound_var_gbetan_V0}) and by using the exponential inequality provided in Subsection 3.3.3 of \cite{hang2016learning}, which builds upon the result from \cite{merlevede2009bernstein},
we have for all $j=1,\cdots,m$ (where $m$ is defined in (\ref{proof_def_m_n})) :
\begin{align*} 
\nonumber &  P\Bigg\{\dfrac{1}{n} \sum_{i=1}^n  G_{\beta_n}(h_j, Z_i) > \rho \Bigg\}  =  P\Bigg\{ \E_{D_n}\big[g_{\beta_n}(h_j, Z_1)\big] - \dfrac{2}{n} \sum_{i=1}^n g_{\beta_n}(h_j, Z_i) > \rho \Bigg\}
\\
 \nonumber & =  P\Bigg\{ 2\E_{D_n}\big[g_{\beta_n}(h_j, Z_1)\big] - \dfrac{2}{n} \sum_{i=1}^n g_{\beta_n}(h_j, Z_i) > \rho + \E_{D_n}\big[g_{\beta_n}(h_j, Z_i)\big] \Bigg\}
 \\
 \nonumber & =  P\Bigg\{\E_{D_n}\big[g_{\beta_n}(h_j, Z_1)\big] - \dfrac{1}{n} \sum_{i=1}^n g_{\beta_n}(h_j, Z_i) > \dfrac{1}{2} \big(\rho + \E_{D_n}\big[g_{\beta_n}(h_j, Z_i)\big] \big) \Bigg\}
  \\
  \nonumber & =   P\Bigg\{\dfrac{1}{n} \sum_{i=1}^n \Big(\E_{D_n}\big[g_{\beta_n}(h, Z_1)\big] - g_{\beta_n}(h, Z_i) \Big)  > \dfrac{1}{2} \big(\rho + \E_{D_n}\big[g_{\beta_n}(h_j, Z_i)\big] \big) \Bigg\}
\end{align*}
\begin{align*}
\nonumber& \leq C_2 \exp \Bigg[ - \dfrac{C_3\Big( \dfrac{\rho}{2} + \dfrac{\E_{D_n}\big[g_{\beta_n}(h_j, Z_i)\big]}{2} \Big)^2 \Big(n/\big(\log n\big)^2\Big)}{6 \log |\beta_n| \E_{D_n}\big[g_{\beta_n}(h, Z_1)\big] + 4\log |\beta_n| \Big(\dfrac{\rho}{2} + \dfrac{\E_{D_n}\big[g_{\beta_n}(h_j, Z_1)\big]}{2} \Big)}\Bigg]
\\
\nonumber & \leq C_2 \exp \Bigg[ - \dfrac{C_3\Big( \dfrac{\rho}{2} + \dfrac{\E_{D_n}\big[g_{\beta_n}(h_j, Z_i)\big]}{2} \Big)^2 \Big(n/\big(\log n\big)^2\Big)}{6 \log |\beta_n|\Big(\dfrac{\rho}{2} + \dfrac{\E_{D_n}\big[g_{\beta_n}(h_j, Z_1)\big]}{2} \Big) + 4\log |\beta_n| \Big(\dfrac{\rho}{2} + \dfrac{\E_{D_n}\big[g_{\beta_n}(h_j, Z_1)\big]}{2} \Big)}\Bigg]\\
\nonumber & \leq C_2 \exp \Bigg[ - \dfrac{C_3\Big( \dfrac{\rho}{2} + \dfrac{\E_{D_n}\big[g_{\beta_n}(h_j, Z_i)\big]}{2} \Big)^2 \Big(n/\big(\log n\big)^2\Big)}{10 \log |\beta_n|\Big(\dfrac{\rho}{2} + \dfrac{\E_{D_n}\big[g_{\beta_n}(h_j, Z_1)\big]}{2} \Big) }\Bigg]
\\
\nonumber & \leq C_2 \exp \Bigg[ - \dfrac{C_3\Big( \dfrac{\rho}{2} + \dfrac{\E_{D_n}\big[g_{\beta_n}(h_j, Z_i)\big]}{2} \Big) \Big(n/\big(\log n\big)^2\Big)}{10 \log |\beta_n| }\Bigg]  \leq C_2 \exp \Bigg[ - \dfrac{C_3 \rho \big(n/\big(\log n\big)^2\big)}{10 \log |\beta_n| }\Bigg],
\end{align*}
for some positives constans $C_2, C_3 > 0$.
Set, $\delta_n = \frac{\big(\log n\big)^2}{n}$, see (\ref{proof_def_m_n}). That is $m = \mathcal{N}\Big( \mathcal{H}_{\sigma, n},  \big(\log n\big)^2/ n\Big)$. In addition to (\ref{proof_ing_covering_num}) we have,
\begin{align}\label{Proof_bernstein_Gbeta_V1}
\nonumber P \Big\{\dfrac{1}{n} \sum_{i = 1}^n G_{\beta_n}(h_{j}, Z_i) > \rho \Big\}  & \leq   P \Bigg\{ \bigcup_{j=1}^m \Big( \dfrac{1}{n} \sum_{i = 1}^n G_{\beta_n}(h_{j}, Z_i) > \rho \Big) \Bigg\} \leq \sum_{j=1}^m  P \Big\{\dfrac{1}{n} \sum_{i = 1}^n G_{\beta_n}(h_{j}, Z_i) > \rho \Big\}  
\\
\nonumber & \leq \mathcal{N}\Big( \mathcal{H}_{\sigma, n},  \frac{\big(\log n\big)^2}{n} \Big) \cdot C_2 \exp \Bigg[ - \dfrac{C_3 \rho \big(n/\big(\log n\big)^2\big)}{10 \log |\beta_n| }\Bigg]
\\
& \leq C_2\exp \Bigg[2L_n(S_n + 1)\log\Big(n C_2{\sigma}L_n(N_n + 1)(B_n \lor 1)/ (\log n)^2\Big) - \dfrac{C_3 \rho \big(n/\big(\log n\big)^2\big)}{10 \log |\beta_n| } \Bigg]. 
\end{align}
Recall $\beta_n = \dfrac{1}{n^p}$ with $p \ge 1$. Thus (\ref{Proof_bernstein_Gbeta_V1}) becomes
\begin{align}\label{Proof_bernstein_Gbeta_V2}
\nonumber P \Big\{\dfrac{1}{n} \sum_{i = 1}^n G_{\beta_n}(h_{j}, Z_i) > \rho \Big\} \leq C_2 \exp \Bigg[2L_n(S_n + 1)\log\Big(n C_2{\sigma}L_n(N_n + 1)(B_n \lor 1)/ (\log n)^2\Big) - \dfrac{C_3 \rho n}{10 p \big(\log n\big)^3} \Bigg]. 
\end{align}
For $\alpha_n > 0$, we have
\begin{align}
\nonumber  &\E\Big[\dfrac{1}{n}\sum_{i = 1}^n G_{\beta_n}(h_{j}, Z_i) \Big]  \leq \alpha_n + \displaystyle\int_{\alpha_n}^{\infty} P\Big\{\dfrac{1}{n}\sum_{i = 1}^n G_{\beta_n}(h_{j}, Z_i) > \rho \Big\} d\varepsilon  
\\
 & \leq \alpha_n + C_2\exp \Bigg[2L_n(S_n + 1)\log\Big(n C_2{\sigma}L_n(N_n + 1)(B_n \lor 1)/ (\log n)^2\Big)  \Bigg] \times \displaystyle\int_{\alpha_n}^{\infty} \exp \Bigg( - \dfrac{C_3 \rho n}{10 p \big(\log n\big)^3} \Bigg) d\rho.    
\end{align}
Set 
\begin{equation}\label{equa_alpha_n}
\alpha_n \coloneqq \dfrac{(\log n)^{\nu}}{ n^{\frac{\kappa s}{\kappa s + d} } },  
\end{equation} 
By going same step as in the proof of Theorem 3.1 in \cite{kengne2025general}, for some $\nu > 6$, we have,
\begin{align}\label{proof_E_G_beta_log_nu_n}
\nonumber &\E\Big[ \dfrac{1}{n}\sum_{i = 1}^n G_{\beta_n}(h_{j}, Z_i) \Big] \\
& \leq \dfrac{(\log n)^{\nu}}{ n^{\frac{\kappa s}{\kappa s + d} } } + \frac{10 p C_2 \big(\log n\big)^3}{C_3 n} \exp\Bigg[2L_n(S_n + 1)\log \Big(n C_{\sigma}L_n(N_n + 1)(B_n \lor 1)/(\log n)^2 \Big) - \dfrac{C_3 n^{\frac{d}{\kappa s + d} } \big(\log n\big)^{\nu}}{10 p \big(\log n\big)^3} \Bigg].
\end{align}
Recall the assumptions:
$
L_n = \dfrac{s L_0}{\kappa s + d} \log n, N_n = N_0 n^{\frac{d}{\kappa s + d}}, S_n = \frac{s S_0}{\kappa s + d} n^{\frac{d}{\kappa s + d}}\log n, 
$
$ B_n = B_0 n^{\frac{4(d + s)}{\kappa s + d}}$,
with $L_0, N_0, B_0, S_0 >0$. 
Hence,
\begin{align}
\nonumber & \E\Big[\dfrac{1}{n}\sum_{i = 1}^n G_{\beta_n}(h_{j}, Z_i) \Big] \\
& \leq  \dfrac{(\log n)^{\nu}}{ n^{\frac{\kappa s}{\kappa s + d} } } + \frac{10 p C_2 \big(\log n\big)^3}{C_3 n}  \exp\Bigg[2\dfrac{s L_0}{\kappa s + d} \log n \Big(\frac{s S_0}{\kappa s + d} n^{\frac{d}{\kappa s + d}}\log (n) + 1 \Big) \\ 
\nonumber & \times \log\Big(n C_{\sigma} \dfrac{s L_0}{\kappa s + d} \log n \big(N_0 n^{\frac{d}{\kappa s + d}} + 1 \big) \big(B_0 n^{\frac{4(d + s)}{\kappa s + d}} \lor 1 \big) /(\log n)^2 \Big) - \dfrac{C_3 n^{\frac{d}{\kappa s + d} } \big(\log n\big)^{\nu}}{10 p \big(\log n\big)^3}  \Bigg] 
\\
%
\nonumber &  \leq \dfrac{(\log n)^{\nu}}{ n^{\frac{\kappa s}{\kappa s + d} } } + \frac{10 p C_2 \big(\log n\big)^3}{C_3 n} \exp\Bigg[\dfrac{C_3 n^{\frac{d}{\kappa s + d} } \big(\log n\big)^{\nu}}{10 p \big(\log n\big)^3} \Bigg( \dfrac{20 p \big(\log n\big)^3}{C_3 (\log n)^{\nu} } \dfrac{s L_0}{\kappa s + d} \log n \Big(\frac{s S_0}{\kappa s + d} \log (n) + 1 \Big) \\ 
 & \times \log\Big(n C_{\sigma} \dfrac{s L_0}{\kappa s + d} \log n \big(N_0 n^{\frac{d}{\kappa s + d}} + 1 \big) \big(B_0 n^{\frac{4(d + s)}{\kappa s + d}} \lor 1 \big) /(\log n)^2 \Big) - 1 \Bigg) \Bigg].
\end{align}

\medskip

Since $\nu > 6$, we have
\begin{align*}
&\dfrac{20 p \big(\log n\big)^3}{C_3 (\log n)^{\nu} } \dfrac{s L_0}{\kappa s + d} \log n \Big(\frac{s S_0}{\kappa s + d} \log (n) + 1 \Big) \\ 
\nonumber & \times \log\Big(n C_{\sigma} \dfrac{s L_0}{\kappa s + d} \log n \big(N_0 n^{\frac{d}{\kappa s + d}} + 1 \big) \big(B_0 n^{\frac{4(d + s)}{\kappa s + d}} \lor 1 \big) /(\log n)^2 \Big) \limiten 0.
\end{align*}
Hence, there exists $n_0 = n_0(\kappa, L_0, N_0, B_0, S_0, C_\sigma, s, d,)$ such that, for any $n \ge n_0$, we have
\begin{align}\label{equa_limite}
\nonumber & \dfrac{20 p \big(\log n\big)^3}{C_3 (\log n)^{\nu} } \dfrac{s L_0}{\kappa s + d} \log n \Big(\frac{s S_0}{\kappa s + d} \log (n) + 1 \Big) \\ 
& \times \log\Big(n C_{\sigma} \dfrac{s L_0}{\kappa s + d} \log n \big(N_0 n^{\frac{d}{\kappa s + d}} + 1 \big) \big(B_0 n^{\frac{4(d + s)}{\kappa s + d}} \lor 1 \big) /(\log n)^2 \Big) < \frac{1}{2}.
\end{align}
According to (\ref{equa_limite}), we have
\begin{align}\label{equa2_bound_G_beta_n_j_star}
\E\Big[\dfrac{1}{n}\sum_{i = 1}^nG(h_{j}, Z_i) \Big] & \leq \dfrac{(\log n)^{\nu}}{ n^{\frac{\kappa s}{\kappa s + d} } } + \frac{10 p C_2 \big(\log n\big)^3}{C_3 n} \exp\Bigg[-\dfrac{C_3 n^{\frac{d}{\kappa s + d} } \big(\log n\big)^{\nu}}{20 p \big(\log n\big)^3} \Bigg].
\end{align}
%
%
So, (\ref{euqua_bound_estimation_error}), and (\ref{equa2_bound_G_beta_n_j_star}) give,
\begin{align*}\label{equa_bound_stochastic_error}
A_{1, n} = \E_{D_n}  \Big[\dfrac{1}{n}\sum_{i= 1}^n \{ -2g(\widehat{h}_{n,NP}, Z_i)+ \E_{D_n'} g(\widehat{h}_{n,NP}, Z_i') \} \Big] & \leq  \dfrac{(\log n)^{\nu}}{ n^{\frac{\kappa s}{\kappa s + d} } } + \frac{10 p C_2 \big(\log n\big)^3}{C_3 n} + 3\mathcal{K}\delta.   
\end{align*} 
Since we have set $\delta_n = \dfrac{(\log n)^2}{n}$, it holds that,
\begin{equation} \label{equa_bound_stochastic_error_v1}
A_{1, n} =  \E_{D_n}  \Big[\dfrac{1}{n}\sum_{i= 1}^n \{ -2g(\widehat{h}_{n,NP}, Z_i)+ \E_{D_n'} g(\widehat{h}_{n,NP}, Z_i') \} \Big]  \leq   \dfrac{(\log n)^{\nu}}{ n^{\frac{\kappa s}{\kappa s + d} } } + \frac{10 p C_2 \big(\log n\big)^3}{C_3 n}  + \frac{3\mathcal{K}(\log n)^2}{n}.    
\end{equation} 

\medskip

\noindent
\textbf{Step 2 : Bounding the second term in the right-hand side of (\ref{equa_bound_excess_risk_v1})}. 

\medskip

\noindent
Let $L_n, N_n, B_n, S_n$ and $F_n = F > 0$ fulfill the conditions in Theorem \ref{excess_risk_bound__expo_strong_mixing}.
Recall that $\mathcal{H}_{\sigma, n} = \mathcal{H}_{\sigma}(L_n, N_n, B_n, F_n, S_n)$ see (\ref{proof_def_H_sigma_n})
and set
$ \widetilde{\mathcal{H}}_{\sigma, n} := \{h \in  \mathcal{H}_{\sigma,n}, ~ \|h - h^*\|_{\infty, \mx}  \leq \epsilon_n\} $.
By going along similar lines as in Step 2 in the proof of Theorem 3.1 in \cite{kengne2025general} with $\epsilon_n = \dfrac{ 1}{ n^{\frac{s}{\kappa s + d}} }$, we have
\begin{equation}\label{approxi_error_v1}
R(h_{\mathcal{H}_{\sigma, n}}) - R(h^{*})
\leq \dfrac{ \mk_0}{n^{\frac{\kappa s}{\kappa s + d}} } .
\end{equation}

\medskip

\noindent
\textbf{Step 3 : Expected excess risk bound}.
\medskip

\noindent
According to (\ref{equa_bound_excess_risk_v1}), (\ref{equa_bound_stochastic_error_v1}), and (\ref{approxi_error_v1}), we have,
\begin{align*}
\E[R(\widehat{h}_{n,NP}) - R(h^{*})]  
& \leq  \dfrac{(\log n)^{\nu}}{ n^{\frac{\kappa s}{\kappa s + d} } } + \frac{10 p C_2 \big(\log n\big)^3}{C_3 n}  + \frac{3\mathcal{K}(\log n)^2}{n} + \dfrac{ \mk_0}{n^{\frac{\kappa s}{\kappa s + d}} }.
\\
& \lesssim \dfrac{(\log n)^{\nu} + \mk_0}{ n^{\frac{\kappa s}{\kappa s + d} } } +  \frac{ \big(\log n\big)^3 + \big(\log n\big)^2}{n} \lesssim \dfrac{(\log n)^{\nu} }{ n^{\frac{\kappa s}{\kappa s + d} } } .
\end{align*}
%
%
Which completes the proof.
\qed

\subsection{Proof of Theorem \ref{excess_risk_bound__expo_strong_mixing_thm1_v2}}
Recall the following decomposition of the expected excess risk in (\ref{Exp_excess_risk_v0}):
\begin{align}\label{Exp_excess_risk_v3}
\E[R(\widehat{h}_n) - R(h^{*})] \leq  \E_{D_n} \Big[\dfrac{1}{n}\sum_{i= 1}^n \{ -2g(\widehat{h}_n, Z_i)+ \E_{D_n'} g(\widehat{h}_n, Z_i') \} \Big] + 2 [R(h_{\mathcal{H}_{\sigma, n}}) - R(h^{*})]. 
\end{align}
Let $\nu >6$.
Using similar arguments as in \textbf{Step 1} in the proof of Theorem \ref{excess_risk_bound__expo_strong_mixing}, and by setting $ \alpha_n \coloneqq (\log n)^{\nu}\phi_n$ as in (\ref{equa_alpha_n}), and $\phi_n$ defined in (\ref{def_phi_nalpha}), we obtain a bound for the first term in the right-hand side of (\ref{Exp_excess_risk_v3}), given by: 
\begin{equation} \label{equa_bound_stochastic_error_v3}
 \E_{D_n}  \Big[\dfrac{1}{n}\sum_{i= 1}^n \{ -2g(\widehat{h}_n, Z_i)+ \E_{D_n'} g(\widehat{h}_n, Z_i') \} \Big]  \lesssim (\log n)^{\nu}\phi_n + \frac{(\log n)^3 + (\log n)^2}{n}.    
\end{equation} 
Let us derive a bound of the second term in the right-hand side of (\ref{Exp_excess_risk_v3}).
Let $L_n, N_n, B_n, F_n = F$ satisfying the conditions in Theorem \ref{excess_risk_bound__expo_strong_mixing_thm1_v2}.
Set $\mathcal{H}_{\sigma, n} \coloneqq \mathcal{H}_{\sigma}(L_n, N_n,, B_n, F_n, S_n)$, and consider the class of composition structured functions $\mathcal{G}(q, \bold{d}, \bold{t}, \boldsymbol{\beta}, \mk^{*})$ with $h ^*
\in \mathcal{G}(q, \bold{d}, \bold{t}, \boldsymbol{\beta}, \mk^{*})$.
By going along similar lines as in the proof Theorem 3.2 in \cite{kengne2025general}, we get
\begin{align}\label{excess_risk_bound_compo_holder_funct}
\nonumber R(h_{\mathcal{H}_{\sigma, n}}) - R(h^{*}) 
 & \lesssim \phi_n^{\kappa /2}.
\end{align}
According to (\ref{Exp_excess_risk_v3}), (\ref{equa_bound_stochastic_error_v3}), we obtain
\begin{align*}
\E[R(\widehat{h}_n) - R(h^{*})] \lesssim  (\log n)^{\nu}\phi_n + \frac{(\log n)^3 + (\log n)^2}{n} + \phi_n ^{\kappa /2}  \lesssim \Big(\phi_{n}^{\kappa/2} \lor \phi_n \Big) (\log n)^{\nu}. 
\end{align*}
Thus, the theorem follows.
\qed

\subsection{Poof of Theorem \ref{excess_risk_bound__expo_strong_mixing_thm2}}
Recall that the sample $ D_n = \{ (X_1, Y_1), \dots, (X_n, Y_n) \}$ is a trajectory of a stationary and ergodic process $Z_t = (X_t, Y_t)_{t\in \Z}$ satisfying (\ref{equa_model_reg}).
Consider another sample $ D_n' \coloneqq  \{ (X_1', Y_1'), \dots, (X_n', Y_n') \}$ independent of $D_n$ and generated from the model (\ref{equa_model_reg}). 
Let $h^*$ given in (\ref{best_pred_F}). 
For any  $h : \mathcal{X} \to \mathcal{Y}$ and $i= 1,\dots,n$, set:  
\begin{equation}\label{equa_proof_decomp_excess_risk_V1}
g(h(X_i), Y_i) = -\log f(Y_i - h(X_i)) + \log f(Y_i - h^*(X_i)).
\end{equation}
So, we have
\begin{equation}\label{equa_proof_excess_risk_v1}
\E[R(\widehat{h}_{n,SP}) - R(h^*) ]=  \E_{D_n} \Big[\E_{D_n'} \Big[\dfrac{1}{n} \sum_{i=1}^n g(\widehat{h}_{n,SP}, Z_i')\Big] \Big].
\end{equation}
Let $L_n, N_n, B_n, F$ fulfill the conditions in Theorem \ref{excess_risk_bound__expo_strong_mixing_thm2}. Set:
\[\mathcal{H}_{\sigma, n} \coloneqq \mathcal{H}_{\sigma}(L_n, N_n, B_n, F).
\]
Let $h_{\mathcal{H}_{\sigma, n}}, \widetilde{ h}_{\mathcal{H}_{\sigma, n}} \in \mathcal{H}_{\sigma, n}$, such that
\begin{equation*}
 h_{\mathcal{H}_{\sigma, n}} \in \underset{h\in \mathcal{H}_{\sigma, n} }{\argmin} R(h) ~\text{ and } ~ \widetilde{h}_{\mathcal{H}_{\sigma, n}} \in \underset{h\in \mathcal{H}_{\sigma, n} }{\argmin} \Big[R(h) + J_n(h) \Big].  \end{equation*}
%
%
By the definition of the empirical risk minimizer, we get:

\begin{align}\label{equa_proof_app_ERM}
 \E_{D_n}\Big[\dfrac{1}{n} \sum_{i=1}^n \log f(Y_i - \widehat{h}_{n,SP}(X_i)) - J_n(\widehat{h}_{n,SP}) \ge \E_{D_n}\Big[\dfrac{1}{n} \sum_{i=1}^n \log f(Y_i -\widetilde{ h}_{\mathcal{H}_{\sigma, n}}(X_i)) -J_n \big(\widetilde{ h}_{\mathcal{H}_{\sigma, n}} \big)\Big].
\end{align}
According to (\ref{equa_proof_excess_risk_v1}) and (\ref{equa_proof_app_ERM}), we get 
\begin{align*}
\nonumber & \E[R(\widehat{h}_{n,SP}) - R(h^*) ] \\
\nonumber & = \E_{D_n} \Bigg[ \dfrac{1}{n} \sum_{i=1}^n \Big( -2g(\widehat{h}_{n,SP}, Z_i) + \E_{D_n'} \big[g(\widehat{h}_{n,SP}, Z_i')\big] + 2g(\widehat{h}_{n,SP}, Z_i) \Big) \Bigg]
\\
\nonumber & =  \E_{D_n} \Bigg[ \dfrac{1}{n} \sum_{i=1}^n \Big( -2g(\widehat{h}_{n,SP}, Z_i) + \E_{D_n'} \big[g(\widehat{h}_{n,SP}, Z_i')\big] -2\log f(Y_i - \widehat{h}_{n,SP}(X_i)) + 2\log f(Y_i - h^*(X_i))  \Big) \Bigg]  
\\
\nonumber & = \E_{D_n} \Bigg[ \dfrac{1}{n} \sum_{i=1}^n \Big(-2g(\widehat{h}_{n,SP}, Z_i) + \E_{D_n'} \big[g(\widehat{h}_{n,SP}, Z_i')\big] -2\log f(Y_i - \widehat{h}_{n,SP}(X_i)) + 2J_n(\widehat{h}_{n,SP}) - 2J_n(\widehat{h}_{n,SP}) 
\\
\nonumber & \hspace{2cm} + 2\log f(Y_i -\widetilde{ h}_{\mathcal{H}_{\sigma, n}}(X_i)) - 2J_n \big(\widetilde{ h}_{\mathcal{H}_{\sigma, n}} \big)  - 2\log f(Y_i -\widetilde{ h}_{\mathcal{H}_{\sigma, n}}(X_i))  + 2\log f(Y_i - h^*(X_i))
\\
\nonumber & \hspace{2cm} + 2J_n \big(\widetilde{ h}_{\mathcal{H}_{\sigma, n}} \big)\Big) \Bigg] 
\\
\nonumber & \leq  \E_{D_n} \Bigg[ \dfrac{1}{n} \sum_{i=1}^n \Big( -2g(\widehat{h}_{n,SP}, Z_i) + \E_{D_n'} \big[g(\widehat{h}_{n,SP}, Z_i')\big] -2J_n(\widehat{h}_{n,SP})  -2\log f(Y_i - \widehat{h}_{n,SP}(X_i)) + 2J_n(\widehat{h}_{n,SP}) 
\\
\nonumber & \hspace{2cm} + 2\log f(Y_i -\widetilde{ h}_{\mathcal{H}_{\sigma, n}}(X_i)) - 2J_n \big(\widetilde{ h}_{\mathcal{H}_{\sigma, n}} \big)  - 2\log f(Y_i -\widetilde{ h}_{\mathcal{H}_{\sigma, n}}(X_i))  + 2\log f(Y_i - h^*(X_i))
\\
\nonumber & \hspace{2cm} + 2J_n \big(\widetilde{ h}_{\mathcal{H}_{\sigma, n}} \big)\Big) \Bigg] 
\end{align*}
\begin{align}\label{bound_excess_risk_v1}
\nonumber & \leq  \E_{D_n} \Bigg[ \dfrac{1}{n} \sum_{i=1}^n \Big( -2g(\widehat{h}_{n,SP}, Z_i) + \E_{D_n'} \big[g(\widehat{h}_{n,SP}, Z_i')\big]  -2J_n(\widehat{h}_{n,SP}) -2\big[\big(\log f(Y_i - \widehat{h}_{n,SP}(X_i)) - J_n(\widehat{h}_{n,SP}) \big)
\\
\nonumber & \hspace{2cm} - \big(\log f(Y_i -\widetilde{ h}_{\mathcal{H}_{\sigma, n}}(X_i)) -J_n \big(\widetilde{ h}_{\mathcal{H}_{\sigma, n}} \big) \big) \big] +2\big(-\log f(Y_i -\widetilde{ h}_{\mathcal{H}_{\sigma, n}}(X_i)) +\log f(Y_i - h^*(X_i)) \big)
\\
\nonumber & \hspace{2cm} + 2J_n \big(\widetilde{ h}_{\mathcal{H}_{\sigma, n}} \big)\Big) \Bigg] \\
\nonumber &  \leq \E_{D_n} \Bigg[ \dfrac{1}{n} \sum_{i=1}^n \Big(-2g(\widehat{h}_{n,SP}, Z_i) + \E_{D_n'} \big[g(\widehat{h}_{n,SP}, Z_i')\big]  -2J_n(\widehat{h}_{n,SP}) \Big) \Bigg] \\
\nonumber & \hspace{1cm}-2\E_{D_n} \Bigg[\dfrac{1}{n} \sum_{i=1}^n \Big(\big(\log f(Y_i - \widehat{h}_{n,SP}(X_i)) - J_n(\widehat{h}_{n,SP}) \big) - \big(\log f(Y_i -\widetilde{ h}_{\mathcal{H}_{\sigma, n}}(X_i)) -J_n \big(\widetilde{ h}_{\mathcal{H}_{\sigma, n}} \big) \big)  \Big) \Bigg]\\
\nonumber & \hspace{1cm} + 2\E_{D_n} \Bigg[ \dfrac{1}{n} \sum_{i=1}^n \Big(-\log f(Y_i -\widetilde{ h}_{\mathcal{H}_{\sigma, n}}(X_i)) +\log f(Y_i - h^*(X_i)) \Big) + 2J_n \big(\widetilde{ h}_{\mathcal{H}_{\sigma, n}} \big)\Bigg] \\
\nonumber &  \leq \E_{D_n} \Bigg[ \dfrac{1}{n} \sum_{i=1}^n \Big(-2g(\widehat{h}_{n,SP}, Z_i) + \E_{D_n'} \big[g(\widehat{h}_{n,SP}, Z_i')\big]  -2J_n(\widehat{h}_{n,SP}) \Big)   \Bigg]
\\
\nonumber & \hspace{1cm} + 2\E_{D_n} \Bigg[ \dfrac{1}{n} \sum_{i=1}^n \Big(-\log f(Y_i -\widetilde{ h}_{\mathcal{H}_{\sigma, n}}(X_i)) +\log f(Y_i - h^*(X_i)) \Big) + 2J_n \big(\widetilde{ h}_{\mathcal{H}_{\sigma, n}} \big)\Bigg] 
\\
\nonumber &  \leq \E_{D_n} \Bigg[ \dfrac{1}{n} \sum_{i=1}^n \Big(-2g(\widehat{h}_{n,SP}, Z_i) + \E_{D_n'} \big[g(\widehat{h}_{n,SP}, Z_i')\big] \Big) -2J_n(\widehat{h}_{n,SP})\Bigg] + 2\big(R(\widetilde{h}_{\mathcal{H}_{\sigma, n}}) - R(h^*) +J_n \big(\widetilde{ h}_{\mathcal{H}_{\sigma, n}} \big) \big)
\\
 &  \leq \widetilde{A}_{1, n} + 2\big(R(\widetilde{h}_{\mathcal{H}_{\sigma, n}}) - R(h^*) +J_n \big(\widetilde{ h}_{\mathcal{H}_{\sigma, n}} \big)\big),
\end{align}
where $\widetilde{A}_{1, n} :=  \E_{D_n} \Bigg[ \dfrac{1}{n} \sum_{i=1}^n \Big(-2g(\widehat{h}_{n,SP}, Z_i) + \E_{D_n'} \big[g(\widehat{h}_{n,SP}, Z_i')\big] \Big)  -2J_n(\widehat{h}_{n,SP}) \Bigg]$.
The next part of the demonstration will consist of deriving an upper bound of $\widetilde{A}_{1, n}$.

\medskip

Let $0 <\beta_n < 1$. 
As in the proof of Theorem \ref{excess_risk_bound__expo_strong_mixing}, define,
\begin{align*} 
T_{\beta_n}f = \left\{
\begin{array}{ll}
  f   & \text{ if } f\ge \beta_n  \\
  \beta_n   &  \text{otherwise},
\end{array}
\right.
\end{align*}
and $h_{\beta_n}^*$ satisfying for all $x \in \mathcal{X}$,
\begin{equation*} 
 h_{\beta_n}^*(x) = \underset{h: \|h\|_{\infty} < F_n}{\argmin} \E\big[-\log T_{\beta_n} f(Y - h(X))|X=x\big].   
\end{equation*}
Moreover, set,
\[G(g, Z_i) = \E_{D_n'}[g(h, Z_0)] - 2g(h, Z_0) \]
and for all predictor $h: \mx \rightarrow \my$,
\[ G_{\beta_n}(h, Z_0) = \E_{D_n}[g_{\beta_n}(h, Z_0)] - 2g_{\beta_n}(h, Z_0) \text{ and } g_{\beta_n}(h, Z_0) = -\log T_{\beta_n} f(Y_0 - h(X_0)) + \log T_{\beta_n} f(Y_0 - h_{\beta_n}^*(X_0) ).
\]
\noindent
By going as in the \textbf{Step 1} in the proof of Theorem \ref{Proof_excess_risk_thm1} (see (\ref{equa_bound_func_Gbetanv2})), we get,
\begin{align}\label{equa_bound_func_G_by_Gbetanv2}
    \nonumber \E_{D_n}\Bigg[\dfrac{1}{n} \sum_{i=1}^n \Big(G(h, Z_i) - G_{\beta_n}(h, Z_i) \Big)\Bigg] & = \E_{D_n}\Bigg[\dfrac{1}{n} \sum_{i=1}^n \Big(\E_{D_n}[g(h, Z_i)] -\E_{D_n}[g_{\beta_n}(h, Z_i)] + 2\big(g_{\beta_n}(h, Z_i) - g(h, Z_i) \big) \Big) \Bigg] \\
    & \leq 3\dfrac{\log n}{n}.
\end{align}
It follows from (\ref{equa_bound_func_G_by_Gbetanv2}) that
\begin{align}\label{equa_bound_func_G_by_Gbetanv3}
 \nonumber   \E_{D_n}\Bigg[\dfrac{1}{n} \sum_{i=1}^n \Big( G(h, Z_i) -2J_n(h) \Big) \Bigg] & =  \E_{D_n}\Bigg[\dfrac{1}{n} \sum_{i=1}^n \Big(G(h, Z_i) - G_{\beta_n}(h, Z_i) + G_{\beta_n}(h, Z_i) -2J_n(h) \Big)\Bigg] 
 \\
& \leq \E_{D_n}\Bigg[\dfrac{1}{n} \sum_{i=1}^n G_{\beta_n}(h, Z_0) -2J_n(h) \Bigg] + 3\dfrac{\log n}{n}.
\end{align}
%
%
%
From the definition of
$G(g, Z_0)$, the term $\widetilde{A}_{1, n}$ given above can be expressed as:
\begin{align*} 
  \widetilde{A}_{1, n} = \E_{D_n} \Big[\dfrac{1}{n} \sum_{i=1}^n \Big( G(\widehat{h}_{n,SP}, Z_i) -2J_n(\widehat{h}_{n,SP}) \Big) \Big]. 
\end{align*}
In addition to (\ref{equa_bound_func_G_by_Gbetanv3}), we have
\begin{align}\label{equa_bound_func_G_by_Gbetanv4}
\widetilde{A}_{1, n} \leq \E_{D_n}\Bigg[\dfrac{1}{n} \sum_{i=1}^n G_{\beta_n}(\widehat{h}_{n,SP}, Z_0) -2J_n(\widehat{h}_{n,SP}) \Bigg] + 3\dfrac{\log n}{n}.
\end{align}
Therefore, it suffices to bound $\E_{D_n}\Big[\dfrac{1}{n} \sum_{i=1}^n G_{\beta_n}(\widehat{h}_{n,SP}, Z_i) -2J_n(\widehat{h}_{n,SP}) \Big]$. 
Let us recall that,
\begin{equation}\label{equa_bound_abs_gbetan}
   \big|g_{\beta_n}(h, Z_0) \big| \leq 2|\log \beta_n|, ~ \var \big(g_{\beta_n}(h, Z_0) \big) \leq 2|\log \beta_n| \E\big[g_{\beta_n}(h, Z_0)\big], \text{ and }     \Big|\E\big[g_{\beta_n}(h, Z_0)\big] - g_{\beta_n}(h, Z_0)\Big| \leq 4|\log \beta_n|,
\end{equation}
see (\ref{equa_bound_abs_gbetan_V0}), (\ref{equa_bound_var_gbetan_V0}) and (\ref{equa_bound_abs_moy_gbetan}).
Consider for all $\rho >0$, $j \in \N$, the set 
\begin{equation*}
 \mathcal{H}_{n, j, \rho} \coloneqq  \{ h \in \mathcal{H}_{\sigma, n}: 2^{j - 1} \ind_ {\{j \ne 0 \} } \rho  \leq J_n(h) \leq  2^{j} \rho \}.   
\end{equation*}
For $\rho > 0$, we have
\begin{align}\label{equa_bound_Gbetan_v1}
\nonumber &  P\Bigg\{\dfrac{1}{n} \sum_{i=1}^n  G_{\beta_n}(\widehat{h}_{n,SP}, Z_i) -2J_n(\widehat{h}_{n,SP}) > \rho \Bigg\} \\
\nonumber & =  P\Bigg\{ \E_{D_n}\big[g_{\beta_n}(\widehat{h}_{n,SP}, Z_1)\big] - \dfrac{2}{n} \sum_{i=1}^n g_{\beta_n}(\widehat{h}_{n,SP}, Z_i) -2J_n(\widehat{h}_{n,SP}) > \rho \Bigg\}
\\
 \nonumber & =  P\Bigg\{ 2\E_{D_n}\big[g_{\beta_n}(\widehat{h}_{n,SP}, Z_1)\big] - \dfrac{2}{n} \sum_{i=1}^n g_{\beta_n}(\widehat{h}_{n,SP}, Z_i) > \rho + 2J_n(\widehat{h}_{n,SP}) + \E_{D_n}\big[g_{\beta_n}(\widehat{h}_{n,SP}, Z_i)\big] \Bigg\}
 \\
 \nonumber & =  P\Bigg\{\E_{D_n}\big[g_{\beta_n}(\widehat{h}_{n,SP}, Z_1)\big] - \dfrac{1}{n} \sum_{i=1}^n g_{\beta_n}(\widehat{h}_{n,SP}, Z_i) > \dfrac{1}{2} \big(\rho + 2J_n(\widehat{h}_{n,SP}) + \E_{D_n}\big[g_{\beta_n}(\widehat{h}_{n,SP}, Z_i)\big] \big) \Bigg\}
  \\
\nonumber& \leq  P\Bigg\{\exists h\in \mathcal{H}_{\sigma, n}: \E_{D_n}\big[g_{\beta_n}(h, Z_1)\big] - \dfrac{1}{n} \sum_{i=1}^n g_{\beta_n}(h, Z_0) > \dfrac{1}{2} \big(\rho + 2J_n(h) + \E_{D_n}\big[g_{\beta_n}(h, Z_0)\big] \big) \Bigg\}
  \\
 \nonumber & \leq  P\Bigg\{\underset{h\in \mathcal{H}_{\sigma, n}}{\sup}\dfrac{\dfrac{1}{n} \sum_{i=1}^n \Big[\E_{D_n}\big[g_{\beta_n}(h, Z_0)\big] - g_{\beta_n}(h, Z_0) \Big]}{\rho + 2J_n(h) + \E_{D_n}\big[g_{\beta_n}(h, Z_1)\big] } > \dfrac{1}{2}  \Bigg\}
 \\
  \nonumber & \leq  \sum_{j=1}^{\infty} P\Bigg\{\underset{h\in \mathcal{H}_{n,j,\rho}}{\sup}\dfrac{\dfrac{1}{n} \sum_{i=1}^n \Big[\E_{D_n}\big[g_{\beta_n}(h, Z_0)\big] - g_{\beta_n}(h, Z_0) \Big]}{\rho + 2J_n(h) + \E_{D_n}\big[g_{\beta_n}(h, Z_1)\big] } > \dfrac{1}{2}  \Bigg\}
  \\
 & \leq  \sum_{j=1}^{\infty} P\Bigg\{\underset{h\in \mathcal{H}_{n,j,\rho}}{\sup}\dfrac{\dfrac{1}{n} \sum_{i=1}^n \Big[\E_{D_n}\big[g_{\beta_n}(h, Z_0)\big] - g_{\beta_n}(h, Z_0) \Big]}{2^j \rho+ \E_{D_n}\big[g_{\beta_n}(h, Z_1)\big] } > \dfrac{1}{2}  \Bigg\}.
\end{align}
Remark that, for $\rho > 0, j \ge 0$ and $h\in \mathcal{H}_{n,j,\rho}$, $2J_n(h) \ge 2^j \rho$. Since $\E_{D_n}\big[g_{\beta_n}(h, Z_1)\big] > 0$, in addition to (\ref{equa_bound_Gbetan_v1}), we have
\begin{align*} 
\nonumber   P\Bigg\{\dfrac{1}{n} \sum_{i=1}^n  G_{\beta_n}(\widehat{h}_{n,SP}, Z_i) -2J_n(\widehat{h}_{n,SP}) > \rho \Bigg\}
\nonumber & \leq  \sum_{j=1}^{\infty} P\Bigg\{\underset{h\in \mathcal{H}_{n,j,\rho}}{\sup}\dfrac{\dfrac{1}{n} \sum_{i=1}^n \Big[\E_{D_n}\big[g_{\beta_n}(h, Z_1)\big] - g_{\beta_n}(h, Z_i) \Big]}{\sqrt{2^j \rho + \E_{D_n}\big[g_{\beta_n}(h, Z_1)\big]} \sqrt{2^j \rho} } > \dfrac{1}{2}  \Bigg\}
\\
& \leq  \sum_{j=1}^{\infty} P\Bigg\{\underset{h\in \mathcal{H}_{n,j,\rho}}{\sup}\dfrac{\dfrac{1}{n} \sum_{i=1}^n \Big[\E_{D_n}\big[g_{\beta_n}(h, Z_1)\big] - g_{\beta_n}(h, Z_i) \Big]}{\sqrt{2^j \rho + \E_{D_n}\big[g_{\beta_n}(h, Z_1)\big]}  } > \dfrac{1}{2}\sqrt{2^j \rho}  \Bigg\}.
\end{align*}
Set
\[\mathcal{G}_{n,j,\rho} \coloneqq \{g_{\beta_n}(h, \cdot): \R^d\times \mathcal{Y} \to \R: h\in \mathcal{H}_{n,j,\rho}\} .\]
Let $j\in \N$. According to (\ref{equa_bound_abs_gbetan}) and the Bernstein-type inequality for exponential strong mixing processes, see Lemma 6.1 in \cite{kengne2025deep}, we obtain
\begin{align*} 
 \nonumber  P\Bigg\{\underset{h\in \mathcal{H}_{n,j,\rho}}{\sup}\dfrac{\dfrac{1}{n} \sum_{i=1}^n \Big[\E_{D_n}\big[g_{\beta_n}(h, Z_1\big] - g_{\beta_n}(h, Z_i) \Big]}{\sqrt{2^j \rho+ \E_{D_n}\big[g_{\beta_n}(h, Z_1)\big]}  } > \dfrac{1}{2}\sqrt{2^j \rho}  \Bigg\} 
 &\leq \mathcal{N} \Big(\mathcal{G}_ {n, j, \rho}, \dfrac{2^ j \rho}{8}, \|\cdot\|_{\infty} \Big) \exp \Bigg[- \dfrac{C\dfrac{2^ j \rho}{64} \big(n/(\log n)^2 \big)}{6|\log \beta_n| + 4|\log \beta_n|}\Bigg]   \\
& \leq \mathcal{N} \Big(\mathcal{G}_ {n, j, \rho}, \dfrac{2^ j \rho}{8}, \|\cdot\|_{\infty} \Big) \exp \Bigg[- \dfrac{C(2^ j \rho) \big(n/(\log n)^2 \big)}{640|\log \beta_n|}\Bigg],    
\end{align*}
for some constant $C > 0$.
Let $g_{\beta_n}(h_1, \cdot), g_{\beta_n}(h_2, \cdot) \in \mathcal{G}_{n,j,\rho}$, and $(x, y) \in \R^d \times \R$. 
Recall $0< \beta_n \leq 1$ $\forall~ n\in \N$. From (\ref{Proof_log_Tbeta_Lipschitz}) see Lemma \ref{Lemma1} above, we obtain 

\begin{equation*}
    |g_{\beta_n}(h_1(x), y) - g_{\beta_n}(h_2(x), y)| \leq \dfrac{\mathcal{K}_f}{\beta_n} |h_1(x) - h_2(x)|.
\end{equation*}
Set $\varepsilon := \dfrac{2^j \rho}{8}$.In \cite{ohn2022nonconvex}, we have  
\begin{equation*} 
\mathcal{N}(\varepsilon, \mathcal{G}_{n, j, \rho}, \| \cdot \|_\infty) \leq \mathcal{N} (\frac{\varepsilon}{\mathcal{K}_f/\beta_n}, \mathcal{H}_{n, j, \rho}, \| \cdot \|_\infty).
\end{equation*} 
One can easily see that,
\begin{equation*} 
 \mathcal{H}_{n, j, \rho}  \subset \left\{  h \in \mathcal{H}_{\sigma}(L_{n}, N_{n}, B_{n}, F, \dfrac{2^j \rho}{\lambda_n}): \| \theta(h) \|_{ \text{clip}, \tau_n} \leq \frac{2^j \rho}{\lambda_n}  \right\}. 
 \end{equation*}
Thus, we get from \cite{ohn2022nonconvex}, 
\begin{align*} 
\mathcal{N} \Big(\varepsilon, \mathcal{G}_{n, j, \rho}, \| \cdot \|_\infty\Big)  \nonumber & \leq \mathcal{N}\Bigg(\dfrac{\varepsilon}{\mathcal{K}_f/\beta_n}, \mathcal{H}_{n, j, \rho}, \| \cdot \|_\infty \Bigg)  \leq \mathcal{N}\Bigg(\frac{\varepsilon}{\mathcal{K}_f/\beta_n}, \mathcal{H}_{\sigma}(L_{n},N_{n}, B_{n}, F, \frac{2^j \rho}{\lambda_n}),  \| \cdot \|_\infty \Bigg)
\\
& \leq  \exp\Bigg[2 \frac{2^j \alpha}{\lambda_n}(L_n + 1) \log \Bigg(\frac{(L_n + 1)(N_n + 1)B_n}{\varepsilon/(\mathcal{K}_f/\beta_n) - \tau_n (L_n + 1)((N_n + 1) B_n)^{L_n +1}} \Bigg) \Bigg].
\end{align*} 
By going along similar lines as in the proof Theorem 4.1 in \cite{kengne2025deep}, we have
\begin{align}
\nonumber & P\Bigg\{\dfrac{1}{n} \sum_{i=1}^n  G_{\beta_n}(\widehat{h}_{n,SP}, Z_i) -2J_n(\widehat{h}_{n,SP}) > \rho \Bigg\}  
\\
\nonumber & \leq  \sum _{j = 1}^{\infty} \exp\Bigg[2 \frac{2^j \rho}{\lambda_n}(L_n + 1) \log \Big(\frac{\beta_n (L_n + 1)(N_n + 1)B_n}{(2^j\rho)/(8\mathcal{K}_f) - \tau_n \beta_n (L_n + 1)((N_n + 1) B_n)^{L_n +1}} \Big) - \dfrac{C(2^ j \rho) \big(n/(\log n)^2 \big)}{640|\log \beta_n|} \Bigg].
\end{align}
Recall $\beta_n = \dfrac{1}{n^p}$ for some constant $p \ge 1$ (from \textbf{(A5)}) and that,
\begin{equation}\label{equa_network_archi}
 L_n \asymp \log(n), N_n \lesssim n^{\nu_1}, B_n \lesssim n^{\nu_2}, \lambda_n \asymp \frac{\big( \log(n) \big)^{\nu_3}}{n}, \tau_n \leq \dfrac{1}{16 \mathcal{K}_{\ell}(L_n + 1)(n(N_n + 1)B_n)^{L_n + 1} },  
\end{equation}
for some $\nu_1, \nu_2 >0, \nu_3 > 5$.
For sufficiently large $n$, $\tau_n$ given in (\ref{equa_network_archi}) satisfy the condition \\
$\tau_n \leq \dfrac{1}{16 \mathcal{K}_{\ell} \beta_n(L_n + 1)((N_n + 1)B_n)^{L_n + 1} }$.

\medskip

\noindent
Using similar arguments as in the proof of Theorem 4.1 in \cite{kengne2025deep}, and considering \textbf{case 1:} $\rho > 1$; \textbf{case 2:} $1/n <\rho < 1$; \textbf{case 3:} $0< \rho < 1/n < 1$, we have for some $n_0 \in \N$ and for all $n > n_0$:
\begin{equation}\label{equa_bound_Gbetan_v2}
\E\Big[\dfrac{1}{n} \sum_{i=1}^n  G_{\beta_n}(\widehat{h}_{n,SP}, Z_i) -2J_n(\widehat{h}_{n,SP}) \Big] \leq \displaystyle \int_0^{\infty}  P\Bigg\{\dfrac{1}{n} \sum_{i=1}^n  G_{\beta_n}(\widehat{h}_{n,SP}, Z_i) -2J_n(\widehat{h}_{n,SP}) > \rho \Bigg\} d\rho \lesssim \dfrac{\big(\log n\big)^3}{n}.
\end{equation}
According to (\ref{equa_bound_func_G_by_Gbetanv4}),(\ref{equa_bound_Gbetan_v2}), we obtain

\begin{align}\label{def_A1n_v2}
   \widetilde{A}_{1, n} & \leq \E_{D_n}\Bigg[\dfrac{1}{n} \sum_{i=1}^n G_{\beta_n}(\widehat{h}_{n,SP}, Z_i) -2J_n(\widehat{h}_{n,SP}) \Bigg] + 3\dfrac{\log n}{n} \lesssim \dfrac{\log n}{n} + \dfrac{\big(\log n\big)^3}{n}  \lesssim   \dfrac{\big(\log n\big)^3}{n} .
\end{align}
Hence, according to (\ref{bound_excess_risk_v1}) and (\ref{def_A1n_v2}), we have
\begin{equation*} 
 \E[R(\widehat{h}_{n,SP}) - R(h^*) ]  \lesssim  \dfrac{\big(\log n\big)^3}{n} + 2\big(R(\widetilde{h}_{\mathcal{H}_{\sigma, n}}) - R(h^*) +J_n \big(\widetilde{ h}_{\mathcal{H}_{\sigma, n}} \big)\big).
\end{equation*}
This completes the proof of the theorem.
\qed

\subsection{Proof of Corollary \ref{equa_oracle_ineq_expo_strong_mixing} }
Let $L_n, N_n, B_n$ and $F_n = F$ satisfy the conditions in Corollary \ref{equa_oracle_ineq_expo_strong_mixing}.
Let  $h_{n}^{\dagger}, \widetilde{h}_{\mathcal{H}_{\sigma, n}} \in \mathcal{H}_{\sigma, n} \coloneqq \mathcal{H}_{\sigma}(L_n, N_n, B_n, F)$ such that  
\begin{equation*} 
R (h_{n}^{\dagger})  + J_n(h_{n}^{\dagger}) \leq \underset{h \in \mathcal{H}_{\sigma, n}} {\inf} \left[ R(h)  + J_n(h) \right] + \dfrac{1}{n} ~ \text{ and } ~ R (\widetilde{h}_{\mathcal{H}_{\sigma, n}})  + J_n(\widetilde{h}_{\mathcal{H}_{\sigma, n}}) = \underset{h \in \mathcal{H}_{\sigma, n}} {\inf} \left[ R(h)  + J_n(h) \right].
\end{equation*}
Hence, it holds that,
\begin{equation}\label{bound_approxi_err_v2}
  R \big(\widetilde{h}_{\mathcal{H}_{\sigma, n}} \big) - R(h^*) + J_n \big( \widetilde{h}_{\mathcal{H}_{\sigma, n}} \big)  \leq  R(h_{n}^{\dagger})  - R(h^*)+ J_n(h_{n}^{\dagger})   \leq \underset{h \in \mathcal{H}_{\sigma, n}} {\inf} \left[ R(h) - R(h^*)  + J_n(h) \right] + \dfrac{1}{n}.  
\end{equation}
In addition to Theorem \ref{excess_risk_bound__expo_strong_mixing_thm2}, there exists a positive constant $C_1 > 0$ and $n_0\in \N$ such that for all $n > n_0$, we have
\begin{align}\label{bound_approxi_error}
\nonumber \E[R(\widehat{h}_{n,SP}) - R(h^*) ] & \lesssim  \big(R(\widetilde{h}_{\mathcal{H}_{\sigma, n}}) - R(h^*) + J_n(\widetilde{h}_{\mathcal{H}_{\sigma, n}})\big) + \dfrac{\big(\log n\big)^3}{n}
\\
\nonumber & \leq C_1\Bigg(\big(R(\widetilde{h}_{\mathcal{H}_{\sigma, n}}) - R(h^*) + J_n(\widetilde{h}_{\mathcal{H}_{\sigma, n}})\big) + \dfrac{\big(\log n\big)^3}{n} \Bigg)
\\
 &  \leq C_1\Bigg(\underset{h \in \mathcal{H}_{\sigma, n}} {\inf} \Big[ R(h) - R(h^*)   + J_n(h) \Big] +  \dfrac{\big(\log n\big)^3}{n} \Bigg).
\end{align}
Which completes the proof  of Corollary \ref{equa_oracle_ineq_expo_strong_mixing}.
\qed

\medskip

\subsection{Proof of Corollary \ref{excess_risk_bound_class_of_Holder_function_V1} and Corollary \ref{excess_risk_bound_class_of_Holder_function_V2}}
\textbf{1.} Let $L_n, N_n, B_n, F >0$ satisfying the conditions in Corollary \ref{excess_risk_bound_class_of_Holder_function_V1} and $h^{*} \in  \mathcal{C}^{s}(\mathcal{X}, \mathcal{K}^{*})$ with $s, \mk^{*} >0$. 
 Set $S_n \asymp n^{\frac{d}{\kappa s + d} } \log n$, $\mathcal{H}_{\sigma, n} \coloneqq \mathcal{H}_{\sigma}(L_n, N_n, B_n, F_n, S_n)$ (see (\ref{DNNs_Constraint})) 
 and 
$ \widetilde{\mathcal{H}}_{\sigma, n} := \{h \in  \mathcal{H}_{\sigma,n}, ~ \|h - h^*\|_{\infty, \mx}  \leq \epsilon_n\} $.
By going along similar lines as in the proof of Corollary 4.3 in \cite{kengne2025deep} with $\epsilon_n = \dfrac{ 1}{ n^{\frac{s}{\kappa s + d}} }$ and by using the oracle inequality in Corollary \ref{equa_oracle_ineq_expo_strong_mixing}, we get
\[  \sup_{h^{*} \in \mathcal{C}^{s}(\mathcal{X}, \mathcal{K}^{*}) } \Big(  \E[R(\widehat{h}_{n,SP}) - R(h^{*})] \Big) \lesssim \dfrac{(\log n )^{\nu}}{n^{\frac{\kappa s}{\kappa s + d}} }, \]
for all $\nu > 5$, and Corollary \ref{excess_risk_bound_class_of_Holder_function_V1} follows.

\medskip

\noindent
\textbf{2.}
Let $L_n, N_n, B_n, F_n = F$ satisfying the conditions in Corollary \ref{excess_risk_bound_class_of_Holder_function_V2}.
Choose $S_n \asymp  n \phi_{n} \log n$ and set  $\mathcal{H}_{\sigma, n} \coloneqq \mathcal{H}_{\sigma}(L_n, N_n,, B_n, F_n, S_n)$.
Consider the class of composition structured functions $\mathcal{G}(q, \bold{d}, \bold{t}, \boldsymbol{\beta}, \mk^{*})$.
By using the relation $J_n(h) \leq \lambda_n S_n$, the oracle inequality in Corollary \ref{equa_oracle_ineq_expo_strong_mixing} and by going in the same way as in the proof of Corollary 4.5 in \cite{kengne2025deep}, we obtain,

\[  \underset{h^{*} \in \mathcal{G}(q, \bold{d}, \bold{t}, \boldsymbol{\beta}, \mk^{*})}{\sup} \E[\mathcal{E}_{Z_0}(\widehat{h}_{n,SP}) ] \lesssim  \big( \phi_n ^{\kappa/2} \lor \phi_n \big) \big(\log n \big) \big)^{\nu + 1}, \]
for all $\nu > 5$, and Corollary \ref{excess_risk_bound_class_of_Holder_function_V2} holds.
\qed

\end{document}